\def\eps{{\epsilon}}
\setlist{nosep}
\newcommand{\hy}{\hat{y}}
\newcommand{\ty}{\tilde{y}}
\newcommand{\bp}{\mathbf{p}}
\newcommand{\nc}{\newcommand}
\nc{\DMO}{\DeclareMathOperator}
\nc{\st}{\star}
\nc{\grad}{\nabla}
\nc\m[2]{m_{#1}(#2)}
\nc{\LabelSGD}{\texttt{LP-SGD}\xspace}
\nc{\LabelRRSGD}{\texttt{LP-RR-SGD}\xspace}
\nc{\LabelNormalSGD}{\texttt{LP-Normal-SGD\xspace}}
\nc{\MultiStage}{\texttt{Multi-Stage Training}\xspace}
\nc{\multiST}{\texttt{LP-MST}\xspace}
\nc{\oneST}{\texttt{LP-1ST}\xspace}
\nc{\twoST}{\texttt{LP-2ST}\xspace}
\nc{\threeST}{\texttt{LP-3ST}\xspace}
\nc{\fourST}{\texttt{LP-4ST}\xspace}
\nc{\RR}{\texttt{RandomizedResponse}\xspace}
\nc{\AShort}{\texttt{A}\xspace}
\nc{\BShort}{\texttt{B}\xspace}
\nc{\RShort}{\texttt{R}\xspace}
\nc{\dpsgd}{\texttt{DP-SGD}\xspace}
\nc{\RRShort}{\texttt{RR}\xspace}
\nc{\RRP}{\texttt{RRWithPrior}\xspace}
\nc{\RRtopk}{\texttt{RRTop-$k$}\xspace}
\nc{\RAPPOR}{\texttt{RAPPOR}\xspace}
\nc{\GenericLearner}{\texttt{GenericLearner}\xspace}
\nc{\BR}{\mathbb{R}}
\nc{\BM}{\mathbb{M}}
\nc{\BT}{\mathbb{T}}
\DMO{\Span}{span}
\nc{\BN}{\mathbb{N}}
\nc{\BZ}{\mathbb{Z}}
\nc{\ep}{\varepsilon}
\DMO{\height}{ht}
\renewcommand{\epsilon}{\varepsilon}
\nc{\ra}{\rightarrow}
\DMO{\Err}{err}
\DMO{\good}{good}
\DMO{\negpt}{neg-pt}
\DMO{\VV}{{V}}
\DMO{\LL}{{L}}
\nc{\err}[2]{\Err_{#1}(#2)}
\DeclareMathOperator*{\argmax}{arg\,max}
\DeclareMathOperator*{\argmin}{arg\,min}
\nc{\Rprot}{R}
\nc{\Sprot}{S}
\nc{\Aprot}{A}
\nc{\Pprot}{P}
\nc{\Pdist}{D}
\nc{\Qdist}{F}
\nc{\pdist}{d}
\nc{\qdist}{f}
\nc{\DP}{DP\xspace}
\nc{\SD}{\mathscr{D}}
\nc{\la}{\lambda}
\DMO{\KL}{KL}
\DMO{\Unif}{Unif}
\nc{\nn}{\varnothing}
\DMO{\SOA}{SOA}
\nc{\soa}[2]{\SOA_{#1}(#2)}
\nc{\soaf}[1]{\SOA_{#1}}
\DMO{\Red}{red}
\DMO{\Irred}{irred}
\nc{\Ired}{I^{\Red}}
\nc{\Iirred}{I^{\Irred}}
\DMO{\ssmp}{ssmp}
\DMO{\agg}{agg}
\DMO{\final}{final}
\nc{\pp}{p}
\nc{\PP}{P}
\nc{\QQ}{Q}
\nc{\DD}{D}
\nc{\lng}{\langle}
\nc{\rng}{\rangle}
\DMO{\obj}{Obj}
\nc{\RAP}{\RAPPOR}
\nc{\MR}{\mathcal{R}}
\nc{\MD}{\mathcal{D}}
\nc{\ML}{\mathcal{L}}
\nc{\di}{P}
\nc{\MO}{\mathcal{O}}
\nc{\MM}{\mathcal{M}}
\nc{\MZ}{\mathcal{Z}}
\nc{\MU}{\mathcal{U}}
\nc{\MP}{\mathcal{P}}
\nc{\poly}{\mathrm{poly}}
\DMO{\treesum}{TreeSum}
\DMO{\lapsum}{LapSum}
\DMO{\checksum}{CheckSum}
\nc{\MDts}{\MD_{\treesum}}
\nc{\MDls}{\MD_{\lapsum}}
\nc{\MDcs}{\MD_{\checksum}}
\nc{\MC}{\mathcal{C}}
\nc{\MT}{\mathcal{T}}
\nc{\MS}{\mathcal{S}}
\nc{\MX}{\mathcal{X}}
\nc{\MY}{\mathcal{Y}}
\nc{\MA}{\mathcal{A}}
\nc{\MB}{\mathcal{B}}
\nc{\MJ}{\mathcal{J}}
\nc{\MF}{\mathcal{F}}
\nc{\MG}{\mathcal{G}}
\nc{\MQ}{\mathcal{Q}}
\nc{\p}{\Pr}
\nc{\E}{\mathbb{E}}
\nc{\tablesize}{s}
\DMO{\Hist}{hist}
\nc{\hist}{\mathrm{hist}}
\nc{\ba}{\mathbf{a}}
\nc{\bx}{\mathbf{x}}
\nc{\bs}{\mathbf{s}}
\nc{\bv}{\mathbf{v}}
\nc{\bw}{\mathbf{w}}
\nc{\by}{\mathbf{y}}
\nc{\bz}{\mathbf{z}}
\DMO{\sr}{sr}
\DMO{\Med}{Med}
\DMO{\Ber}{Ber}
\DMO{\Bin}{Bin}
\DMO{\Had}{Had}
\nc{\ME}{\mathcal{E}}
\DMO{\View}{View}
\nc{\B}{B}
\nc{\M}{M}
\nc{\ha}{\kappa}
\DMO{\pre}{pre}
\nc{\MH}{\mathcal{H}}
\DMO{\Ldim}{Ldim}
\DMO{\Tdim}{Tdim}
\DMO{\sfat}{sfat}
\DMO{\fat}{fat}
\DMO{\vc}{VCdim}
\DMO{\FO}{FO}
\DMO{\CM}{CM}
\DMO{\scr}{scr}
\DMO{\hb}{\beta}
\nc{\MW}{\mathcal{W}}
\nc{\MN}{\mathcal{N}}
\nc{\BB}{\{0,1\}}
\nc{\bW}{\mathbf{W}}
\nc{\eell}{\ell}
\nc{\EELL}{L}
\nc{\q}{q}
\DMO{\ERM}{ERM}
\newtheorem{theorem}{Theorem}
\newtheorem{corollary}[theorem]{Corollary}
\newtheorem{proposition}[theorem]{Proposition}
\newtheorem{assumption}[theorem]{Assumption}
\newtheorem{lemma}[theorem]{Lemma}
\newtheorem{informal theorem}[theorem]{Informal Theorem}
\newtheorem{fact}[theorem]{Fact}
\newtheorem{observation}[theorem]{Observation}
\theoremstyle{definition}
\newtheorem{defn}{Definition}[section]
\newcommand{\para}[1]{\smallskip {\bf #1} \/}
\newcommand{\N}{\mathbb{N}}
\newcommand{\hw}{\hat{w}}
\newcommand{\R}{\mathbb{R}}
\newcommand{\tS}{\tilde{S}}
\newcommand{\LDP}{\texttt{LabelDP}\xspace}
\newcommand{\bD}{\mathbf{D}}
\newcommand{\SM}{Supplementary Material\xspace}
\newcommand{\papertitle}{Deep Learning with Label Differential Privacy}
\title{\papertitle}
\author{
  Badih Ghazi\\
  Google Research\\
  \texttt{badihghazi@google.com} \\
  \And 
  Noah Golowich\thanks{Part of this work was done while at Google Research.} \\
  EECS, MIT \\
  \texttt{nzg@mit.edu}
  \And 
  Ravi Kumar\\
  Google Research\\
  \texttt{ravi.k53@gmail.com}\\
  \AND
  Pasin Manurangsi\\
  Google Research\\
  \texttt{pasin@google.com}\\
  \And
  Chiyuan Zhang\\
  Google Research\\
  \texttt{chiyuan@google.com}
}
\begin{document}

\maketitle

\begin{abstract}
The Randomized Response (\RRShort) algorithm \citep{warner1965randomized} is a classical technique to improve robustness in survey aggregation, and has been widely adopted in applications with differential privacy guarantees. We propose a novel algorithm, \emph{Randomized Response with Prior} (\RRP), which can provide more accurate results while maintaining the same level of privacy guaranteed by \RRShort. We then apply \RRP to learn neural networks with \emph{label} differential privacy (\LDP), and show that when only the label needs to be protected, the model performance can be significantly improved over the previous state-of-the-art private baselines. Moreover, we study different ways to obtain priors, which when used with \RRP can  additionally improve the model performance, further reducing the accuracy gap between private and non-private models. We complement the empirical results with theoretical analysis showing that \LDP is provably easier than protecting both the inputs and labels.
\end{abstract}

\section{Introduction}

The widespread adoption of machine learning in recent years has increased the concerns about the privacy of individuals whose data is used during the model training. Differential privacy (DP)~\citep{DworkMNS06,DworkKMMN06} has emerged as a popular privacy notion that has been the basis of several practical deployments in industry~\citep{erlingsson2014rappor,CNET2014Google, greenberg2016apple, dp2017learning, ding2017collecting} and the U.S. Census~\citep{abowd2018us}.

A classical algorithm---that predates DP and was initially designed to eliminate evasive answer biases in survey aggregation---is \emph{Randomized Response} (\RRShort)~\cite{warner1965randomized}: when the input is an element from a finite alphabet, the output is equal to the input with a certain probability, and is a uniform random other element from the alphabet with the remaining probability. This simple algorithm is shown to satisfy the strong notion of \emph{local} DP~\cite{evfimievski2003limiting, kasiviswanathan2011can}, whereby the response of each user is protected, in contrast with the so-called \emph{central} DP setting where a curator has access to the raw user data, and only the output of the curator is required to be DP. We note that schemes building on \RRShort have been studied in several previous works on DP estimation (e.g., \cite{duchi2013local, kairouz2016discrete}), and have been deployed in practice \cite{CNET2014Google}.

Meanwhile, the large error incurred by \RRShort (e.g.,~\cite{chan2012optimal}) has stimulated significant research aiming to improve its accuracy, mostly by relaxing to weaker privacy models (e.g.,~\cite{erlingsson2019amplification, cheu2019distributed,acharya2020context}). In this work, we use a different approach and seek to improve \RRShort by leveraging available \emph{prior} information.
(A recent work of \citet{liu2021leveraging} also used priors to improve accuracy, but in the context of the DP \emph{multiplicative weights} algorithm, which applies in the \emph{central} DP setting.) The prior information can consist of domain-specific knowledge, (models trained on) publicly available data, or historical runs of a training algorithm. Our algorithm is presented in Section~\ref{sec:mult_stage_training_algo} (Algorithm~\ref{alg:rr-with-prior}).  At a high level, given a prior distribution $\bp$ on the alphabet, the algorithm uses $\bp$ to prune the alphabet. If the prior is reliable and even if the alphabet is only minimally pruned, the probability that the output equals the input is larger than when \RRShort is applied to the entire alphabet. On the other hand, if the prior is uniform over the entire alphabet, then our algorithm recovers the classical \RRShort. To implement the above recipe, one needs to specify how to effect the pruning using $\bp$. It turns out that the magnitude of pruning can itself vary depending on $\bp$, but we can obtain a closed-form formula for determining this.  Interestingly, by studying a suitable linear program, we show that the resulting \RRP strategy is \emph{optimal} in that among all $\epsilon$-DP algorithms, it maximizes the probability that the output equals the input when the latter is sampled from $\bp$ (Theorem~\ref{lem:rr-prior-opt}).

\subsection{Applications to Learning with Label Differential Privacy}

There have been a great number of papers over the last decade that developed DP machine learning algorithms (e.g., \citep{chaudhuri2011differentially, zhang2012functional, song2013stochastic,shokri2015privacy,Smith2018DifferentiallyPR,Smith2019DifferentiallyPR, phan2020scalable}). In the case of deep learning, the seminal work of \citet{abadi2016deep} introduced a DP training framework (\dpsgd) that was integrated into TensorFlow~\citep{tf-privacy} and PyTorch~\citep{pytorch-privacy}. Despite numerous followup works, including, e.g., \citep{papernot2016semi,papernot2018scalable,papernot2020tempered, mcmahan2017learning, yu2019differentially, nasr2020improving,9378011,tramer2020differentially}, and extensive efforts, the accuracy of models trained with \dpsgd remains significantly lower than that of models trained without DP constraints. Notably, for the widely considered CIFAR-$10$ dataset, the highest reported accuracy for DP models is $69.3\%$~\cite{tramer2020differentially}, which strikingly relies on \emph{handcrafted} visual features despite that in non-private scenarios \emph{learned} features long been shown to be superior. Even using pre-training with external (CIFAR-100) data, the best reported DP accuracy, $73\%$\footnote{For DP parameters of $\epsilon = 8$ and $\delta = 10^{-5}$, cited from \citet[][Figure~6]{abadi2016deep}. For a formal definition of DP, we refer the reader to Definition~\ref{def:dp_general}.}, is still far below the non-private baselines ($>95\%$). The performance gap becomes a roadblocker for many real-world applications to adopt DP. In this paper, we focus on a more restricted, but important, special case where the DP guarantee is only required to hold with respect to the labels, as described next.

In the \emph{label differential privacy} (\LDP) setting, the \emph{labels} are considered sensitive, and their privacy needs to be protected, while the input points are not sensitive. This notion has been studied in the PAC setting \citep{chaudhuri2011sample,beimel2013private} and for the particular case of sparse linear regression \citep{wang2019sparse}, and it captures several practical scenarios. Examples include: (i) computational advertising where the impressions are known to the Ad Tech\footnote{
Ad tech (abbreviating Advertising Technology) comprises the tools that help agencies and brands target, deliver, and analyze their digital advertising efforts; see, e.g.,
\url{blog.hubspot.com/marketing/what-is-ad-tech}.
}, and thus considered non-sensitive, while the conversions reveal user interest and are thus private (see, e.g., \citet{chrome-blog-post} and \cite{MaskedLARK21}), (ii) recommendation systems where the choices are known, e.g., to the streaming service provider, but the user ratings are considered sensitive, and (iii) user surveys and analytics where demographic information (e.g., age, gender) is non-sensitive but income
is sensitive---in fact, this was the motivating reason for Warner~\cite{warner1965randomized} to propose \RRShort many decades ago!  We present a novel \emph{multi-stage algorithm} (\multiST) for training deep neural networks with \LDP that builds on top of \RRP (see Section~\ref{sec:mult_stage_training_algo} and Algorithm~\ref{alg:lp_mst}), and we benchmark its empirical performance (Section~\ref{sec:eval}) on multiple datasets, domains, and architectures, including the following.
\begin{itemize}
    \item On CIFAR-10, we show that it achieves $20\%$ higher accuracy than \dpsgd
    \footnote{We remark that the notion of $\eps$-DP in~\cite{abadi2016deep,papernot2018scalable,papernot2016semi} is \emph{not} directly comparable to $\eps$-Label DP in our work in that they use the addition/removal notion whereas we use the substitution one. Please see the \SM for more discussion on this.}.
    \item On the more challenging CIFAR-$100$, we present the first non-trivial DP learning results.
    \item On MovieLens, which consists of user ratings of movies, we show improvements via \multiST.
\end{itemize}

In some applications, domain specific algorithms can be used to obtain priors directly without going through multi-stage training. For image classification problems, we demonstrate how priors computed from a (non-private) \emph{self-supervised learning}~\citep{chen2020simple,chen2020big,grill2020bootstrap,he2020momentum,caron2021emerging} phase on the input images can be used to achieve higher accuracy with a \LDP guarantee with extremely small privacy budgets ($\epsilon\leq 0.1$, see Section~\ref{sec:domain_specific_priors} for details).

We note that due to the requirement of \dpsgd to compute and clip \emph{per-instance gradient}, it remains technically challenging to scale to larger models or mini-batch sizes, despite numerous attempts to minigate this problem~\citep{goodfellow2015efficient,agarwal2019auto,dangel2019backpack,subramani2020enabling}. On the other hand, our formulation allows us to use state-of-the-art deep learning architectures such as ResNet~\citep{he2016deep}. We also stress that our \multiST algorithm goes beyond deep learning methods that are robust to label noise. (See~\cite{song2020learning} for a survey of the latter.)

Our empirical results suggest that protecting the privacy of labels can be significantly easier than protecting the privacy of both inputs and labels. We find further evidence to this by showing that for the special case of stochastic convex optimization (SCO), the sample complexity of algorithms privatizing the labels is much smaller than that of algorithms privatizing both labels and inputs; specifically, we achieve \emph{dimension-independent} bounds for \LDP (Section~\ref{sec:sco}). We also show that a good prior can ensure smaller population error for non-convex loss. (Details are in the Supplementary Material.)

\section{Preliminaries}
\label{sec:prelim}

For any positive integer $K$, let $[K] := \{1,\dots, K\}$. 
\emph{Randomized response} (\RRShort)~\citep{warner1965randomized} is the following: let $\epsilon\geq 0$ be a parameter and let $y\in[K]$ be the true value known to $\text{\RRShort}_\epsilon$.  When an \emph{observer} queries the value of $y$, $\text{\RRShort}_\epsilon$ responds with a random draw $\ty$ from the following probability distribution:
\begin{align}
\label{eq:rr-labeldp}
\Pr[\ty = \hat{y}] =
\begin{cases}
\frac{e^{\eps}}{e^{\eps} + K - 1} & \text{ for } \hat{y} = y, \\
\frac{1}{e^{\eps} + K - 1} & \text{ otherwise}.
\end{cases}
\end{align}
In this paper, we focus on the application of learning with label differential privacy. We recall the definition of differential privacy (DP), which is applicable to any notion of \emph{neighboring datasets}. For a textbook reference, we refer the reader to \citet{dwork2014algorithmic}. 
\begin{defn}[Differential Privacy (DP)~\cite{DworkKMMN06,DworkMNS06}]\label{def:dp_general}
Let $\epsilon, \delta \in \R_{\geq 0}$. A randomized algorithm $\AShort$ taking as input a dataset is said to be \emph{$(\epsilon, \delta)$-differentially private} ($(\epsilon, \delta)$-DP) if for any two \emph{neighboring datasets} $\bD$ and $\bD'$, and for any subset $S$ of outputs of $\AShort$, it is the case that $\Pr[\AShort(\bD) \in S] \le e^{\epsilon} \cdot \Pr[\AShort(\bD') \in S] + \delta$. If $\delta = 0$, then $\AShort$ is said to be \emph{$\epsilon$-differentially private} ($\epsilon$-DP).
\end{defn}

When applied to machine learning methods in general and deep learning in particular, DP is usually enforced on the weights of the trained model~\citep[see, e.g.,][]{chaudhuri2011differentially,kifer_private_2012,abadi2016deep}. In this work, we focus on the notion of label differential privacy.
\begin{defn}[Label Differential Privacy]\label{def:ldp_general}
Let $\epsilon, \delta \in \R_{\geq 0}$. A randomized training algorithm $\AShort$ taking as input a dataset is said to be \emph{$(\epsilon, \delta)$-label differentially private} ($(\epsilon, \delta)$-\LDP) if for any two training datasets $\bD$ and $\bD'$ that differ in the \emph{label} of a \emph{single example}, and for any subset $S$ of outputs of $\AShort$, it is the case that $\Pr[\AShort(\bD) \in S] \le e^{\epsilon} \cdot \Pr[\AShort(\bD') \in S] + \delta$.  If $\delta = 0$, then $\AShort$ is said to be \emph{$\epsilon$-label differentially private} ($\epsilon$-\LDP).
\end{defn}

All proofs skipped in the main body are given in the \SM.

\section{Randomized Response with Prior}\label{sec:mult_stage_training_algo}

In many real world applications, a prior distribution about the labels could be publicly obtained from domain knowledge and help the learning process. In particular, we consider a setting where for each (private) label $y$ in the training set, there is an associated prior $\bp = (p_1, \dots, p_K)$. The goal is to output a randomized label $\ty$ that maximizes the probability that the output is correct (or equivalently maximizes the signal-to-noise ratio), i.e., $\Pr[y = \ty]$. The privacy constraint here is that the algorithm should be $\epsilon$-DP with respect to $y$. (It need \emph{not} be private with respect to the prior $\bp$.)

We first describe our algorithm \RRP by assuming access to such priors. 

\subsection{Algorithm: \RRP}
\label{subsec:rr-with-prior}

We build our \RRP algorithm with a subroutine called \RRtopk, as shown in Algorithm~\ref{alg:rr-top-k}, which is a modification of randomized response where we only consider the set of $k$ labels $i$ with largest $p_i$. Then, if the input label $y$ belongs to this set, we use standard randomized response on this set. Otherwise, we output a label from this set uniformly at random.

\begin{algorithm}[!htp]
  \caption{\bf \RRtopk \label{alg:rr-top-k}}
  \textbf{Input:} A label $y \in [K]$ \\
  \textbf{Parameters:} $k \in [K]$, prior $\bp = (p_1, \dots, p_K)$
  \begin{enumerate}[leftmargin=14pt,rightmargin=20pt,itemsep=1pt,topsep=1.5pt]
  \item Let $Y_k$ be the set of $k$ labels with maximum prior probability (with ties broken arbritrarily).
  \item If $y \in Y_k$, then output $y$ with probability $\frac{e^{\eps}}{e^{\eps} + k - 1}$ and output $y' \in Y_k \setminus \{y\}$ with probability $\frac{1}{e^{\eps} + k - 1}$.
  \item If $y \not\in Y_k$, output an element from $Y_k$ uniformly at random.
  \end{enumerate}
 \end{algorithm}

The main idea behind \RRP is to dynamically estimate an optimal $k^*$ based on the prior $\bp$, and run \RRtopk with $k^*$. Specifically, we choose $k^*$ by maximizing $\Pr[\RRtopk(y) = y]$. It is not hard to see that this expression is exactly equal to $\frac{e^{\eps}}{e^{\eps} + k - 1} \cdot \left(\sum_{\ty \in Y_k} p_{\ty}\right)$ if $y \sim \mathbf{p}$. \RRP is presented in Algorithm~\ref{alg:rr-with-prior}.

\begin{algorithm}[!htp]
  \caption{\bf \RRP \label{alg:rr-with-prior}}
  \textbf{Input:} A label $y \in [K]$ \\
  \textbf{Parameters:} prior $\bp = (p_1, \dots, p_K)$
  \begin{enumerate}[leftmargin=14pt,rightmargin=20pt,itemsep=1pt,topsep=1.5pt]
  \item For $k \in [K]$:
  \begin{enumerate}
    \item Compute $w_k := \frac{e^{\eps}}{e^{\eps} + k - 1} \cdot \left(\sum_{\ty \in Y_k} p_{\ty}\right)$, where $Y_k$ is the set of $k$ labels with maximum prior probability (ties broken arbritrarily).
  \end{enumerate}
  \item Let $k^* = \argmax_{k \in [K]} w_k$. \label{step:compute-opt-k}
  \item Return an output of \RRtopk($y$) with $k = k^*$.
  \end{enumerate}
 \end{algorithm}
 
\subsubsection{Privacy Analysis}
It is not hard to show that \RRtopk is $\eps$-DP.

\begin{lemma} \label{lem:rr-top-k-dp}
\RRtopk is $\eps$-DP.
\end{lemma}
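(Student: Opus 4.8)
The plan is to verify the differential-privacy inequality directly from Definition~\ref{def:dp_general} by a short case analysis, exploiting the fact that the prior $\bp$ (and hence the set $Y_k$) is a fixed public parameter rather than part of the sensitive input. Since the ``dataset'' here is the single label $y \in [K]$, two neighboring inputs are simply two distinct labels $y, y' \in [K]$, so it suffices to show that for every such pair and every possible output $\hat y \in [K]$ we have $\Pr[\RRtopk(y) = \hat y] \le e^\eps \cdot \Pr[\RRtopk(y') = \hat y]$.

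First I would note that the output of \RRtopk always lies in $Y_k$. Hence if $\hat y \notin Y_k$, both sides of the desired inequality are $0$ and there is nothing to prove. The remaining case is $\hat y \in Y_k$, where I would read off from Steps~2--3 of Algorithm~\ref{alg:rr-top-k} the value of $\Pr[\RRtopk(y) = \hat y]$ as a function of the input $y$: it equals $\frac{e^\eps}{e^\eps + k - 1}$ if $y = \hat y$; it equals $\frac{1}{e^\eps + k - 1}$ if $y \in Y_k$ but $y \neq \hat y$; and it equals $\frac{1}{k}$ if $y \notin Y_k$.

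It then remains to observe that these three quantities lie within a multiplicative factor $e^\eps$ of one another. The largest of them is $\frac{e^\eps}{e^\eps + k - 1}$ and the smallest is $\frac{1}{e^\eps + k - 1}$, with ratio exactly $e^\eps$; and the middle value $\frac{1}{k}$ lies in the interval $\left[\frac{1}{e^\eps + k - 1}, \frac{e^\eps}{e^\eps + k - 1}\right]$ because $e^\eps \ge 1$ (equivalently, $e^\eps + k - 1 \ge k$ and $e^\eps k \ge e^\eps + k - 1$). Therefore the ratio of $\Pr[\RRtopk(y) = \hat y]$ to $\Pr[\RRtopk(y') = \hat y]$ is at most $e^\eps$ for any two inputs, which is precisely $\eps$-DP. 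The degenerate case $k = 1$ is even simpler, since the algorithm then outputs the unique element of $Y_1$ deterministically and is thus $0$-DP.

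The only point requiring a little care is the placement of the $\frac{1}{k}$ term relative to the other two values, which is exactly where the hypothesis $\eps \ge 0$ is used; beyond that the argument is entirely routine, so I do not anticipate any real obstacle.
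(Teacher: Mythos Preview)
Your proposal is correct and follows essentially the same approach as the paper's proof: both fix an output $\hat y \in Y_k$, enumerate the possible values of $\Pr[\RRtopk(y) = \hat y]$ depending on the position of $y$, and bound the max-to-min ratio by $e^\eps$. Your version is slightly more explicit in that you check the intermediate value $\frac{1}{k}$ lies between the two extremes, whereas the paper simply asserts where the maximum and minimum are attained; otherwise the arguments are identical.
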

 
The privacy guarantee of \RRP follows immediately from that of \RRtopk (Lemma~\ref{lem:rr-top-k-dp}) since our choice of $k$ does not depend on the label $y$:
\begin{corollary} \label{cor:rr-with-prior-dp}
\RRP is $\eps$-DP.
\end{corollary}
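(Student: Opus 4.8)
The plan is to reduce the privacy of \RRP directly to Lemma~\ref{lem:rr-top-k-dp}, exploiting the fact that the only data-dependent randomness in \RRP is the final call to \RRtopk. First I would observe that the parameter $k^*$ computed in Steps~1--2 of Algorithm~\ref{alg:rr-with-prior} is a deterministic function of the prior $\bp$ alone: each weight $w_k = \frac{e^{\eps}}{e^{\eps}+k-1}\cdot\bigl(\sum_{\ty\in Y_k} p_{\ty}\bigr)$ depends only on $\bp$ (and the fixed tie-breaking rule defining $Y_k$), so $k^* = \argmax_{k\in[K]} w_k$ does not depend on the label $y$. Consequently, for the purposes of the privacy analysis we may treat $k^*$ as a fixed parameter, and \RRP reduces to running $\RRtopk(y)$ with that fixed $k = k^*$.

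Next I would formalize this as a post-processing / composition argument. Fix two inputs $y, y' \in [K]$ (these play the role of the neighboring label-datasets: since \LDP concerns datasets differing in the label of a single example and \RRShort-type mechanisms are applied per-label, it suffices to argue the one-label mechanism is $\eps$-DP over $y$). Since $k^*$ is determined by $\bp$, which is identical for the two inputs, the output distribution of $\RRP(y)$ equals the output distribution of $\RRtopk(y)$ with parameter $k^*$, and likewise for $y'$. By Lemma~\ref{lem:rr-top-k-dp}, $\RRtopk$ with any fixed parameter $k$ is $\eps$-DP, so for every measurable set $S$ of outputs, $\Pr[\RRtopk_{k^*}(y)\in S] \le e^{\eps}\cdot \Pr[\RRtopk_{k^*}(y')\in S]$. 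Chaining the two equalities through this inequality gives $\Pr[\RRP(y)\in S] \le e^{\eps}\cdot\Pr[\RRP(y')\in S]$, which is exactly $\eps$-DP with $\delta = 0$.

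The only subtlety — and the one step worth stating carefully rather than waving at — is the claim that the distribution of $\RRP(y)$ is literally the distribution of $\RRtopk(y)$ at $k = k^*$; this uses that $k^*$ is computed without inspecting $y$, so there is no hidden dependence (e.g., no data-dependent choice of mechanism that would require advanced composition or a union over possible $k$'s). I do not expect any real obstacle here: once the $y$-independence of $k^*$ is noted, the rest is an immediate invocation of Lemma~\ref{lem:rr-top-k-dp}. I would close by remarking that this is precisely the standard principle that privacy is preserved when a parameter of a private mechanism is selected based only on public information (here, the prior $\bp$), which the paper has already flagged in the sentence preceding the corollary.
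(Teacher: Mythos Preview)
Your proposal is correct and follows the same approach as the paper: the paper's justification is precisely that the choice of $k^*$ depends only on the prior $\bp$ and not on the label $y$, so the privacy guarantee is inherited directly from Lemma~\ref{lem:rr-top-k-dp}. Your write-up is more detailed, but the underlying argument is identical.
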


For learning with a \LDP guarantee, we first use \RRP to query a randomized label for each example of the training set, and then apply a general learning algorithm that is robust to random label noise to this dataset. Note that unlike \dpsgd~\citep{abadi2016deep} that makes new queries on the gradients in every training epoch, we query the randomized label \emph{once} and reuse it in all the training epochs.

\subsection{Optimality of \RRP}
\label{sec:rrp-optimality}

In this section we will prove the optimality of \RRP. For this, we will need  additional notation. For any algorithm $\RShort$ that takes as input a label $y$ and outputs a randomized label $\ty$, we let $\obj_{\bp}(\RShort)$ denote the probability that the output label is equal to the input label $y$ when $y$ is distributed as $\bp$; i.e., $\obj_{\bp}(\RShort) = \Pr_{y \sim \bp}[\RShort(y) = y]$, where the distribution of $y \sim \bp$ is $\Pr[y = i] = p_i$ for all $i \in [K]$.

The main result of this section is that, among all $\eps$-DP algorithms, \RRP maximizes $\obj_{\bp}(\RShort)$, as stated more formally next.

\begin{theorem} \label{lem:rr-prior-opt}
Let $\bp$ be any probability distribution on $[K]$ and $\RShort$ be any $\eps$-DP algorithm that randomizes the input label given the prior $\bp$ . We have that
\begin{align*}
\obj_{\bp}(\RRP) \geq \obj_{\bp}(\RShort).
\end{align*}
\end{theorem}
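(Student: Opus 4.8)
The plan is to set up the optimization over $\epsilon$-DP mechanisms $\RShort\colon [K]\to[K]$ as a linear program in the entries of its transition matrix, and then argue that \RRP realizes an optimal vertex. Concretely, write $q_{ij} = \Pr[\RShort(i) = j]$ for $i,j\in[K]$. The quantity to maximize is $\obj_{\bp}(\RShort) = \sum_{i\in[K]} p_i\, q_{ii}$. The constraints are: (i) each row is a distribution, $\sum_j q_{ij} = 1$ and $q_{ij}\ge 0$; and (ii) $\epsilon$-DP with respect to changing the input label, which here means for every pair $i,i'$ and every output $j$, $q_{ij}\le e^{\epsilon} q_{i'j}$. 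This is a (finite-dimensional) linear program, so it suffices to exhibit a feasible dual certificate, or equivalently to guess the optimal primal solution and verify optimality via complementary slackness / an explicit dual.

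First I would simplify using symmetry and a smoothing/averaging argument. Since the objective only sees the diagonal entries $q_{ii}$, and since permuting labels outside the "relevant" set cannot hurt, one can argue that an optimal $\RShort$ may be taken to (a) be supported, on every input, on a common set $Y$ of output labels, and (b) treat the $y\in Y$ symmetrically in the sense that $q_{ii}$ takes a common value $a$ for $i\in Y$, $q_{ij}$ takes a common value $b$ for $i\ne j$ both in $Y$, and inputs $i\notin Y$ map into $Y$ somehow. The DP constraint forces $a\le e^{\epsilon} b$, and a short exchange argument (moving mass from off-diagonal to diagonal until this is tight) shows we may assume $a = e^{\epsilon} b$; combined with $a + (|Y|-1)b = 1$ this pins down $a = \tfrac{e^{\epsilon}}{e^{\epsilon}+|Y|-1}$, exactly the \RRtopk weight. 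Then the objective becomes $\tfrac{e^{\epsilon}}{e^{\epsilon}+|Y|-1}\sum_{i\in Y} p_i$ (the $i\notin Y$ inputs contribute $0$ to the diagonal), and to maximize over the choice of $Y$ one should take $Y$ to be the $k = |Y|$ labels of largest prior mass — which is precisely $Y_k$ — and then optimize over $k$, which is exactly Step~\ref{step:compute-opt-k} of \RRP. This would give $\obj_{\bp}(\RRP) = \max_k w_k \ge \obj_{\bp}(\RShort)$.

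The main obstacle is making the reduction to the symmetric, "support-$Y$" form fully rigorous: a priori an $\epsilon$-DP mechanism can have complicated, asymmetric behavior with all $q_{ij}>0$, and one must show none of this can beat the structured solution. I expect the cleanest route is not the ad hoc exchange argument but an explicit dual certificate: introduce multipliers $\alpha_i$ for the row-sum equalities and $\beta_{i,i',j}\ge 0$ for the DP inequalities $q_{ij} - e^{\epsilon} q_{i'j}\le 0$, write the dual constraint $\alpha_i + \sum_{i',j}(\text{coeff})\,\beta \ge p_i\cdot\mathbf{1}[i=j\text{-diagonal term}]$, and plug in the candidate optimal $k^*$ to read off a feasible dual with matching value. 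Verifying complementary slackness then reduces to checking that (a) the DP constraints are tight exactly on the pairs inside $Y_{k^*}$, and (b) the dual feasibility inequality for every "row" $i$ is satisfied, where the choice $k^* = \argmax_k w_k$ is what guarantees the inequalities for rows $i\notin Y_{k^*}$ do not bind. I would also handle ties in the prior (the "ties broken arbitrarily" clause) by noting the LP value is unchanged under any such choice. A final remark: the argument only uses that $\RShort$'s privacy is with respect to the label, not the prior, which is consistent with the setup, and it shows the optimum is attained, so the inequality in the theorem is in fact an equality for the \RRP choice of $k^*$.
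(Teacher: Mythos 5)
Your LP setup is exactly the paper's starting point (maximize $\sum_i p_i q_{ii}$ over row-stochastic, nonnegative $q$ with $q_{ij}\le e^{\epsilon}q_{i'j}$), and you have correctly guessed the optimal value $\max_k w_k$ and the structure of the optimizer. But the heart of the theorem --- showing that no asymmetric feasible $q$ can beat this value --- is precisely the step you leave unexecuted, and both routes you sketch for it have problems. The symmetrization argument is not valid as stated: the objective $\sum_i p_i q_{ii}$ is \emph{not} permutation-invariant (the $p_i$ differ), so averaging a feasible mechanism over relabelings changes the objective and cannot be used to conclude that an optimum is symmetric; and even granting a common support $Y$ (which, incidentally, \emph{is} automatic for pure DP, since $q_{i'j}=0$ forces $q_{ij}=0$ for all $i$), there is no a priori reason an optimal mechanism should take a common diagonal value $a$ on $Y$ --- one would expect it to favor labels with large $p_i$, and ruling that out is exactly what needs proof. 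The dual-certificate route would work, but you do not construct the multipliers or verify feasibility, and your claimed complementary-slackness conditions (e.g.\ that the DP constraints are tight exactly inside $Y_{k^*}$) are asserted, not checked.

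The paper closes this gap without symmetry or duality: for each $y$, it combines the row-sum constraint for row $y$ with the DP constraints $q_{\ty|\ty}\le e^{\epsilon}q_{\ty|y}$ to obtain the implied inequality $q_{y|y}+e^{-\epsilon}\sum_{\ty\ne y}q_{\ty|\ty}\le 1$, which involves only the diagonal variables. This yields a relaxed $K$-dimensional LP whose vertices can be enumerated directly: each vertex corresponds to a subset $Y$ with $q_{y|y}=\frac{e^{\epsilon}}{e^{\epsilon}+|Y|-1}$ on $Y$ and $0$ off $Y$, giving objective $\frac{e^{\epsilon}}{e^{\epsilon}+|Y|-1}\sum_{y\in Y}p_y$, and maximizing over $Y$ gives $\max_k w_k=\obj_{\bp}(\RRP)$. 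If you want to complete your proof, deriving this diagonal-only relaxation is the missing ingredient; as written, your argument assumes the structured form of the optimum rather than proving it.
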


Before we proceed to the proof, we remark that our proof employs a linear program (LP) to characterize the optimal mechanisms; a generic form of such LPs has been used before in~\cite{HardtT10,GhoshRS12}. However, these works focus on different problems (linear queries) and their results do not apply here.

\begin{proof}[Proof of \Cref{lem:rr-prior-opt}]
Consider any $\eps$-DP algorithm $\RShort$, and let  $q_{\ty|y}$ denote $\Pr[\RShort(y) = \ty]$. Observe that $\obj_{\bp}(\RShort) = \sum_{y \in [k]} p_y \cdot q_{y \mid y}$.

Since $q_{\cdot | y}$ is a probability distribution, we must have that
\begin{align*}
\sum_{\ty \in [K]} q_{\ty | y} = 1, \forall y \in [K], \mbox{ and }
q_{\ty | y} \geq 0, \forall \ty, y \in [K].
\end{align*}
Finally, the $\eps$-DP guarantee of $\RShort$ implies that
\begin{align*}
q_{\ty | y} \leq e^{\eps} \cdot q_{\ty | y'} & &\forall \ty, y, y' \in [K].
\end{align*}
Combining the above, $\obj_{\bp}(\RShort)$ is upper-bounded by the optimum of the following linear program (LP), which we refer to as LP1:
\begin{align}
&\max &\sum_{y \in [k]} p_y \cdot q_{y|y} & & \nonumber \\
&\text{s.t.} &q_{\ty | y} \leq e^{\eps} \cdot q_{\ty | y'} & &\forall \ty, y, y' \in [K], \label{eq:dp-constraint-1} \\
& &\sum_{\ty \in [K]} q_{\ty | y} = 1 & &\forall y \in [K], \label{eq:probability-mass-1} \\
& &q_{\ty | y} \geq 0 & & \forall \ty, y \in [K]. \nonumber
\end{align}

Notice that constraints~\eqref{eq:dp-constraint-1} and~\eqref{eq:probability-mass-1} together imply that:
\begin{align*}
q_{y | y} + e^{-\eps} \cdot \sum_{\ty \in [K] \setminus \{y\}} q_{\ty | \ty} \leq 1 & &\forall y \in [K].
\end{align*}
In other words, the optimum of LP1 is at most the optimum of the following LP that we call LP2:
\begin{align}
&\max &\sum_{y \in [k]} p_y \cdot q_{y|y} & & \nonumber \\
&\text{s.t.} &q_{y | y} + e^{-\eps} \cdot \sum_{\ty \in [K] \setminus \{y\}} q_{\ty | \ty} \leq 1 & &\forall y \in [K], \label{eq:probability-mass-2} \\
& &q_{y | y} \geq 0 & & \forall y \in [K]. \label{eq:non-negative}
\end{align}

An optimal solution to LP2 must be a vertex (aka extreme point) of the polytope defined by~\eqref{eq:probability-mass-2} and~\eqref{eq:non-negative}. Recall that an extreme point of a $K$-dimensional polytope must satisfy $K$ independent constraints with equality. In our case, this means that one of the following occurs:
\begin{itemize}[nosep]
\item Inequality \eqref{eq:non-negative} is satisfied with equality for all $y \in [K]$ resulting in the all-zero solution (whose objective is zero), or,
\item For some non-empty subset $Y \subseteq [K]$, inequality \eqref{eq:probability-mass-2} is satisfied with equality for all $y \in Y$, and inequality ~\eqref{eq:non-negative} is satisfied with equality for all $y \in [K] \setminus Y$. This results in 
\begin{align*}
q_{y|y} =
\begin{cases}
\frac{e^{\eps}}{e^{\eps} + |Y| - 1} & \text{ if } y \in Y, \\
0 & \text{ if } y \notin Y.
\end{cases}
\end{align*}
This yields an objective value of $\frac{e^{\eps}}{e^{\eps} + |Y| - 1} \cdot \sum_{y \in Y} p_y$.
\end{itemize}
In conclusion, we have that
\begin{align*}
\obj_{\bp}(\RShort) &\leq \max_{\emptyset \ne Y \subseteq [K]} \frac{e^{\eps}}{e^{\eps} + |Y| - 1} \cdot \sum_{y \in Y} p_y \\
&= \max_{k \in [K]} \frac{e^{\eps}}{e^{\eps} + k - 1} \cdot \max_{Y \subseteq [K], |Y| = k} \sum_{y \in Y} p_y \\
&= \max_{k \in [K]} \frac{e^{\eps}}{e^{\eps} + k - 1} \cdot \sum_{y \in Y_k} p_y 
\enspace = \enspace \max_{k \in [K]} w_k,
\end{align*}
where the last two equalities follow from our definitions of $Y_k$ and $w_k$. Notice that $\obj_{\bp}(\RRP) = \max_{k \in [K]} w_k$. Thus, we get that $\obj_{\bp}(\RRP) \geq \obj_{\bp}(\RShort)$ as desired.
\end{proof}

\section{Application of $\RRP$: Multi-Stage Training}
\label{subsec:multi-stage}

Our \RRP algorithm requires publicly available priors, which could usually be obtained from domain specific knowledge. In this section, we describe a training framework that bootstraps from a uniform prior, and progressively learns refined priors via multi-stage training. This general framework can be applied to arbitrary domains even when no public prior distributions are available.

Specifically, we assume that we have a training algorithm $\AShort$ that outputs a probabilistic classifier which, on a given unlabeled sample $\bx$, can assign a probability $p_y$ to each class $y \in [K]$.  We partition our dataset into subsets $S^{(1)}, \dots, S^{(T)}$, and we start with a trivial model $M^{(0)}$ that outputs equal probabilities for all classes. At each stage $t \in [T]$, we use the most recent model $M^{(t - 1)}$ to assign the probabilities $(p_1, \dots, p_K)$ for each sample $\bx_i$ from $S^{(t)}$. Applying \RRP\ with this prior on the true label $y_i$, we get a randomized label $\ty_i$ for $\bx_i$. We then use all the samples with randomized labels obtained so far to train the model $M^{(t)}$.

The full description of our \multiST (Label Privacy Multi-Stage Training) method is presented in Algorithm~\ref{alg:lp_mst}. We remark here that the partition $S^{(1)}, \dots, S^{(T)}$ can be arbitrarily chosen, as long as it does not depend on the labels $y_1, \dots, y_n$. We also stress that the training algorithm $\AShort$ need \emph{not} be private.  We use \oneST to denote our algorithm with one stage, \twoST to denote our algorithm with two stages, and so on. We also note that \oneST is equivalent to using vanilla \RRShort.
The $t$th stage of a multi-stage algorithm is denoted \emph{stage-$t$}. 

\begin{algorithm}[!htp]
\small
  \caption{\bf \MultiStage (\multiST) \label{alg:lp_mst}}
  \textbf{Input:} Dataset $S = \{(\bx_1, y_1), \ldots, (\bx_n, y_n)\}$ \\
  \textbf{Parameters:} Number $T$ of stages, training algorithm $\AShort$
  \begin{enumerate}[leftmargin=14pt,rightmargin=20pt,itemsep=1pt,topsep=1.5pt]
  \item Partition $S$ into $S^{(1)}, \dots, S^{(T)}$
  \item Let $M^{(0)}$ be the trivial model that always assigns equal probability to each class.
  \item For $t = 1$ to $T$:
    \begin{enumerate}
    \item Let $\tS^{(t)} = \emptyset$.
    \item For each $(\bx_i, y_i) \in S^{(t)}$:
    \begin{enumerate}
        \item Let $\bp = (p_1, \dots, p_K)$ be the probabilities predicted by $M^{(t)}$ on $\bx_i$.
        \item Let $\ty_i = \RRP_{\bp}(y_i)$. \label{step:rr-prior}
        \item Add $(\bx_i, \ty_i)$ to $\tS^{(t)}$.
    \end{enumerate}
    \item Let $M^{(t)}$ be the model resulting from training on $\tS^{(1)}\cup \cdots\cup \tS^{(t)}$ using $\AShort$.
    \end{enumerate}
  \item Output $M^{(T)}$. 
  \end{enumerate}
 \end{algorithm}

The privacy guarantee of \multiST is given by the following:
 
\begin{observation} \label{obs:multist-dp}
For any $\eps > 0$, if \RRP is $\eps$-DP, then \multiST is $\eps$-\LDP.
\end{observation}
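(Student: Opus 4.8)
The plan is to observe that the only way \multiST reads the sensitive labels is through the $n$ invocations of \RRP in Step~\ref{step:rr-prior}, that each label $y_i$ is read by exactly one such invocation, and that the prior $\bp$ fed to the invocation reading $y_i$ never depends on $y_i$ itself (it equals $M^{(t-1)}(\bx_i)$, and $M^{(t-1)}$ is trained only on inputs and \emph{already-randomized} labels). Granting this, the output $M^{(T)}$ is a post-processing of the tuple $(\ty_1,\dots,\ty_n)$ for fixed inputs and fixed (label-independent) partition, so by the post-processing property of DP it suffices to show that the randomization map $(y_1,\dots,y_n)\mapsto(\ty_1,\dots,\ty_n)$ is $\eps$-DP with respect to changing one coordinate.

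First I would fix neighboring datasets $\bD,\bD'$ that differ only in the label $y_{i^*}$ of one example (so the inputs $\bx_1,\dots,\bx_n$ and the partition $S^{(1)},\dots,S^{(T)}$ are common), and condition on an arbitrary realization $r$ of all internal coins used by $\AShort$ across the $T$ stages; since $r$ has the same, label-independent distribution under $\bD$ and $\bD'$, it is enough to bound the conditional likelihood ratio for each fixed $r$ and then average. Next I would order the examples so that every stage-$t$ example precedes every stage-$t'$ example whenever $t<t'$, and record the key \emph{causal} fact: given $r$, the model $M^{(t-1)}$, and hence the prior $\bp_i=M^{(t-1)}(\bx_i)$ used for an example $i$ in stage $t$, is a deterministic function of $\{(\bx_j,\ty_j): j \text{ lies in a stage} < t\}$ only, i.e.\ of randomized labels appearing strictly earlier in the ordering, and not of $y_i$ nor of any $\ty_j$ from stage $t$ or later. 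By the chain rule, for any target outcome $(\ty_1,\dots,\ty_n)=(z_1,\dots,z_n)$ the conditional probability then factorizes (over the examples, in this ordering) into terms $\Pr[\RRP_{\bp_i}(y_i)=z_i]$, where each $\bp_i$ is a function of $r$, the inputs, and the $z_j$'s of strictly earlier stages.

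Now $\bD$ and $\bD'$ agree on every label except $y_{i^*}$, and the prior $\bp_{i^*}$ is identical under both (it does not depend on $y_{i^*}$), so in the two product expansions all factors coincide except the single factor for example $i^*$, and the ratio of the conditional probabilities equals $\Pr[\RRP_{\bp_{i^*}}(y_{i^*})=z_{i^*}]/\Pr[\RRP_{\bp_{i^*}}(y'_{i^*})=z_{i^*}]$, which is at most $e^{\eps}$ by Corollary~\ref{cor:rr-with-prior-dp} (as $\RRP$ is $\eps$-DP with $\delta=0$, either both probabilities vanish or their ratio is bounded by $e^{\eps}$). Averaging over $r$ gives $\Pr_{\bD}[(\ty_1,\dots,\ty_n)=\bz]\le e^{\eps}\Pr_{\bD'}[(\ty_1,\dots,\ty_n)=\bz]$ for every $\bz$ in the finite label space, hence $\Pr_{\bD}[(\ty_1,\dots,\ty_n)\in S]\le e^{\eps}\Pr_{\bD'}[(\ty_1,\dots,\ty_n)\in S]$ for every event $S$, and post-processing concludes that $M^{(T)}$ is $\eps$-\LDP.

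The main obstacle I anticipate is making the adaptivity/causality structure precise enough that the factorization is rigorous — specifically, pinning down that conditioning on $r$ turns every $\bp_i$ into a function of strictly-earlier randomized labels alone, so that exactly \emph{one} factor changes between $\bD$ and $\bD'$; this is precisely what prevents the argument from degrading to an $n$-fold composition bound of $n\eps$ and instead yields a single application of the $\eps$-DP guarantee of \RRP. A minor technical point is the $\delta=0$ edge case where $\RRP$ places zero mass on labels outside $Y_{k^*}$, which is handled by noting that $\eps$-DP with $\delta=0$ forbids a vanishing denominator unless the numerator vanishes too.
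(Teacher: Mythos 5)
Your proposal is correct and takes essentially the same route as the paper: both arguments exploit the fact that each label is randomized exactly once with a prior that does not depend on it, factorize the joint distribution of the randomized labels (the paper does this by directly expanding the joint law of all models and labels, you by conditioning on the trainer's coins and ordering examples by stage), observe that exactly one factor differs between neighboring datasets, and bound that single factor's ratio by $e^{\eps}$ using the privacy of \RRP. Your explicit treatment of post-processing and the $\delta=0$ zero-mass edge case are minor elaborations of the same argument.
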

\begin{proof}We will in fact prove a stronger statement that the algorithm is $\eps$-DP even when we output all the $T$ models $M^{(1)}, \dots, M^{(T)}$ together with all the randomized labels $\ty_1, \dots, \ty_n$. For any possible output models $m^{(1)}, \dots, m^{(T)}$ and output labels $z_1, \dots, z_n$, we have
\begin{align*}
&\Pr[M^{(1)} = m^{(1)}, \dots, M^{(T)} = m^{(T)}, \ty_1 = z_1, \dots, \ty_n = z_n] \\ 
&= \prod_{t=1}^T \left(\Pr\left[M^{(t)} = m^{(t)} \middle\vert \bigwedge_{i \in S^{(1)} \cup \cdots \cup S^{(t)}} \ty_i = z_i\right] \cdot \prod_{i \in S^{(t)}} \Pr\left[\ty_i = z_i \middle\vert M^{(t - 1)} = m^{(t - 1)} \right] \right).
\end{align*}

Consider any two datasets $\bD, \bD'$ that differ on a single user's label; suppose this user is $j$ and that the user belongs to partition $\ell \in [T]$. Then, the above expression for $\bD$ and that for $\bD'$ are the same in all but one term: $\Pr\left[\ty_j = z_j \middle\vert M^{(\ell - 1)} = m^{(\ell - 1)} \right]$, which is the probability that $\RRP_{m^{(\ell - 1)}(x_i)}$ outputs $z_i$. Since $\RRP$ is $\eps$-DP, we can conclude that the ratio between the two probabilities is at most $e^{\eps}$ as desired. 
\end{proof}

We stress that this observation holds because each sensitive label $y_i$ is only used \emph{once} in Line~\ref{step:rr-prior} of Algorithm~\ref{alg:lp_mst}, as the dataset $S$ is partitioned at the beginning of the algorithm. As a result, since each stage is $\eps$-\LDP, the entire algorithm is also $\eps$-\LDP. This is known as (an adaptive version of a) \emph{parallel composition}~\cite{McSherry10}. 

Finally, we point out that the running time of our \RRP algorithm is quasi-linear in $K$ (the time needed to sort the prior). This is essentially optimal within multistage training, since $O(K)$ time will be required to write down the prior after each stage. Moreover, for reasonable values of $K$, the running time will be dominated by back-propagation for gradient estimation. Moreover, the focus of the current work is on small to modest label spaces (i.e., values of $K$).

\section{Empirical Evaluation}
\label{sec:eval}

We evaluate \RRP on standard benchmark datasets that have been widely used in previous works on private machine learning. Specifically, in the first part, we study our general multi-stage training algorithm that boostraps from a uniform prior. We evaluate it on image classification and collaborative filtering tasks. In the second part, we focus on image classification only and use domain-specific techniques to obtain priors for \RRP. We use modern neural network architectures (e.g., ResNets~\citep{he2016deep}) and the mixup~\citep{zhang2017mixup} regularization for learning with noisy labels. Please see the \SM for full details on the datasets and the experimental setup.

\subsection{Evaluation with Multi-Stage Training}

CIFAR-10~\citep{cifar} is a 10-class image classification benchmark dataset. We evaluate our algorithm and compare it to previously reported DP baselines in Table~\ref{tab:cifar10}. Due to scalability issues, previous DP algorithms could only use simplified architectures with non-private accuracy significantly below the state-of-the-art. Moreover, even when compared to those weaker non-private baselines, a large performance drop is observed in the private models. In contrast, we use ResNet18 with 95\% non-private accuracy. Overall, our algorithms improve the previous state-of-the-art by a margin of 20\% across all $\epsilon$'s. \citet{abadi2016deep} treated CIFAR-100 as public data and use it to pre-train a representation to boost the performance of \dpsgd. We also observe performance improvements with CIFAR-100 pre-training (Table~\ref{tab:cifar10}, bottom 2 rows). But even without pre-training, our results are significantly better than \dpsgd even \emph{with} pre-training.

\begin{table*}
    \centering    \caption{Test accuracy (\%) on CIFAR-10. The baseline performances taken from previously published results
    correspond to $(\epsilon,\delta)$-DP with $\delta=10^{-5}$. The star$^\star$ indicates the use of CIFAR-$100$ pre-trained representations.}\vskip2pt
    \label{tab:cifar10}
    \begin{tabularx}{\linewidth}{Xccccccc}
    \toprule
    Algorithm & $\epsilon=1$ & $\epsilon=2$ & $\epsilon=3$ & $\epsilon=4$ & $\epsilon=6$ & $\epsilon=8$ & $\epsilon=\infty$ \\
    \midrule
    \dpsgd w/ pre-train$^\star${~\citep{abadi2016deep}}  & & 67 & & 70 & & 73 & 80 \\
    \dpsgd{~\citep{papernot2020tempered}} & & & & & \multicolumn{2}{r}{61.6{\scriptsize ($\epsilon$=7.53)}} & 76.6 \\
    Tempered Sigmoid{~\citep{papernot2020tempered}} & & & & & \multicolumn{2}{r}{66.2{\scriptsize ($\epsilon$=7.53)}} & \\
    \citet{yu2019differentially} & & & & & \multicolumn{2}{r}{44.3{\scriptsize($\epsilon$=6.78)}\phantom{X}} &\\
    \citet{nasr2020improving} & & & 55 & & & \\
    \citet{9378011} & & & & & & 53 & \\
    ScatterNet+CNN~\citep{tramer2020differentially} & & & 69.3 & & & \\
    \midrule
    \oneST & 59.96 & 82.38 & 89.89 & 92.58 & 93.58 & 94.70 & 94.96 \\
    \twoST & 63.67 & 86.05 & 92.19 & 93.37 & 94.26 & 94.52 & - \\
    \midrule
    \oneST w/ pre-train$^\star$ & 67.64 & 83.99 & 90.24 & 92.83 & 94.02 & 94.96 & 95.25 \\
    \twoST w/ pre-train$^\star$ & 70.16 & 87.22 & 92.12 & 93.53 & 94.41 & 94.59 & - \\
    \bottomrule
    \end{tabularx}
	\vskip-6pt
\end{table*}

\begin{table}
    \centering
    \caption{Experiments on CIFAR-100. The non-private baseline ($\epsilon=\infty$) is 76.38\% test accuracy.}\vskip2pt
    \label{tab:cifar100}
    \begin{tabular}{lccccc}
    \toprule
    \hspace{-5pt}Algorithm & $\epsilon=3$ & $\epsilon=4$ & $\epsilon=5$ & $\epsilon=6$ & $\epsilon=8$ \\
    \midrule
    \hspace{-5pt}{\oneST}\hspace{-8pt} & 20.96 & 46.28 & 61.38 & 68.34 & 73.59 \\
    \hspace{-5pt}{\twoST}\hspace{-8pt} & 28.74 & 50.15 & 63.51 & 70.58 & 74.14 \\
    \bottomrule
    \end{tabular}\vskip-5pt
\end{table}

In Table~\ref{tab:cifar100} we also show results on CIFAR-100, which is a more challenging variant with 10$\times$ more classes. To the best of our knowledge, these are the first non-trivial reported results on CIFAR-100 for DP learning. For $\epsilon=8$, our algorithm is only 2\% below the non-private baseline.

In addition, we also evaluate on MovieLens-1M~\citep{MovieLens}, which contains $1$ million anonymous ratings of approximately $3,900$ movies, made by 6,040 MovieLens users. Following~\citep{gaussiandp}, we randomly split the data into $80\%$ train and $20\%$ test, and show the test Root Mean Square Error (RMSE) in Table~\ref{tab:movielens}.

\begin{table}
    \centering
    \caption{Experiments on MovieLens-1M. The numbers show the test RMSE.}
    \label{tab:movielens}
    \begin{tabular}{lccccc!{\hspace{25pt}}c}
    \toprule
    Algorithm & $\epsilon=1$ & $\epsilon=2$ & $\epsilon=3$ & $\epsilon=4$ & $\epsilon=8$ & $\epsilon=\infty$ \\
    \midrule
    \textsf{LP-1ST} & 1.122 & 0.981 & 0.902 & 0.877 & 0.867 & 0.868 \\
    \textsf{LP-2ST} & 1.034 & 0.928 & 0.891 & 0.874 & 0.865 & \\
    \multicolumn{1}{l}{Gaussian DP~\cite{gaussiandp}} & & & \multicolumn{4}{r}{0.915 ({\scriptsize $\epsilon \geq 10$})\phantom{Xi}} \\
    \bottomrule
    \end{tabular}
\end{table}

Results on MNIST~\citep{mnist}, Fashion MNIST~\citep{fashionmnist}, and KMNIST~\citep{kmnist}, and comparison to more baselines can be found in the \SM. In all the datasets we evaluated, our algorithms not only significantly outperform the previous methods, but also 
greatly shrink the performance gap between private and non-private models. The latter is critical for applications of deep learning systems in real-world tasks with privacy concerns.

\para{Beyond Two Stages.}
In Figure~\ref{fig:ssl-results}(a), we report results on \multiST with $T>2$. For the cases we tried, we consistently observe 1--2\% improvements on test accuracy when going from \twoST to \threeST.  In our preliminary experiments, going beyond $T > 4$ stages leads to diminishing returns on some datasets.

\subsection{Evaluation with Domain-Specific Priors}\label{sec:domain_specific_priors}

The multi-stage training framework evaluated in the previous section is a general domain-agnostic algorithm that bootstraps itself from uniform priors. In some cases, domain-specific priors can be obtained to further improve the learning performance. In this section, we focus on image classification applications, where new advances in self-supervised learning (SSL)~\citep{chen2020simple,chen2020big,grill2020bootstrap,he2020momentum,caron2021emerging} show that high-quality image representations could be learned on large image datasets without using the class labels. In the setting of \LDP, the unlabeled images are considered public data, so we design an algorithm to use SSL to obtain priors, which is then fed to \RRP for discriminative learning.

Specifically, we partition the training examples into groups by clustering using their representations extracted from SSL models. We then query a histogram of labels for each group via discrete Laplace mechanism (aka Geometric Mechanism)~\cite{GhoshRS12}. If the groups are largely homogeneous, consisting of mostly examples from the same class, then we can make the histogram queries with minimum privacy budget. The queried histograms are used as label priors for all the points in the group. Figure~\ref{fig:ssl-results}(b) shows the results on two different SSL representations: BYOL~\citep{grill2020bootstrap}, trained on unlabeled CIFAR-10 images and DINO~\citep{caron2021emerging}, trained on ImageNet~\citep{deng2009imagenet} images. Comparing to the baseline, the SSL-based priors significantly improves the model performance with small privacy budgets. Note that since the SSL priors are not \emph{true} priors, with large privacy budget ($\epsilon=8$), it actually underperforms the uniform prior. But in most real world applications, small $\epsilon$'s are generally more useful.

\begin{figure}
    \centering
    \begin{overpic}[width=.49\linewidth]{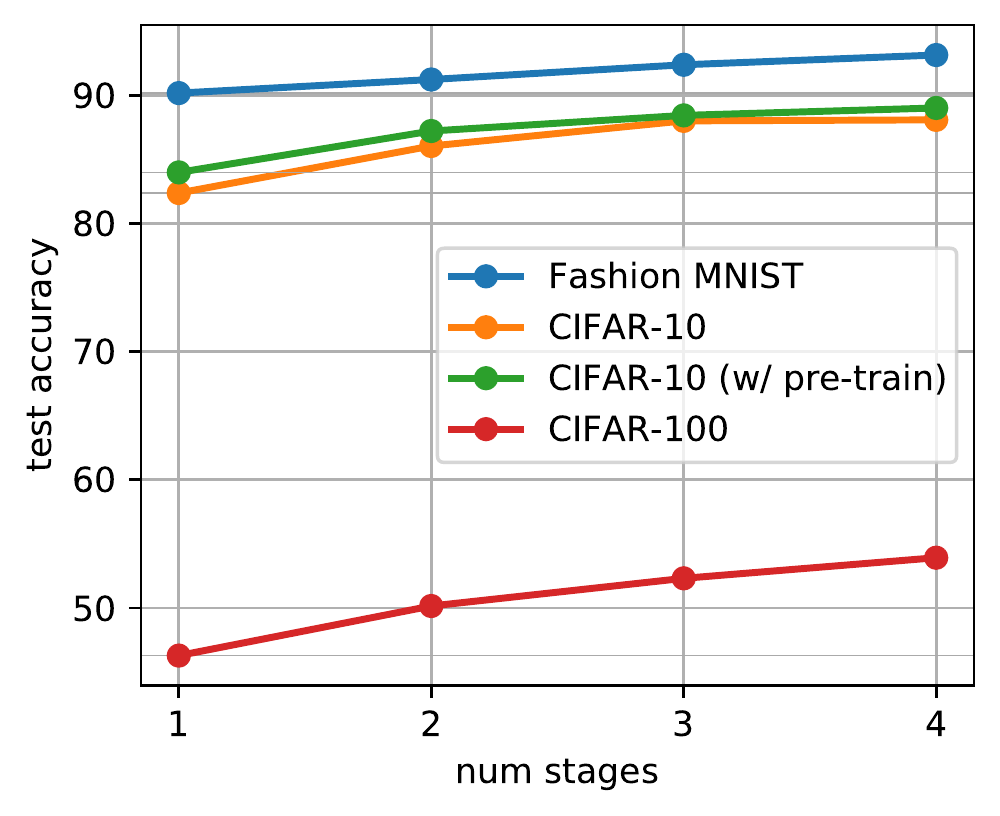}
    \put(8,1){\textbf{(a)}}
    \end{overpic}
    \begin{overpic}[width=.49\linewidth]{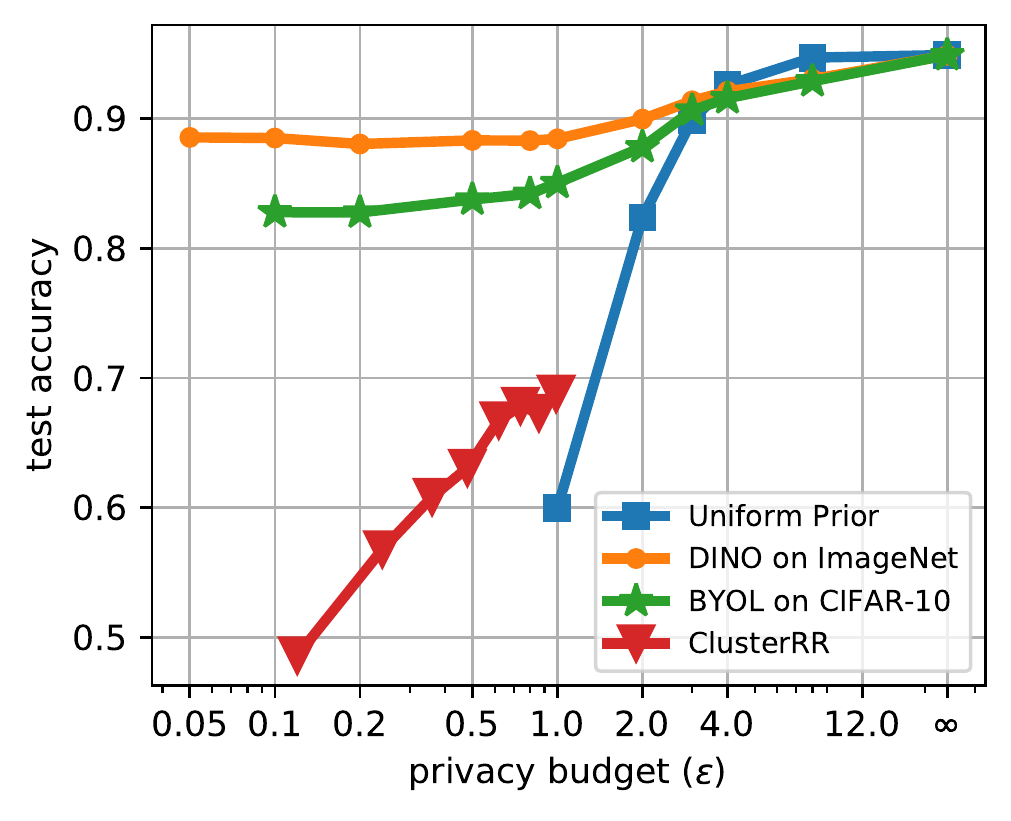}
    \put(8,1){\textbf{(b)}}
    \end{overpic}
    \caption{(a) Test accuracy (\%) on various datasets with \multiST for $T > 2$. The curve ``CIFAR-10 w/ pre-train'' is using CIFAR-100 as public data to pre-train the model. (b) \RRP with priors obtained from histogram query based on clustering in various SSL representations. We also plot recent results (ClusterRR) from \citet{esfandiari2021label}, which is a clustering based \LDP algorithm.}
    \label{fig:ssl-results}
\end{figure}

\section{Theoretical Analysis}
\label{sec:sco}

Previous works have shown that \LDP can be provably easier than DP in certain settings; specifically, in the PAC learning setting, Beimel et al.~\cite{beimel2013private} proved that finite VC dimension implies learnability by \LDP algorithms, whereas it is known that this is not sufficient for DP algorithms~\cite{AlonLMM19}.

We extend the theoretical understanding of this phenomenon to the stochastic convex optimization (SCO) setting. Specifically, we show that, by applying \RRShort on the labels and running SGD on top of the resulting noisy dataset with an appropriate debiasing of the noise, one can arrive at the following \emph{dimension-independent} excess population loss.

\begin{theorem}[Informal]
  \label{prop:labelsgd-informal}
  For any $\ep \in (0,1)$, there is an $\ep$-\LDP algorithm for stochastic convex optimization with excess population loss $\tilde{O}\left(DL \cdot \frac{K}{\ep \sqrt{n}}\right)$ where $D$ denotes the diameter of the parameter space and $L$ denotes the Lipschitz constant of the loss function.
\end{theorem}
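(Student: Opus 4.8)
The plan is to privatize the labels \emph{once} via $\epsilon$-randomized response and then run a one-pass (online) projected SGD on a \emph{debiased} stochastic gradient built from the known RR channel. Privacy will be essentially free: by~\eqref{eq:rr-labeldp} the likelihood ratio between any two true labels for any fixed noisy output is at most $e^\epsilon$, so applying $\text{\RRShort}_\epsilon$ to a single label is $\epsilon$-DP; since each sensitive label $y_i$ is passed to an independent copy of $\text{\RRShort}_\epsilon$ exactly once and everything afterwards is post-processing of the noisy labels $\tilde{y}_1,\dots,\tilde{y}_n$, the whole algorithm is $\epsilon$-\LDP in the sense of Definition~\ref{def:ldp_general}. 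Hence the real content is the utility bound.

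For utility I would first build an unbiased gradient estimator. Write $g_{\mathbf{w}}(\mathbf{x},y):=\nabla_{\mathbf{w}}\ell(\mathbf{w};(\mathbf{x},y))$, $p:=\tfrac{e^\epsilon}{e^\epsilon+K-1}$, $q:=\tfrac{1}{e^\epsilon+K-1}$; the RR transition matrix is $P=(p-q)I+qJ$ with $J$ the all-ones matrix, which (for $\epsilon>0$) is invertible with $P^{-1}=\tfrac{1}{p-q}(I-qJ)$ because $p+(K-1)q=1$. Accordingly define
\[
\hat{g}_{\mathbf{w}}(\mathbf{x},\tilde{y}) \;:=\; \frac{1}{p-q}\Bigl(g_{\mathbf{w}}(\mathbf{x},\tilde{y})\;-\;q\sum_{y'\in[K]}g_{\mathbf{w}}(\mathbf{x},y')\Bigr).
\]
A one-line computation using the RR probabilities gives $\mathbb{E}_{\tilde{y}\sim\text{\RRShort}_\epsilon(y)}[\hat{g}_{\mathbf{w}}(\mathbf{x},\tilde{y})]=g_{\mathbf{w}}(\mathbf{x},y)$ for every fixed $\mathbf{w},\mathbf{x},y$, so a further expectation over a fresh draw $(\mathbf{x},y)$ yields an unbiased estimate of $\nabla F(\mathbf{w})$, where $F(\mathbf{w}):=\mathbb{E}_{(\mathbf{x},y)}[\ell(\mathbf{w};(\mathbf{x},y))]$ is the (convex) population objective. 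Its norm is controlled by $\|\hat{g}_{\mathbf{w}}(\mathbf{x},\tilde{y})\|\le\tfrac{1+qK}{p-q}\,L\le\tilde{L}$, where $qK<1$ and $e^\epsilon-1\ge\epsilon$ give $\tilde{L}=O(LK/\epsilon)$.

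Next I would run projected SGD over the diameter-$D$ constraint set $\mathcal{W}$ for $n$ steps with step size $\eta=D/(\tilde{L}\sqrt{n})$, using at step $t$ a fresh sample $(\mathbf{x}_t,y_t)$ and its RR output $\tilde{y}_t$ to form the update direction $\hat{g}_{\mathbf{w}_t}(\mathbf{x}_t,\tilde{y}_t)$. Since the samples are i.i.d.\ and each is touched once, this direction is, conditionally on $\mathbf{w}_t$, an unbiased estimator of $\nabla F(\mathbf{w}_t)$ with second moment at most $\tilde{L}^2$, and $F$ is convex, so the standard SGD guarantee yields $\mathbb{E}[F(\bar{\mathbf{w}})]-\min_{\mathbf{w}\in\mathcal{W}}F(\mathbf{w})\le D\tilde{L}/\sqrt{n}=O(DLK/(\epsilon\sqrt{n}))$ for the averaged iterate $\bar{\mathbf{w}}$; the $\tilde{O}(\cdot)$ in the statement absorbs constants or, if a high-probability version is preferred, the $\sqrt{\log(1/\beta)}$ from applying Azuma's inequality to the martingale $\sum_t\langle\hat{g}_{\mathbf{w}_t}-\nabla F(\mathbf{w}_t),\,\mathbf{w}_t-\mathbf{w}^\star\rangle$.

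The main obstacle is the debiasing step. One must check that inverting the RR channel inflates the effective Lipschitz/second-moment parameter only by the claimed $O(K/\epsilon)$ factor --- i.e., that $1/(p-q)=(e^\epsilon+K-1)/(e^\epsilon-1)=O(K/\epsilon)$ for $\epsilon\in(0,1)$ and that the subtracted ``mean'' term costs at most a constant times $L$. A related subtlety is that the loss surrogate naturally induced by debiasing, namely $\tfrac{1}{p-q}(\ell(\mathbf{w};(\mathbf{x},\tilde{y}))-q\sum_{y'}\ell(\mathbf{w};(\mathbf{x},y')))$, is \emph{not} convex, so one cannot simply invoke an off-the-shelf SCO result on it; debiasing at the level of gradients avoids this, since convex-SGD convergence needs only unbiasedness and a bounded second moment of the stochastic gradients of the genuinely convex objective $F$. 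Finally, it is worth noting the minor practical cost that each SGD step requires $K$ gradient evaluations (one per candidate label), which is acceptable in the small-to-moderate-$K$ regime targeted by this paper.
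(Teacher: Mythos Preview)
Your proposal is correct and matches the paper's proof almost exactly: the paper's \LabelRRSGD (Algorithm~\ref{alg:label-sgd-krr}) uses precisely your debiased estimator $\hat g_t=\frac{e^\ep+K-1}{e^\ep-1}\bigl(g_t-\sum_k \frac{\grad_w\ell(w_t,(x_t,k))}{e^\ep+K-1}\bigr)$, verifies unbiasedness and the $O(K^2L^2/\ep^2)$ second-moment bound, and then invokes a black-box SGD convergence result. The only cosmetic difference is that the paper uses the last-iterate guarantee of Shamir--Zhang with decreasing step size $\eta_t=D/(G\sqrt{t})$, which is where the $\log n$ hidden in the $\tilde O$ actually comes from, whereas your averaged-iterate argument with constant step size would in fact shave that logarithm.
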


The above excess population loss can be compared to that of Bassily et al.~\cite{BassilyFTT19}, who gave an $(\eps, \delta)$-DP algorithm with excess population loss $O_{D, L}\left(\frac{1}{\sqrt{n}} + \frac{\sqrt{p}}{\eps n}\right)$, where $p$ denote the dimension of the parameter space; this bound is also known to be tight in the standard DP setting. The main advantage of our guarantee in \Cref{prop:labelsgd-informal} is that it is independent of the dimension $p$. Furthermore, we show that our bound is tight up to polylogarithmic factors and the dependency on the number of classes $K$.

The above result provides theoretical evidence that running \RRShort on the labels and then training on this noisy dataset can be effective. We can further extend this to the setting where, instead of running \RRShort, we run \RRtopk before running the aforementioned (debiased) SGD, although---perhaps as expected---our bound on the population loss now depends on the quality of the priors.

\begin{corollary}[Informal] \label{cor:topk-sco}
Suppose that we are given a prior $\bp_x$ for every $x$ and let $Y^x_k$ denote the set of top-$k$ labels with respect to $\bp_x$. Then, for any $\ep \in (0,1)$, there is an $\ep$-\LDP algorithm for stochastic convex optimization with excess population loss $\tilde{O}\left(DL \cdot \left(\frac{k}{\ep \sqrt{n}} + \Pr_{(x, y) \sim \MD}[y \notin Y^x_k]\right)\right)$ where $D, L$ are as defined in \Cref{prop:labelsgd-informal} and $\MD$ is the data distribution.
\end{corollary}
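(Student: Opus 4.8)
The plan is to follow the template behind \Cref{prop:labelsgd-informal} — privatize each label once, debias, and run projected SGD on the resulting noisy data — but with \RRShort replaced by \RRtopk, paying an extra additive term for the prior mass that falls outside the top-$k$ set. Concretely, for each example $(x_i,y_i)$ in the $n$-point dataset I release $\ty_i = \RRtopk(y_i)$ using the public prior $\bp_{x_i}$ (so $\ty_i \in Y^{x_i}_k$); by \Cref{lem:rr-top-k-dp} this release is $\eps$-DP, and everything afterwards is post-processing that never touches the true labels again, so the overall algorithm is $\eps$-\LDP. It remains to exhibit a gradient oracle built only from the pairs $(x_i,\ty_i)$ on which SGD converges at the claimed rate.

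The key observation is that, conditioned on the ``good'' event $y \in Y^x_k$, the output $\ty$ is distributed exactly as $\RRShort_\eps$ run on the $k$-element alphabet $Y^x_k$, whereas conditioned on $y \notin Y^x_k$, $\ty$ is uniform on $Y^x_k$ and carries no information about $y$. I would therefore define, for a point $x$ with top-$k$ set $Y^x_k$ and noisy label $\ty$,
\[
\widehat{g}(w) \;:=\; \frac{1}{e^{\eps}-1}\left((e^{\eps}+k-1)\,\nabla\ell(w;(x,\ty)) \;-\; \sum_{y' \in Y^x_k} \nabla\ell(w;(x,y'))\right),
\]
which is just the standard randomized-response debiasing rescaled to the set $Y^x_k$ (note it uses only $\ty$, hence is a valid post-processing of the private release). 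A direct computation then gives: (i) $\|\widehat{g}(w)\| \le \frac{(e^{\eps}+k-1)L + kL}{e^{\eps}-1} = O(kL/\eps)$ for all $w$, using $L$-Lipschitzness and $e^{\eps}-1 \ge \eps$; (ii) $\E[\widehat{g}(w)\mid x,y] = \nabla\ell(w;(x,y))$ whenever $y \in Y^x_k$; and (iii) $\E[\widehat{g}(w)\mid x,y] = \tfrac{1}{k}\sum_{y'\in Y^x_k}\nabla\ell(w;(x,y'))$ when $y \notin Y^x_k$, a convex combination of gradients and hence of norm $\le L$. Averaging (ii)--(iii) over $\MD$, the population-gradient bias is uniformly bounded: $\big\|\E[\widehat{g}(w)] - \nabla F(w)\big\| \le 2L\cdot\Pr_{(x,y)\sim\MD}[y\notin Y^x_k] =: b$, where $F(w) := \E_{(x,y)\sim\MD}[\ell(w;(x,y))]$.

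The last step is the standard analysis of single-pass projected SGD for Lipschitz convex losses with \emph{biased} stochastic gradients: running $n$ steps with a fresh sample each, step size $\eta \asymp D/(G\sqrt{n})$ and $G := O(kL/\eps)$ the a.s.\ bound from (i), convexity yields $\E\langle\widehat{g}_t, w_t - w^\star\rangle \ge \E[F(w_t) - F(w^\star)] - Db$, so telescoping gives excess population loss $O(DG/\sqrt{n} + Db) = \tilde{O}\!\left(\frac{DLk}{\eps\sqrt{n}} + DL\cdot\Pr_{(x,y)\sim\MD}[y\notin Y^x_k]\right)$ for the averaged iterate, as claimed; taking $k=K$ (so $Y^x_k=[K]$ and the second term vanishes) recovers \Cref{prop:labelsgd-informal}. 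The main obstacle is the design of $\widehat{g}$: when $k<K$ the output lives in the strict subset $Y^x_k$, so no function of $\ty$ can be an unbiased gradient estimator for \emph{every} true label (the associated linear system is over-determined). The crux is thus to make $\widehat{g}$ exactly unbiased on the event $y\in Y^x_k$ while staying bounded — and therefore only mildly biased — on its complement, and to verify that this controlled bias enters the SGD guarantee precisely as the additive $\Pr[y\notin Y^x_k]$ term, with no inflation of the second moment (which remains $O((kL/\eps)^2)$).
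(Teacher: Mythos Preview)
Your proposal is correct and, in fact, describes exactly the same algorithm the paper uses: release $\ty_i$ via \RRtopk on the public top-$k$ set $Y^{x_i}_k$ and run one-pass projected SGD with the debiased estimator $\widehat g$. The difference is purely in how the analysis is framed. The paper introduces an auxiliary distribution $P'$ obtained by replacing any label $y\notin Y^x_k$ by a uniform draw from $Y^x_k$, observes that \RRtopk applied to $(x,y)\sim P$ is identical to $\RRShort$ on the $k$-element alphabet applied to $(x,y')\sim P'$, invokes \Cref{prop:labelsgd-informal} as a black box to get excess risk $\tilde O(DLk/(\eps\sqrt n))$ on $P'$, and then transfers the bound back to $P$ via a Lipschitz argument that costs exactly $2DL\cdot\Pr[y\notin Y^x_k]$. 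You instead keep the original distribution $P$ throughout, quantify the effect of $y\notin Y^x_k$ as a uniform bound $b\le 2L\cdot\Pr[y\notin Y^x_k]$ on the population-gradient bias, and invoke the (standard) biased-SGD rate $O(DG/\sqrt n + Db)$ for the averaged iterate.

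Both routes are equally valid and yield the same bound with the same constants. Your direct biased-gradient argument is a bit more self-contained (no auxiliary distribution, and the averaged-iterate analysis even avoids the $\log n$ from the last-iterate result the paper cites), whereas the paper's reduction has the conceptual appeal of reusing \Cref{prop:labelsgd-informal} verbatim with $K$ replaced by $k$.
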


When our top-$k$ set is perfect (i.e., $y$ always belongs to $Y_k^x$), the bound reduces to that of \Cref{prop:labelsgd-informal}, but with the smaller $k$ instead of $K$. Moreover, the second term is, in some sense, a penalty we pay in the excess population loss for the inaccuracy of the top-$k$ prior. We defer the formal treatment and the proofs to the Supplementary Material, in which we also present additional generalization results for non-convex settings.
Note that \Cref{cor:topk-sco} is not in the exact setup we run in experiments, where we dynamically calculate an optimal $k$ for each $x$ given generic priors (via \RRP), and for which the utility is much more complicated to analyze mathematically. Nonetheless, the above corollary corroborates the intuition that a good prior helps with training. 

\section{Conclusions and Future Directions}
\label{sec:conclusion}

In this work, we introduced a novel algorithm \RRP (which can be used to improve on the traditional \RRShort mechanism), and applied it to \LDP problems. We showed that prior information can be incorporated to the randomized label querying framework while maintaining privacy constraints. We demonstrated two frameworks to apply \RRP: 
(i) a general multi-stage training algorithm \multiST that bootstraps from uniform
priors and (ii) an algorithm that build priors from clustering with SSL-based
representations. The former is general purpose and can be applied to tasks even when no domain-specific priors are available, while the latter uses a domain-specific algorithm to extract priors and performs well even with very small privacy budget. As summarized by the figure on the right, in both cases, by focusing on \LDP, our \RRP significantly improved the model performance of previous state-of-the-art DP models that aimed to protect both the inputs and outputs. We note that, following up on our work, additional results on deep learning with \LDP were obtained \cite{malek2021antipodes,yuan2021practical}.
The narrowed performance gap between private and non-private models is vital for adding DP to real world deep learning models. We nevertheless stress that our algorithms only protect the labels but not the input points, which might not constitute a sufficient privacy protection in all settings.

\begin{wrapfigure}[14]{r}{0.5\textwidth}
  \begin{center}\vskip-8pt
    \includegraphics[width=\linewidth]{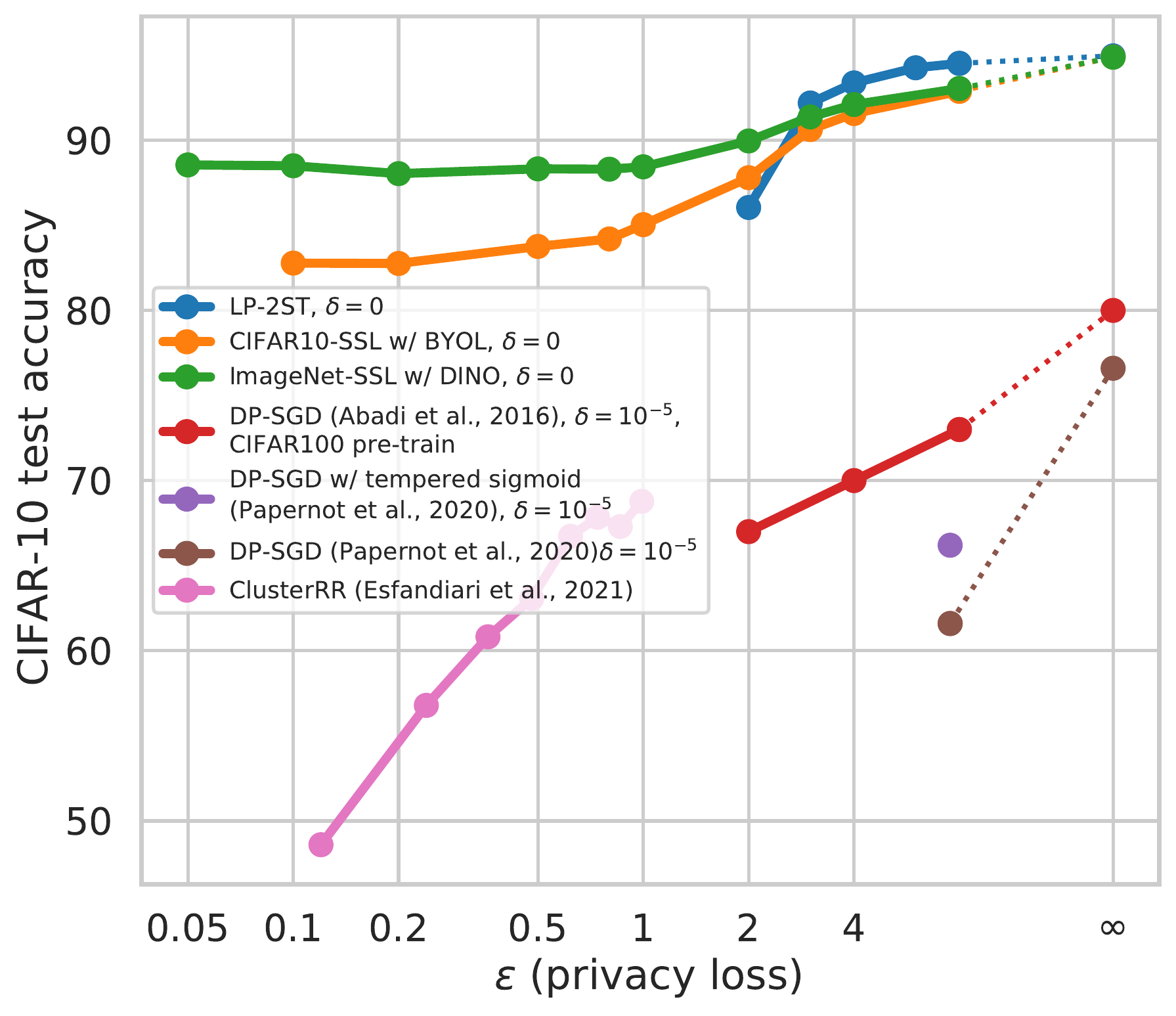}
  \end{center}
\end{wrapfigure}
Our work opens up several interesting questions.  Firstly, note that our multi-stage training procedure uses very different ingredients than those of~\citet{abadi2016deep} (which employ \dpsgd, privacy amplification by subsampling, and Renyi accounting); can these tools be used to  further improve \LDP?  Secondly, while our procedure can be implemented in the most stringent \emph{local} DP setting\footnote{They can in fact be implemented in the slightly weaker sequentially interactive local DP model \cite{duchi_minimax}.} \citep{kasiviswanathan2011can}, can it be improved in the weaker central (aka trusted curator) DP model, assuming the curator knows the prior?  Thirdly, while our algorithm achieves pure DP (i.e., $\delta = 0$), is higher accuracy possible for approximate DP (i.e., $\delta > 0$)?


\section*{Acknowledgements}
The authors would like to thank Sami Torbey for very helpful feedback on an early version of this work. At MIT, Noah Golowich was supported by a Fannie and John Hertz Foundation Fellowship and an NSF Graduate Fellowship.

\bibliographystyle{abbrvnat}
\bibliography{refs}

\appendix\clearpage

\vbox{\hsize\textwidth
\linewidth\hsize
\vskip 0.1in
\hrule height 4pt
\vskip 0.25in
\vskip -\parskip\centering
{\LARGE\bf Supplementary Material for\\``\papertitle''\par}
\vskip 0.29in
\vskip -\parskip
\hrule height 1pt
\vskip 0.09in}

\section{Missing Proofs}

\subsection{Proof of Lemma~\ref{lem:rr-top-k-dp}}

\begin{proof}[Proof of Lemma~\ref{lem:rr-top-k-dp}]
Consider any inputs $y, y' \in [K]$ and any possible output $\ty \in Y_k$. $\Pr[\RRtopk(y) = \ty]$ is maximized when $y = \ty$, whereas $\Pr[\RRtopk(y') = \ty]$ is minimized when $y' \in Y_k \setminus \{\ty\}$. This implies that
\begin{align*}
\frac{\Pr[\RRtopk(y) = \ty]}{\Pr[\RRtopk(y') = \ty]} \leq \frac{\frac{e^{\eps}}{e^{\eps} + k - 1}}{\frac{1}{e^{\eps} + k - 1}} = e^{\eps}.
\end{align*}
Thus, \RRtopk is $\eps$-DP as desired.
\end{proof}
\section{Details of the Experimental Setup}
\label{sec:exp-details}

\paragraph{Datasets.} We evaluate our algorithms on the following image classification datasets:
\begin{itemize}
    \item MNIST~\citep{mnist}, 10 class classification of hand written digits, based on inputs of $28\times 28$ gray scale images. The training set contains 60,000 examples and the test set contains 10,000.
    \item Fashion MNIST~\citep{fashionmnist}, 10 class classification of Zalando's article images. The dataset size and input format are the same as MNIST.
    \item KMNIST~\citep{kmnist}, 10 class classification of Hiragana characters. The dataset size and the input format are the same as MNIST.
    \item CIFAR-10/CIFAR-100~\citep{cifar} are 10 class and 100 class image classification datasets, respectively. Both datasets contains $32\times 32$ color images, and both have a training set of size 50,000 and a test set of size 10,000.
    \item MovieLens~\citep{MovieLens} contains a set of movie ratings from the MovieLens users. It was collected and maintained by a research group (GroupLens) at the University of Minnesota. There are 5 versions: ``25m'', ``latest-small'', ``100k'', ``1m'', ``20m''. Following \citet{gaussiandp}, we use the ``1m'' version, which the largest MovieLens dataset that contains demographic data. Specifically, it contains 1,000,209 anonymous ratings of approximately 3,900 movies made by 6,040 MovieLens users, with some meta data such as gender and zip code.
\end{itemize}

\paragraph{Architectures.} On CIFAR-10/CIFAR-100, we use ResNet~\citep{he2016deep}, which is a Residual Network architecture widely used in the computer vision community. In particular, we use ResNet18 V2~\citep{he2016identity}. Note the standard ResNet18 is originally designed for ImageNet scale (image size $224\times 224$). When adapting to CIFAR (image size $32\times 32$), we replace the initial block with $7\times 7$ convolution and $3\times 3$ max pooling with a single $3\times 3$ convolution (with stride 1) layer. The upper layers are kept the same as the standard ImageNet ResNet18.
On MNIST, Fashion MNIST, and KMNIST, we use a simplified Inception~\citep{szegedy2015going} model suitable for small image sizes, and defined as follows:
 
\begin{tabularx}{\linewidth}{rX}
 Inception :: & Conv(3$\times$3, 96) $\rightarrow$ S1 $\rightarrow$ S2 $\rightarrow$ S3 $\rightarrow$ GlobalMaxPool $\rightarrow$ Linear.\\
 S1 ::  & Block(32, 32) $\rightarrow$ Block(32, 48) $\rightarrow$ Conv(3$\times$3, 160, Stride=2). \\
 S2 ::  & Block(112, 48) $\rightarrow$ Block(96, 64) $\rightarrow$ Block(80, 80) $\rightarrow$ Block (48, 96) $\rightarrow$ Conv(3$\times$3, 240, Stride=2).\\
 S3 :: & Block(176, 160) $\rightarrow$ Block(176, 160).\\
 Block($C_1$, $C_2$) :: &  Concat(Conv(1$\times$1, $C_1$), Conv(3$\times$3,$C_2$)). \\
 Conv ::  & Convolution $\rightarrow$ BatchNormalization $\rightarrow$ ReLU.\\
\end{tabularx}

For the MovieLens experiment, we adopt a two branch neural networks from the neural collaborative filtering algorithm~\citep{he2017neural}. We simply treat the ratings as categorical labels and apply our algorithm for multi-class classification. During evaluation, we output the average rating according to the softmax probabilities output by the trained model. 

\paragraph{Training Procedures.} On MNIST, Fashion MNIST, and KMNIST, we train the models with mini-batch SGD with batch size 265 and momentum 0.9. We run the training for 40 epochs (for multi-stage training, each stage will run 40 epochs separately), and schedule the learning rate to linearly grow from 0 to 0.02 in the first 15\% training iterations, and then linearly decay to 0 in the remaining iterations. 

On CIFAR-10, we use batch size 512 and momentum 0.9, and train for 200 epochs. The learning rate is scheduled according to the widely used \emph{piecewise constant with linear rampup} scheme. Specifically, it grows from 0 to 0.4 in the first 15\% training iterations, then it remains piecewise constant with a decay factor of 10 at the 30\%, 60\%, and 90\% training iterations, respectively. The CIFAR-100 setup is similar to CIFAR-10 except that we use a batch size 256 and a peak learning rate 0.2. MovieLens experiments are trained similarly, but with batch size 128.

On all datasets, we optimize the cross entropy loss with an $\ell_2$ regularization (coefficient $10^{-4}$). All the networks are randomly initialized at the beginning of the training. For the experiment on CIFAR-10 where we explicitly study the effect of pre-training to compare with previous methods that use the same technique, we train a (non-private) ResNet18 on the full CIFAR-100 training set and initialize the CIFAR-10 model with the pre-trained weights. The classifier is still randomly initialized because there is no clear correspondence between the 100 classes of CIFAR-100 and the 10 classes of CIFAR-10. The remaining configuration remains the same as in the experiments without pre-training. In particular, we did \emph{not} freeze the pre-trained weights.

We apply standard data augmentations, including random crop, random left-right flip, and random cutout~\citep{devries2017improved}, to all the datasets during training. We implement our algorithms in TensorFlow~\citep{tensorflow2015-whitepaper}, and train all the models 
on NVidia Tesla P100 GPUs. 

\para{Learning with Noisy Labels.} Standard training procedures tend to overfit to the label noise and generalize poorly on the test set when some of the training labels are randomly flipped. We apply \emph{mixup}~\citep{zhang2017mixup} regularization, which generates random convex combinations of both the inputs and the (one-hot encoded) labels during training. It is shown that mixup is resistant to random label noise. Note that our framework is generic and in principle any robust training technique could be used. We have chosen mixup for its simplicity, but there has been a rich body of recent work on deep learning methods with label noise, see, e.g., \citep{hu2019simple,han2018co,yu2019does,chen2019understanding,zhang2018generalized,nguyen2019self,menon2019can,lukasik2020does,zhengerror,jiang2020beyond,harutyunyan2020improving,han2020sigua,ma2020normalized,song2019prestopping,pleiss2020identifying,song2020learning} and the references therein. Potentially with more advanced robust training, even higher performance could be achieved.

\para{Multi-Stage Training.} There are a few implementation enhancements that we find useful for multi-stage training.  For concreteness, we discuss them for \twoST.  First, we find it helps to initialize the stage-2 training with the models trained in stage-1. This is permitted as the stage-1 model is trained on labels that are queried privately. Moreover, we can  reuse those labels queried in stage-1 and train stage-2 on a combined dataset. Although the subset of data from stage-1 is noisier, we find that it generally helps to have more data, especially when we reduce the noise of stage-1 data by using the learned prior model. Specifically, for each sample $(x,\tilde{y})$ in the stage-1 data, where $\tilde{y}$ is the private label queried in stage-1, we make a prediction on $x$ using the model trained in stage-1; if $\tilde{y}$ is not in the top $k$ predicted classes, we will exclude it from the stage-2 training. Here $k$ is simply set to the average $k$ obtained when running \RRP to query labels on the data held out for stage-2.
Similar ideas apply to training with more stages. For example, in \threeST, stage-3 training could use the model trained in stage-2 as initialization, and use it to filter the queried labels in stage-1 and stage-2 that are outside the top $k$ prediction, and then train on the combined data of all 3 stages.

\para{Priors from Self-supervised Learning.} Recent advances in self-supervised learning (SSL)~\citep{chen2020simple,chen2020big,grill2020bootstrap,he2020momentum,caron2021emerging} show that representations learned from a large collection of unlabeled but diverse images could capture useful semantic information and can be finetuned with labels to achieve classification performance on par with the state-of-the-art fully supervised learned models. We apply SSL algorithms to extract priors for image classification problems, with the procedure described in Algorithm~\ref{alg:ssl-priors}.

\begin{algorithm}[]
\small
  \caption{SSL Priors.}\label{alg:ssl-priors}
  \textbf{Input:} Training set $D=\{(x_i,y_i)\}_{i=1}^n$, cluster count $C$, privacy budget for priors $\ep_p$, trained SSL model $f_{\text{SSL}}$.
  \begin{enumerate}[leftmargin=14pt,rightmargin=20pt,itemsep=1pt,topsep=1.5pt]
  \item Initialize $P\leftarrow \nicefrac{1}{K}$ \texttt{ones}$(n, K)$ as the uniform priors.
  \item Extract SSL features $F=\{f_\text{SSL}(x_i): (x_i, y_i)\in D\}$.
  \item Run $k$-means algorithms to partition $F$ into $C$ groups.
  \item For each $c = 1$ to $C$:
  \begin{enumerate}
      \item Compute histogram of classes $H_c\in \mathbb{N}_{\geq 0}^K$ according to the labels of examples in the $c$-th group.
      \item Get a private histogram query $\tilde{H}_c\leftarrow H_c +$ \texttt{scipy.stats.dlaplace.rvs}$(\epsilon_p/2, K)$, via the discrete Laplace mechanism.
      \item Get a prior via normalization: $p_c=\text{max}(\tilde{H}_c,0) / \sum_{k=1}^K\text{max}(\tilde{H}_c[k],0)$.
      \item For each example $i$ in group $c$, assign $P[i,:]\leftarrow p_c$.
  \end{enumerate}
  \item Output $P$. 
  \end{enumerate}
\end{algorithm}

Specifically, we choose two recent SSL algorithms: BYOL~\citep{grill2020bootstrap} and DINO~\citep{caron2021emerging}. For BYOL, we train the SSL model using the (unlabeled) CIFAR-10 images only, as a demonstration without using \emph{external} data. For DINO, we use the models pre-trained on (unlabeled) ImageNet~\citep{deng2009imagenet} images. Since ImageNet is a much larger and more diverse dataset than CIFAR-10, the SSL representations are also more capable of capturing the semantic information. Note the ImageNet images are of higher resolution and resized to $224\times 224$ during training. To extract features for $32\times 32$ CIFAR-10 images, we simply upscale the images to $224\times 224$ before feeding into the trained neural network.

We choose relatively large cluster sizes so that the private histogram query is more robust to the added discrete Laplace noise. In particular, we found $C=100$ clusters for BYOL representations and $C=50$ clusters for DINO representations achieve a good balance of robustness and accuracy.  Since $\ep_p$ will be subtracted from the privacy budget for \RRP, we simply choose the smallest $\ep_p$ without causing too much deterioration of the priors. In our experiments, we set $\ep_p=0.05$ for BYOL and $\ep_p=0.025$ for DINO. Note the model accuracy could potentially be further boosted by choosing $C$ and $\ep_p$ adaptively according to the overall privacy budget. In the following, we provide a simple study to show how the interplay between $\ep_p$ and $C$ affects the accuracy of the histogram queries.

To compute an accuracy measure on the test set, we extract features using a SSL learned models on both training and test set. A $k$-means clustering algorithm is run on the joint set of training and test features. For each cluster, we apply the discrete Laplace mechanism to make a private histogram of class distributions from \emph{only the training examples} in that cluster. The class with the maximum votes are then used as predicted labels for all the \emph{test examples} in the cluster, and compared with the true test labels to calculate the accuracy. Figure~\ref{fig:ssl-kmeans-acc} shows the accuracy with the two different SSL features under different privacy budgets ($\ep$) for making the histogram queries. As expected, the accuracy is higher with smaller clusters, but at the same time sensitive to noise introduced by the Geometric Mechanism when the privacy budget is small.

\begin{figure} 
    \centering
    \begin{overpic}[width=.49\linewidth]{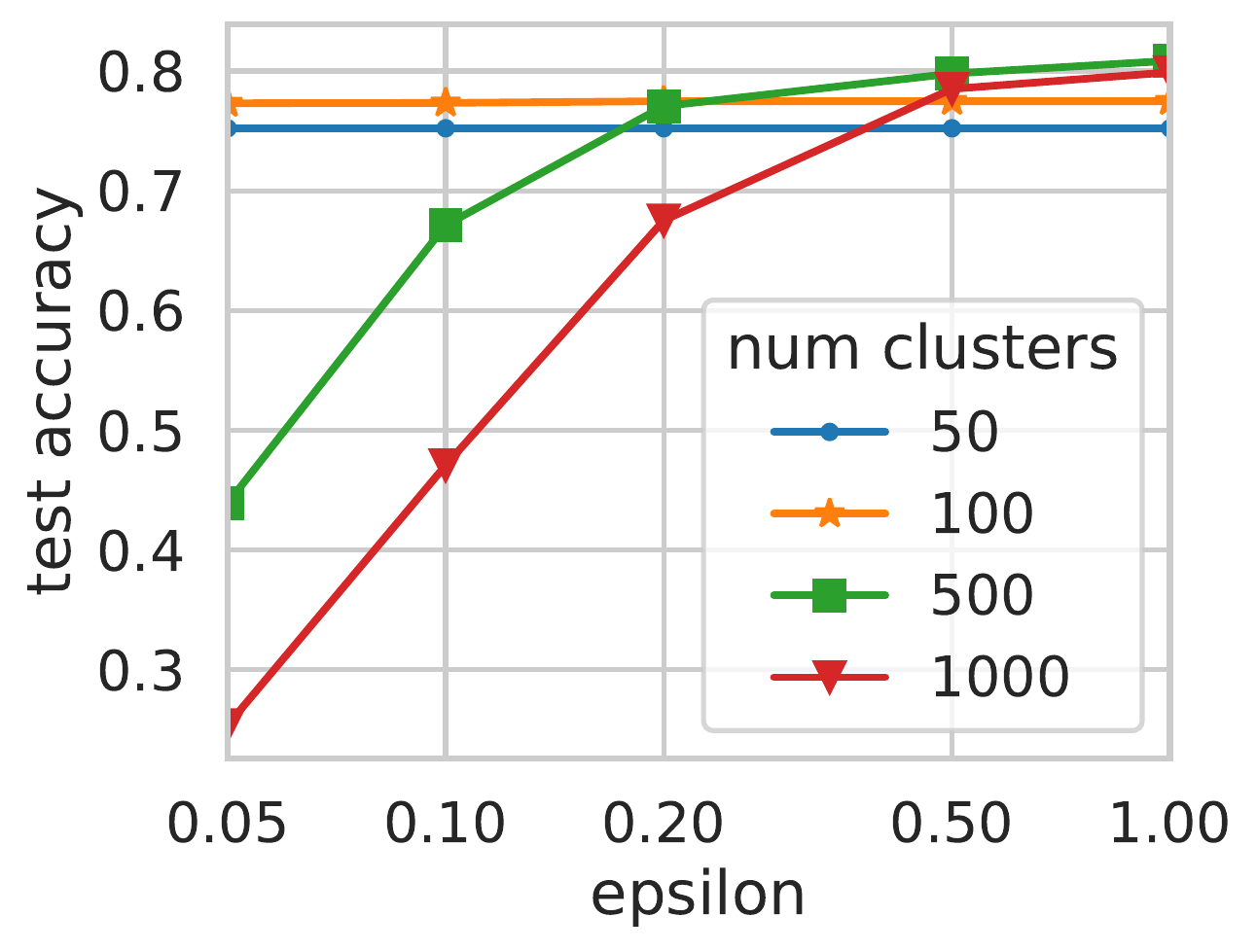}\put(1,1){\textbf{(a)} BYOL}\end{overpic}
    \begin{overpic}[width=.49\linewidth]{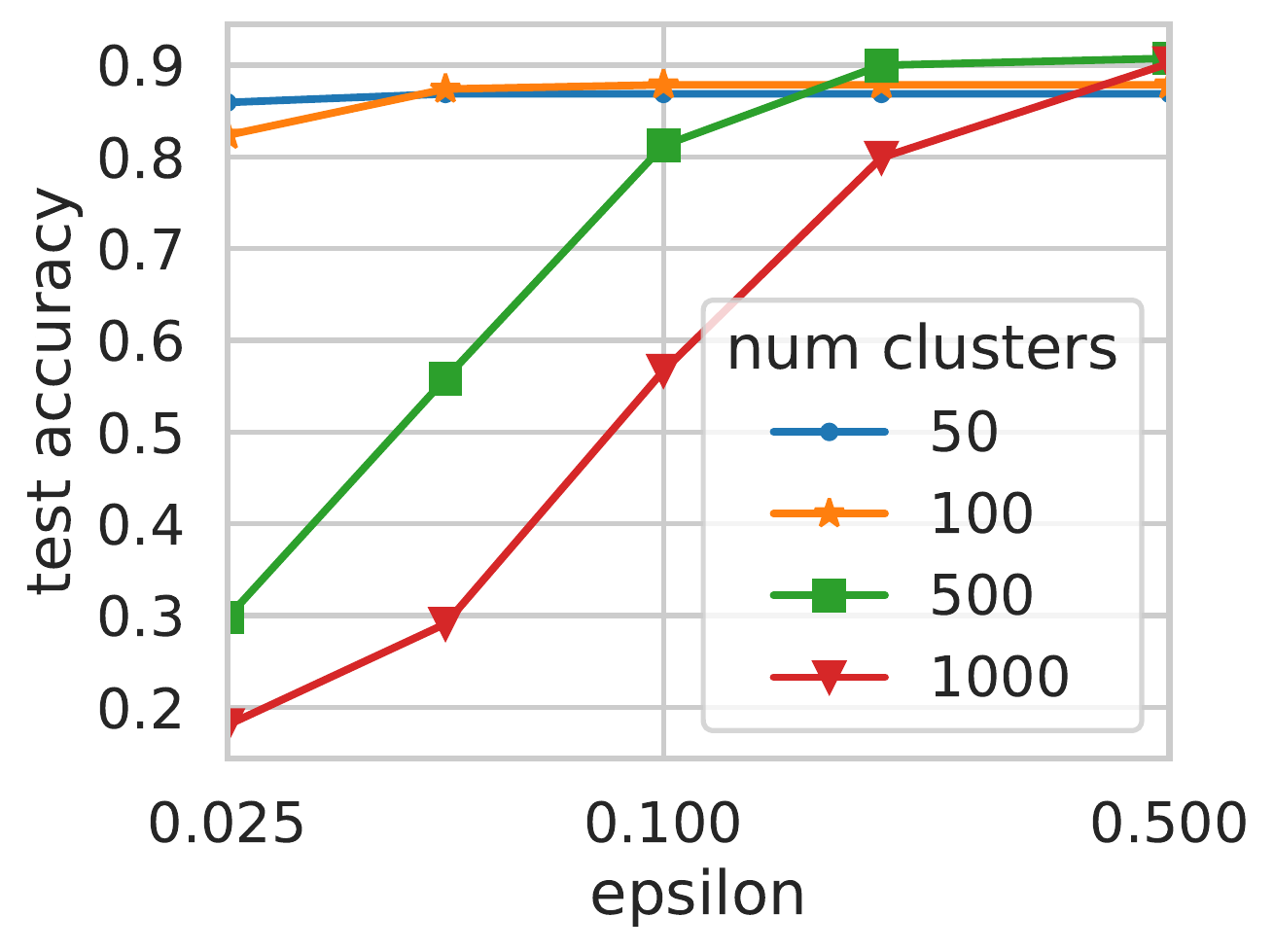}\put(1,1){\textbf{(b)} DINO}\end{overpic}
    \caption{Accuracy evaluated on CIFAR-10 test set, of private histogram querying with kmeans clustering on self-supervised learning based features learned by (a) BYOL~\citep{grill2020bootstrap} on CIFAR-10 and (b) DINO~\citep{caron2021emerging} on ImageNet.}
    \label{fig:ssl-kmeans-acc}
\end{figure}

\section{Extra Results on Multi-Stage Training}

In addition to the results presented in the main text, we include extra results of multi-stage training on  MNIST~\citep{mnist}, Fashion MNIST~\citep{fashionmnist},  and KMNIST~\citep{kmnist}. Both MNIST and Fashion MNIST have been previously used to benchmark DP deep learning algorithms. We compare our algorithms with previously reported numbers in Table~\ref{tab:mnist}. Our algorithms outperform previous methods across all $\epsilon$'s on both datasets. The gap is more pronounced on Fashion MNIST, which is slightly harder than MNIST.  Furthermore, \twoST consistently improves over \oneST. Table~\ref{tab:kmnist} shows the model performances on KMNIST under different privacy losses. The results are qualitatively similar to the ones for MNIST and Fashion MNIST.

\begin{table*}
    \centering\small\vskip-3pt
    \caption{Test accuracy (\%) on MNIST and Fashion MNIST. The baseline performances taken from previously published results
    correspond to $(\epsilon,\delta)$-DP with $\delta=10^{-5}$. 
        }\vskip2pt
    \label{tab:mnist}
    \begin{tabularx}{\linewidth}{cXcccccc}
    \toprule
    & Algorithm & $\epsilon=1$ & $\epsilon=2$ & $\epsilon=3$ & $\epsilon=4$ & $\epsilon=8$ & $\epsilon=\infty$ \\
    \midrule
    \multirow{6}{*}{\rotatebox{90}{MNIST}}
    & \dpsgd~\citep{abadi2016deep} & & 95 & & & 97 & 98.3 \\
        & PATE-G~\citep{papernot2016semi} & & \multicolumn{2}{l}{\hspace{18pt}98{\scriptsize ($\epsilon$=2.04)}} & & 98.1{\scriptsize( $\epsilon$=8.03)} & 99.2 \\
        & {Confident-GNMax}~\citep{papernot2018scalable} & & 98.5{\scriptsize ($\epsilon$=1.97)} & & & & 99.2 \\
    & Tempered Sigmoid~\citep{papernot2020tempered} & & \multicolumn{2}{l}{\hspace{30pt}98.1{\scriptsize ($\epsilon$=2.93)}} & & & \\
    & \citet{gaussiandp} & & \multicolumn{2}{l}{\hspace{21pt}96.6{\scriptsize ($\ep$=2.32)}} & \multicolumn{2}{l}{\hspace{20pt}97.0{\scriptsize ($\ep=$5.07)}} & \\
    & \citet{9378011} & & \multicolumn{2}{l}{\hspace{24pt}90.0{\scriptsize ($\ep=$2.5)}} & & & \\
    & \citet{nasr2020improving} & & & \multicolumn{2}{l}{\hspace{5pt}96.1{\scriptsize ($\ep=$3.2)}} & & \\
    & \citet{yu2019differentially} & & & & \multicolumn{2}{l}{\hspace{25pt}93.2{\scriptsize ($\ep=$6.78)}} & \\
    & \citet{feldman2020individual} & \multicolumn{2}{l}{\phantom{XX}96.56{\scriptsize ($\epsilon$=1.2)}} & 97.71 & & & \\
    \cmidrule{2-8}
    & \oneST & 95.34 & 98.16 & 98.81 & 99.08 & & 99.33 \\
    & \twoST & 95.82 & 98.78 & 99.14 & 99.24 && \\
    \midrule\midrule
    \multirow{4}{*}{\rotatebox{90}{Fashion} \rotatebox{90}{MNIST}}
    & \dpsgd~\citep{papernot2020tempered} & & \multicolumn{2}{l}{\hspace{28pt}81.9{\scriptsize ($\epsilon$=2.7)}} & & & 89.4 \\
    & {Tempered Sigmoid}~\citep{papernot2020tempered} & & \multicolumn{2}{l}{\hspace{28pt}86.1{\scriptsize ($\epsilon$=2.7)}} & & & \\
    & \citet{9378011} & & & 82.3 & & & \\
    \cmidrule{2-8}
    & \oneST & 80.78 & 90.18 & 92.52 & 93.50 & & 94.28 \\ 
    & \twoST & 83.26 & 91.24 & 93.18 & 94.10 & & \\
    \bottomrule
    \end{tabularx}\vskip-6pt
\end{table*}

\begin{table}
    \centering
    \caption{Test accuracy (\%) on KMNIST~\citep{kmnist}.
    \label{tab:kmnist}}
    \begin{tabular}{lccccc}
    \toprule
    Algorithm & $\epsilon$=1 & $\epsilon$=2 & $\epsilon$=3 & $\epsilon$=4 & $\epsilon$=$\infty$ \\
    \midrule
    \oneST & 76.56 & 92.04 & 95.86 & 96.86 & 98.33 \\
    \twoST & 81.26 & 93.72 & 97.19 & 97.83 & - \\
    \bottomrule
    \end{tabular}
    \vskip-5pt
\end{table}

\begin{figure}
    \centering
    \includegraphics[width=.7\linewidth]{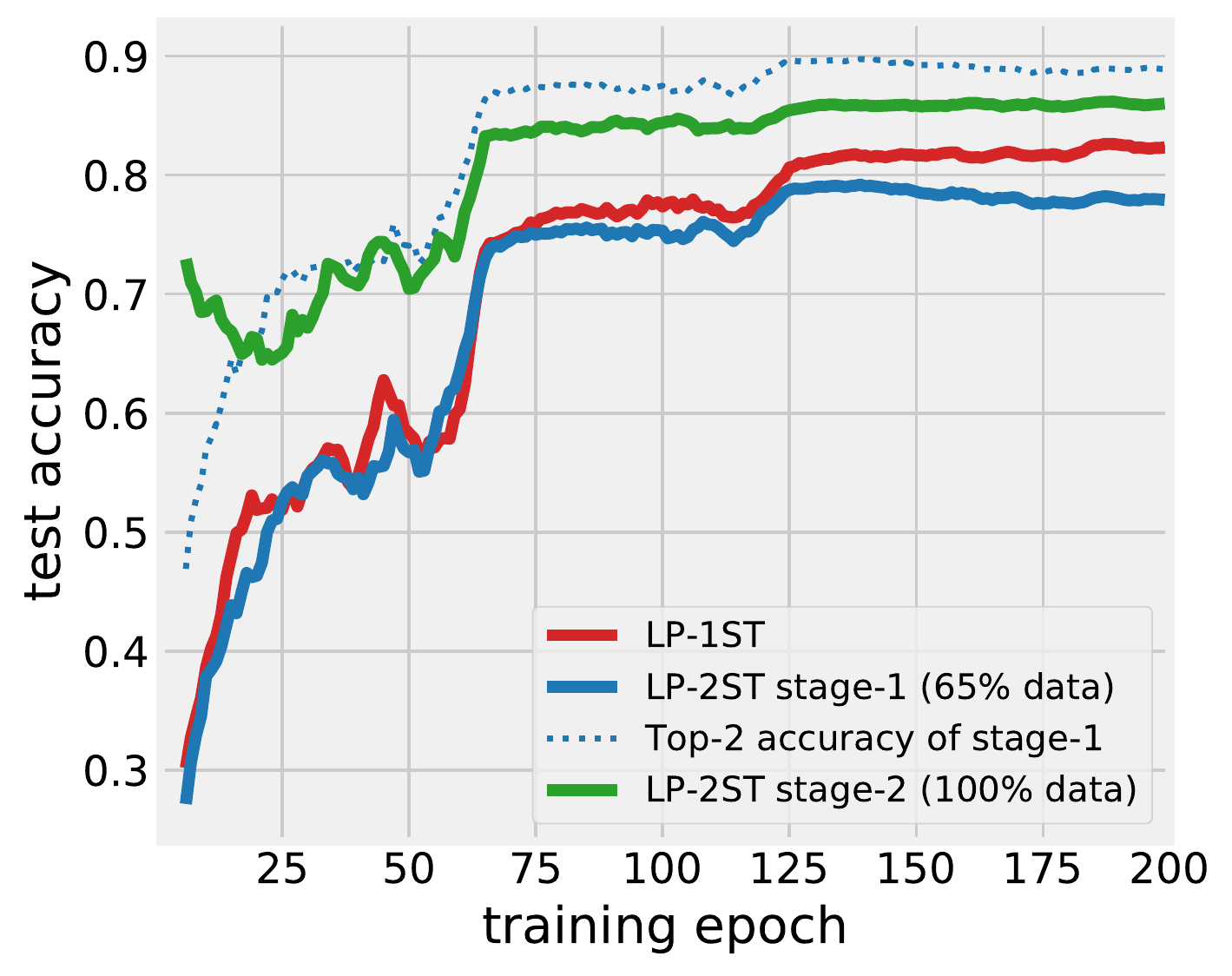}\vskip-8pt
    \caption{The learning curves of \oneST vs \twoST on CIFAR-10 ($\epsilon=2$).}
    \label{fig:cifar10-e2-stages}
\end{figure}

\section{Learning Dynamics of Multi-stage Training}
Fig.~\ref{fig:cifar10-e2-stages} visualizes the learning curves of \oneST and \twoST on CIFAR-10 with $\epsilon=2$.  Stage-1 of \twoST (using 65\% training data) clearly underperforms \oneST with the full training set. But it is good enough to provide useful prior for stage-2. The \RRP algorithm responds with an average $k=1.86$ over the remaining 35\% of the training set. As the dotted line shows, the top-2 accuracy of the model trained in stage-1 reaches 90\% at the end of training, indicating that the true label on the test set is within the top-2 prediction with high probability. In stage-2, we continue with the model trained in stage-1, and train on the combined data of the two stages. This is possible because the labels queried in stage-1 are already private. As a result, \twoST achieves higher performance than \oneST.

\section{Analysis of Robustness to Hyperparameters}

Following previous work,~\citep[e.g.,][] {papernot2020tempered}, we report the benchmark performance after hyperparameter tuning. In practice, to build a rigorous DP learning system, the hyperparameter tuning should be performed using private combinatorial optimization~\citep{gupta2010differentially}. Since that is not the main focus of this paper, we skip this step for simplicity. Meanwhile, we do the following analysis of model performance under variations of different hyperparameters, which shows that the algorithms are robust in a large range of hyperparameters, and also provides some intuition for choosing the right hyperparameters.

\begin{figure}
    \centering
    \begin{overpic}[width=.49\linewidth]{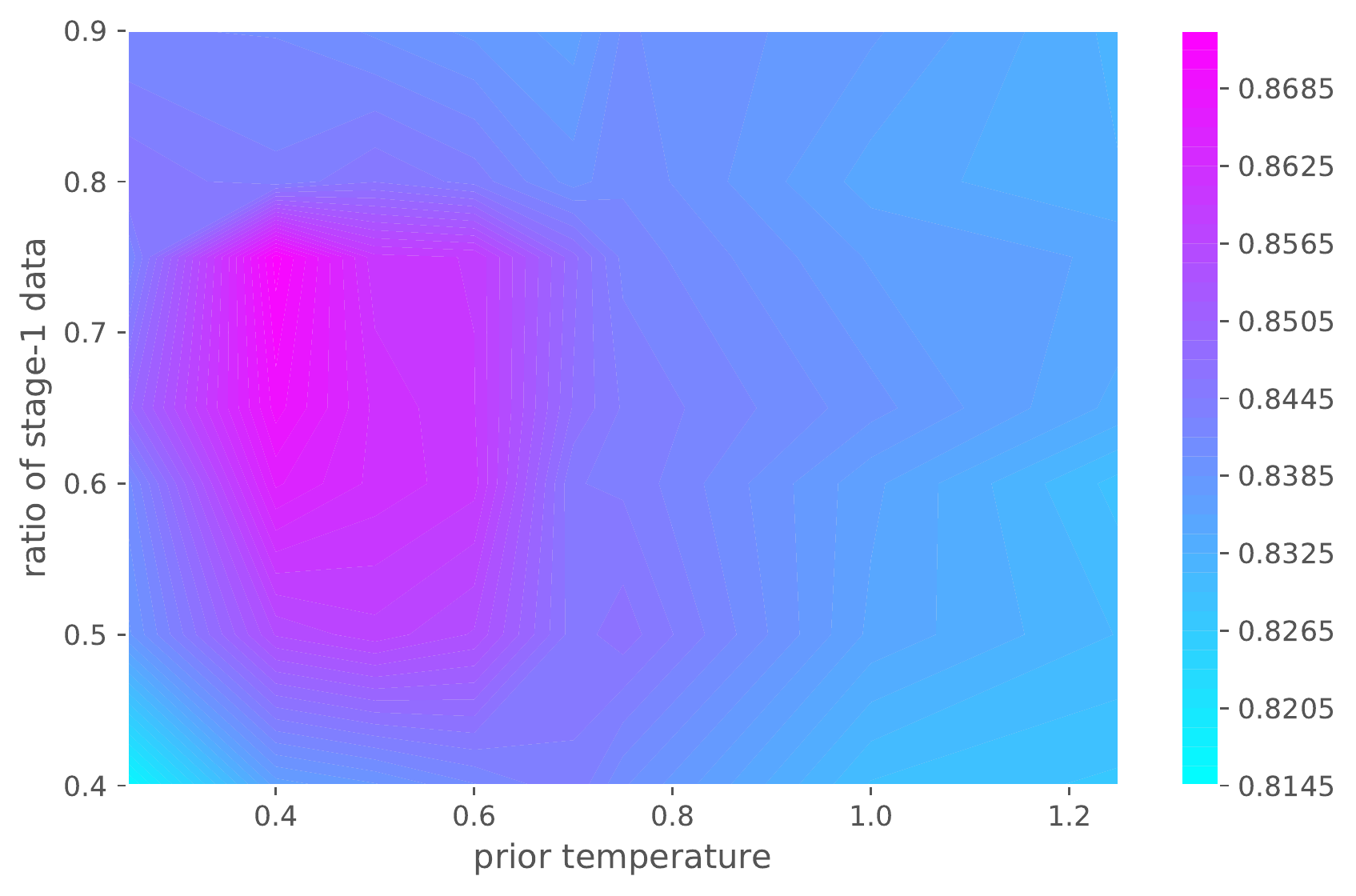}
    \put(0,0){\scriptsize (a)}\end{overpic}
    \begin{overpic}[width=.49\linewidth]{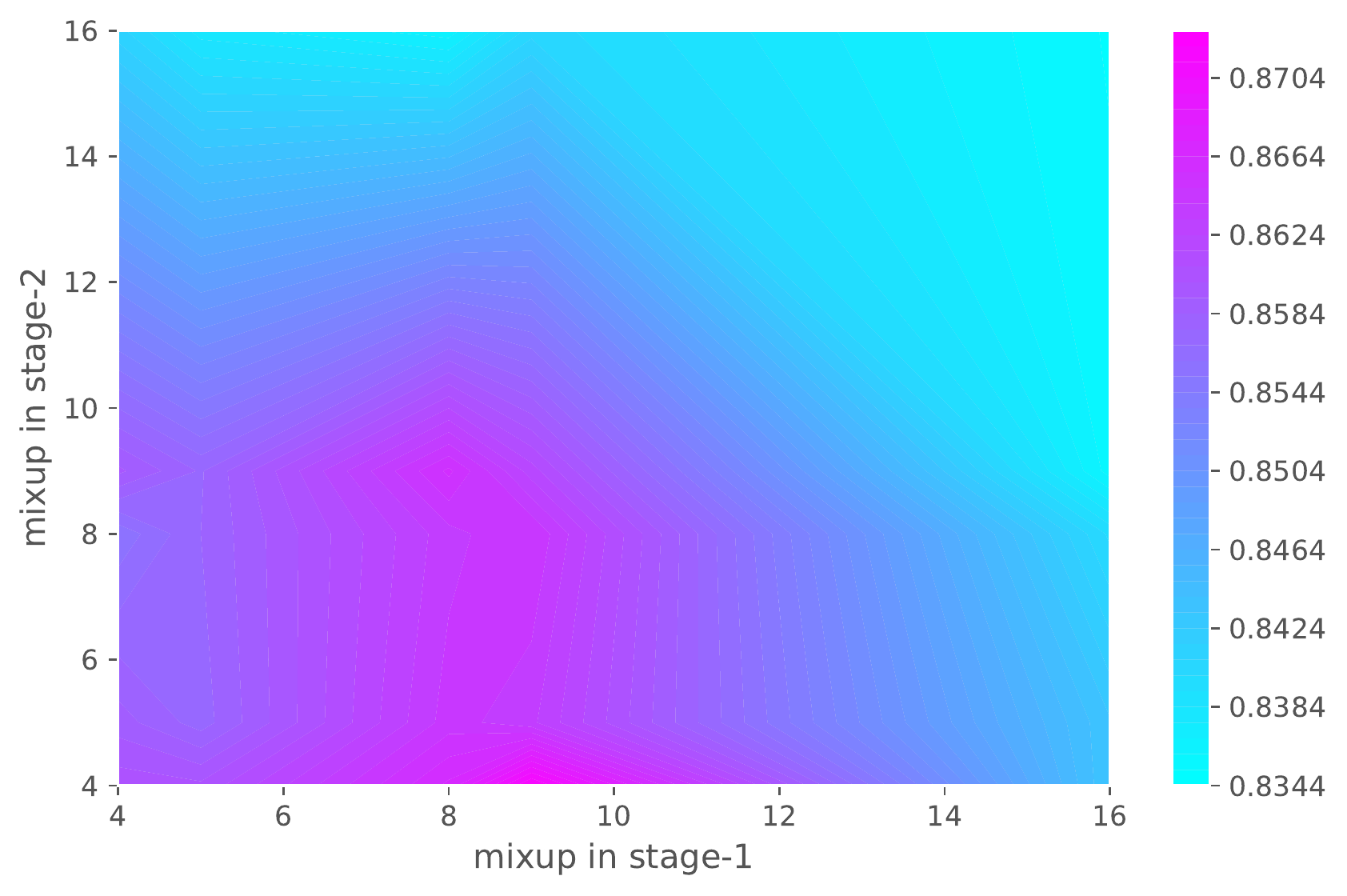}
    \put(0,0){\scriptsize (b)}\end{overpic}
            \vskip-8pt
    \caption{The final performance of \twoST on CIFAR-10 ($\epsilon=2$) (a) under different stage-1 / stage-2 data split and prior temperature; (b) under different mixup coefficients for stage-1 and stage-2.}
    \label{fig:temp-data-split}
\end{figure}

\para{Data Splits and Prior Temperature.}
The \emph{data split} parameter decides the ratio of data in different stages of training. Allocating more data for stage-1 allows us to learn a better prior model for the \twoST algorithm. However, it will also decrease the number of training samples in stage-2, which reduces the utility of the learned prior model. In practice,  ratios slightly higher than 50\% for stage-1 strike the right balance for \twoST.  We use a \emph{temperature} parameter $t$ to modify the learned prior. Specifically, let $f_k(x)$ be the logits prediction of the learned prior model for class $k$ on input $x$.  The temperature modifies the prior $\hat{p}_k(x)$ as:
\[
    \hat{p}^t_k(x) = \frac{\exp(f_k(x)/t)}{\sum_{k'=1}^K \exp(f_{k'}(x)/t)}.
\]

As $t\rightarrow 0$, it sparsifies the prior by forcing it to be more confident on the top classes, and as $t\rightarrow \infty$, the prior converges to a uniform distribution. In our experiments, we find it useful to sparsify the prior, and temperatures greater than $1$ are generally not helpful. Fig.~\ref{fig:temp-data-split}(a) shows the performance for different combinations of data split ratio and temperature.

\para{Accuracy of Stage-1.}
Ideally, one would want the $k$ calculated in \RRP to satisfy the condition that the ground-truth label is always in the top-$k$ prior predictions. Because otherwise, the randomized response is \emph{guaranteed} to be a wrong label. One way to achieve such a goal is to make the stage-1 model have high top-$k$ accuracy. For example, we could allocate more data to improve the performance of stage-1 training, or tune the temperature to spread the prior to effectively increase the $k$ calculated by \RRP. In either case, a trade-off needs to be made. In Fig.~\ref{fig:topk-violin}, we visualize the relation between top-$k$ test accuracy of stage-1 training and the final performance of \twoST. For each value range in the x-axis, we show the distribution of the final test accuracy where the average $k$ (rounded to the nearest integer) calculated in \RRP would make the top-$k$ accuracy of the corresponding stage-1 training fall into this value range. The plot shows that the final performance drops when the top-$k$ accuracy is too low or too high. In particular, achieving near perfect top-$k$ accuracy in stage-1 is \emph{not} desirable. Note this plot measures the top-$k$ accuracy on the \emph{test set}, so while it is useful to observe the existence of a trade-off, it does \emph{not} provide a procedure to choose the corresponding hyperparameters.

\begin{figure}
    \centering
    \includegraphics[width=.8\linewidth]{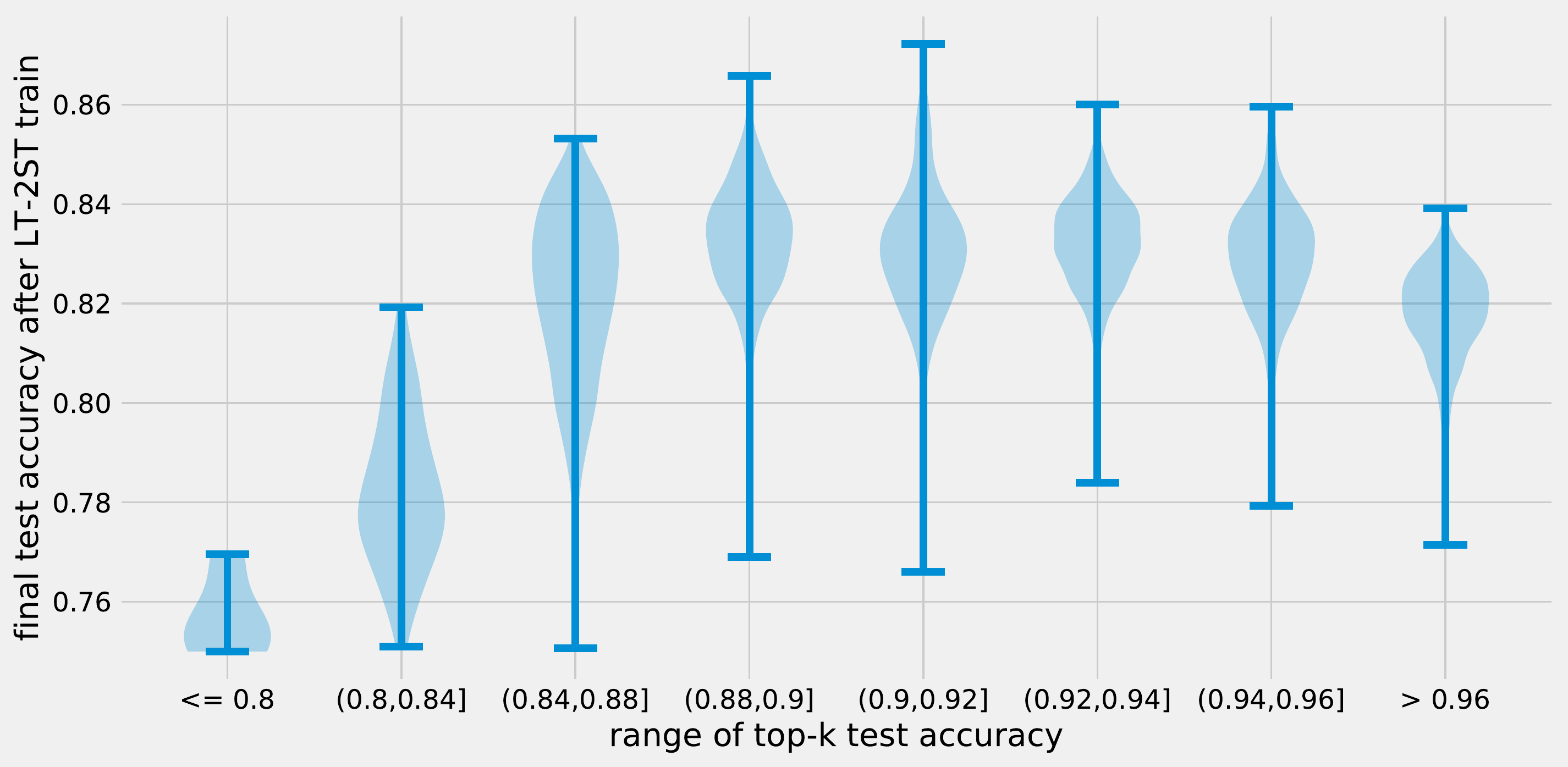}
    \caption{The relation between top-$k$ accuracy of stage-1 and the final accuracy of \twoST (CIFAR-10, $\epsilon=2$). The x-axis is the range of top-$k$ accuracy of stage-1 models evaluated on the test set. For each range, the violin plot
    shows the distribution of the final test accuracy of \twoST where the \RRP procedure calculated 
    an average $k$ (rounded to the nearest integer) for which the top-$k$ accuracy of the stage-1 model falls in the given range.}
    \label{fig:topk-violin}
\end{figure}

\para{Mixup Regularization.}
Mixup~\citep{zhang2017mixup} has a hyperparameter $\alpha$ that controls the strength of regularization (larger $\alpha$ corresponds to stronger regularization). We found that $\alpha$ values between 4 and 8 are generally good in our experiments, and as shown in Fig.~\ref{fig:temp-data-split}(b), stage-2 typically requires less regularization than stage-1. Intuitively, this is because the data in stage-2 is less noisier than stage-1.

\section{Convex SCO with \LDP}\label{sec:convex_erm}
In this section, we give the proofs of the Theorem \ref{prop:labelsgd-informal} and Corollary \ref{cor:topk-sco} for private stochastic convex optimization (SCO) and additionally prove some further, related results. We first formally introduce the setting of SCO.

Suppose we are given some \emph{feature space} $\MX$ (e.g., the space of all images), and \emph{label space} $[K] = \{ 1, 2, \ldots, K \}$. Write $\MZ = \MX \times [K]$. Let $\MW \subset \BR^p$ be a convex {\it parameter space}. Let $D$ be the (Euclidean) diameter of $\MW$, namely $D := \max_{w,w' \in \MW} \| w - w'\|$. Suppose we are given a loss function $\ell : \MW \times \MZ \ra \BR,
$ which specifies the loss $\ell(w, z)$ for a given parameter vector $w \in \MW$ on the example $z = (x,y)$. Given a sequence of samples $(x_1, y_1), \ldots, (x_n, y_n)$ drawn i.i.d.~from a distribution $P$ over $\MZ$, the goal is to find $w$ minimizing the popoulation risk, namely $\ML(w, P) := \E_{(x, y) \sim P}[\ell(w, (x, y))]$. Write $w^\st := \argmin_{w \in \MW} \ML(w,P)$. 
In this section, we make the following assumptions on $\ell$:
\begin{assumption}[Convexity]
  \label{asm:convexity}
  For each $z \in \MZ$, the function $w \mapsto \ell(w,z)$ is convex.
\end{assumption}

\begin{assumption}[Lipschitzness]
  \label{asm:lipschitz}
For each $z \in \MZ$, the function $w \mapsto \ell(w,z)$ is $L$-Lipschitz (with respect to the Euclidean norm). 
\end{assumption}

Under Assumptions \ref{asm:convexity} and \ref{asm:lipschitz}, \citet[Theorem 4.4]{BassilyFTT19} showed that there is an $(\ep, \delta)$-\DP algorithm that given $n$ i.i.d.~samples from a distribution $P$ and has access to a gradient oracle for $\ell$, outputs some $\hat w$ so that the excess risk is bounded as follows:
\begin{equation}
  \label{eq:log-factors}
    \E [\ML(\hat w, P)] - \ML(w^\st, P) \leq O \left( LD \cdot\left( \frac{\sqrt{p \log 1/\delta}}{n\ep}  + \frac{1}{\sqrt{n}} \right)\right).
\end{equation}
As shown by \citet{BassilyFTT19} (building off of previous work by \citet{bassily-private-2014}), the rate (\ref{eq:log-factors}) is tight up to logarithmic factors: in particular, there is a lower bound of $\Omega \left( \frac{\sqrt p}{n\ep} \right)$ on the excess risk for any $(\ep, \delta)$-DP algorithm, meaning that dimension dependence is necessary for private SCO. Subsequent work \citep{bassily_private_2020} showed how to obtain the rate (\ref{eq:log-factors}) in linear (in $n$) time. We additionally remark that there has much work (e.g., \citep{chaudhuri2011differentially,kifer_private_2012,bassily-private-2014,zhang_efficient_2017,wang_differentially_2017}) on the related problem of \emph{DP empirical risk minimization}, for which rates similar to (\ref{eq:log-factors}), except without the $1/\sqrt{n}$ term, are attainable. 

\subsection{Label-Private SGD}
In this section we prove Theorem \ref{prop:labelsgd-informal}, showing that \emph{dimension-independent} rates are possible in the setting of label DP privacy (in contrast to the standard setting of DP where privacy of the features must also be maintained). The algorithm that obtains the guarantee of Theorem \ref{prop:labelsgd-informal} is \LabelRRSGD (Algorithm \ref{alg:label-sgd-krr}).
Both \LabelRRSGD and the training procedure of Section \ref{sec:eval} (which uses \RRP) update the weight vectors using gradient vectors $\hat g_t$, which are obtained by using randomized response on the labels $y_t$ for the training examples $(x_t, y_t)$. \LabelRRSGD, however, ensures that $\hat g_t$ is an unbiased estimate of the true gradient, which facilitates the theoretical analysis, whereas this is not guaranteed the training procedure of Section \ref{sec:eval}.

\begin{algorithm}[!htp]
\small
  \caption{\bf \LabelRRSGD}\label{alg:label-sgd-krr}
  \textbf{Input:} Distribution $P$, convex and $L$-Lipschitz loss function $\ell$, privacy parameter $\ep$, convex parameter space $\MW$, variance factor $\sigma > 0$, step size sequence $\eta_t > 0$.
  \begin{enumerate}[leftmargin=14pt,rightmargin=20pt,itemsep=1pt,topsep=1.5pt]
  \item Choose an initial weight vector $w_1 \in \MW$.
  \item For $t = 1$ to $n$:
    \begin{enumerate}
    \item Receive a sample $(x_t, y_t) \sim P$.
    \item Let $\ty_t$ denote the output of $\RRShort(y_t)$. In other words,
    \begin{align*}
    \Pr[\ty_t = \hy] =
    \begin{cases}
    \frac{e^\ep}{e^\ep + K-1} & \text{ if } \hy = y_t \\
    \frac{1}{e^\ep + K-1} & \text{ if } \hy \ne y_t
    \end{cases}
    \end{align*}
    for all $\hy \in [K]$.
    \item Let $g_t = \grad_w \ell(w_t, (x_t, \ty_t))$ and
                                      \begin{align}
        \label{eq:def-hatgt}
\hat g_t = \frac{e^\ep+K-1}{e^\ep - 1} \cdot \left(g_t -  \sum_{k=1}^K \frac{\grad_w \ell(w_t, (x_t, k))}{e^\ep + K-1}\right).
\end{align}
\item Let $w_{t+1} \gets \Pi_{\MW} ( w_t - \eta_t \cdot \hat g_t)$. 
    \end{enumerate}
  \item Output $\hat w := w_{n+1}$. 
  \end{enumerate}
\end{algorithm}

We now restate Theorem \ref{prop:labelsgd-informal} formally below:
\begin{theorem}[Formal version of Theorem \ref{prop:labelsgd-informal}]
  \label{thm:labelsgd-formal}
  For any $\ep \in (0,1)$, the algorithm \LabelRRSGD satisfies the requirement of $\ep$-\LDP; moreover, if run with step size $\eta_t = \frac{D \ep}{6 KL \sqrt{t}}$, its output $\hat w$ satisfies
  $$
\E[\ML(\hat w, P)] - \ML(w^\st, P) \leq O \left( \frac{DL K \log(n)}{\ep \sqrt{n}}\right).
  $$
\end{theorem}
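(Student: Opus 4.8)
\emph{Overall plan.} The statement has two independent halves: the $\ep$-\LDP guarantee and the excess-risk bound. The privacy half is essentially a reprise of Observation~\ref{obs:multist-dp} (each label is consumed exactly once, by $\RRShort$, and everything afterward is post-processing). The risk half is a textbook analysis of projected stochastic subgradient descent on the convex population objective $\ML(\cdot,P)$, once one checks that the debiased vector $\hat g_t$ of \eqref{eq:def-hatgt} is an \emph{unbiased} stochastic gradient at $w_t$ with deterministically bounded norm.

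\emph{Privacy.} Each sensitive label $y_t$ is used only in the call $\ty_t = \RRShort(y_t)$. By the law \eqref{eq:rr-labeldp}, for any fixed output the ratio of $\Pr[\RRShort(y)=\hy]$ over two inputs $y$ is at most $e^{\ep}$, so $\RRShort$ is $\ep$-DP. The quantities $g_t$, $\hat g_t$, the projected updates, and the output $\hat w=w_{n+1}$ are all (randomized) functions of $x_1,\dots,x_n$ (which carry no label information) and of $\ty_1,\dots,\ty_n$. Changing the label of one example $j$ changes only the law of $\ty_j$, and by at most a multiplicative $e^{\ep}$; by post-processing the law of $\hat w$ changes by at most $e^{\ep}$, so \LabelRRSGD is $\ep$-\LDP. (This is exactly the adaptive parallel-composition argument of Observation~\ref{obs:multist-dp}.)

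\emph{Unbiasedness and boundedness of $\hat g_t$.} Write $\nabla\ell_{t,k} := \grad_w\ell(w_t,(x_t,k))$ and condition on the history $\mathcal F_{t-1}$ through step $t-1$ (so $w_t$ and $(x_t,y_t)$ are fixed but $\ty_t$ is not). Using the law of $\ty_t$,
\begin{align*}
\E_{\ty_t}[g_t] = \frac{e^{\ep}}{e^{\ep}+K-1}\,\nabla\ell_{t,y_t} + \sum_{k\ne y_t}\frac{\nabla\ell_{t,k}}{e^{\ep}+K-1} = \frac{e^{\ep}-1}{e^{\ep}+K-1}\,\nabla\ell_{t,y_t} + \sum_{k=1}^K \frac{\nabla\ell_{t,k}}{e^{\ep}+K-1},
\end{align*}
so the definition \eqref{eq:def-hatgt} gives $\E_{\ty_t}[\hat g_t]=\nabla\ell_{t,y_t}$; taking the expectation over $(x_t,y_t)\sim P$ and interchanging gradient and expectation (justified by Assumption~\ref{asm:lipschitz} via dominated convergence) yields $\E[\hat g_t\mid\mathcal F_{t-1}]=\grad_w\ML(w_t,P)$. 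For the norm, Assumption~\ref{asm:lipschitz} gives $\|\nabla\ell_{t,k}\|\le L$ for all $k$, hence $\|g_t\|\le L$ and $\big\|\sum_k \nabla\ell_{t,k}/(e^{\ep}+K-1)\big\|\le KL/(e^{\ep}+K-1)\le L$, so $\|\hat g_t\|\le \frac{e^{\ep}+K-1}{e^{\ep}-1}\cdot 2L\le \frac{6KL}{\ep}$, using $e^{\ep}-1\ge\ep$ and $e^{\ep}+K-1\le 3K$ for $\ep\in(0,1)$. Setting $G:=6KL/\ep$, the prescribed step size is exactly $\eta_t = D/(G\sqrt t)$.

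\emph{SGD convergence and the main obstacle.} Now $\ML(\cdot,P)$ is convex (Assumption~\ref{asm:convexity}) on the diameter-$D$ set $\MW$, and Algorithm~\ref{alg:label-sgd-krr} is projected stochastic subgradient descent with unbiased gradients of norm $\le G$ and $\eta_t=D/(G\sqrt t)$. From non-expansiveness of $\Pi_{\MW}$,
\begin{align*}
\eta_t\langle \hat g_t, w_t-w^{\st}\rangle \le \tfrac12\|w_t-w^{\st}\|^2 - \tfrac12\|w_{t+1}-w^{\st}\|^2 + \tfrac12\eta_t^2\|\hat g_t\|^2,
\end{align*}
and taking expectations, using $\E[\langle\hat g_t,w_t-w^{\st}\rangle\mid\mathcal F_{t-1}]\ge \ML(w_t,P)-\ML(w^{\st},P)$ by convexity and summing over a suffix of iterates bounds the \emph{average} of $\E[\ML(w_t,P)]-\ML(w^{\st},P)$ over that suffix by $O(DG/\sqrt n)$. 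The remaining (and only nonroutine) step is to pass from suffix-averages to the single last iterate $w_{n+1}$ that the algorithm outputs: this is the suffix-averaging argument of Shamir and Zhang for non-smooth SGD, which converts the $O(DG/\sqrt n)$ average bound into $O(DG\log n/\sqrt n)$ for the last iterate (the $\log n$ being inherent to $\eta_t\propto 1/\sqrt t$, since $\sum_{t\le n}\eta_t^2=\Theta((D/G)^2\log n)$). Substituting $G=6KL/\ep$ gives $\E[\ML(\hat w,P)]-\ML(w^{\st},P)\le O\!\big(\frac{DLK\log n}{\ep\sqrt n}\big)$, as claimed. (An alternative avoiding the $\log n$ would output $\frac1n\sum_t w_t$ with constant step size $\eta=D/(G\sqrt n)$; we keep the last iterate to match Algorithm~\ref{alg:label-sgd-krr}.)
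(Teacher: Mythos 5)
Your proposal is correct and follows essentially the same route as the paper's proof: establish privacy via the $\ep$-DP of \RRShort plus post-processing (each label used once), verify that $\hat g_t$ is an unbiased gradient estimate with second moment bounded by $G^2 = (6KL/\ep)^2$, and invoke the Shamir--Zhang last-iterate analysis for non-smooth projected SGD with $\eta_t = D/(G\sqrt{t})$. The only cosmetic difference is that you bound $\|\hat g_t\|$ deterministically while the paper bounds the variance of $\hat g_t - \grad_w\ell(w_t,(x_t,y_t))$; both yield the same $G$ up to constants.
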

We remark that even in the non-private setting, a lower bound of $\Omega(DL/\sqrt{n})$ is known on the excess risk for stochastic convex optimization \cite{nemirovsky_problem_1983,agarwal_information_2009}, meaning that Theorem \ref{thm:labelsgd-formal} is tight up to a factor of $O(K \log n / \eps)$. In Section~\ref{sec:pop-lb}, we improve the lower bound to $\tilde{\Omega}(DL/\sqrt{\eps n})$ for small $\eps \leq 1$ (where $\tilde{\Omega}$ hides a logarithmic factor in $1/\eps$). Hence, our bound above is tight to within a factor of $\tilde{O}(K \log n / \sqrt{\eps})$.

\begin{proof}[Proof of Theorem \ref{thm:labelsgd-formal}]
We first verify the privacy property of \LabelRRSGD. For any two points $(x_t, y_t), (x_t, y_t')$, differing only in their label, if we let $\hat g_t, \hat g_t'$ be the vectors defined in (\ref{eq:def-hatgt}) for each of these points, respectively, then it is immediate from definition of $Q_t$ that for any subset $\MS \subset \BR^p$, $\frac{\Pr[\hat g_t \in \MS]}{\Pr[\hat g_t' \in \MS]} \leq e^\ep$. That \LabelRRSGD is $\ep$-\LDP follows immediately from the post-processing property of DP.
  
Next we establish the uility guarantee. Note that by definition of $\hat g_t$, we have that
\begin{align*}
  \E_{\ty_t} [\hat g_t] & = \frac{e^\ep + K-1}{e^\ep - 1} \cdot \left( \frac{e^\ep \cdot \grad_w \ell(w_t, (x_t, y_t))}{e^\ep + K-1} + \sum_{k\neq y_t} \frac{\grad_w \ell(w_t, (x_t, k))}{e^\ep + K-1} -  \sum_{k=1}^K \frac{\grad_w \ell(w_t, (x_t, k))}{e^\ep + K-1} \right) \\
  & = \grad_w \ell(w_t, (x_t, y_t)),
\end{align*}
i.e., $\hat g_t$ is an unbiased estimate of $\grad_w \ell(w_t, (x_t, y_t))$. 
  
  Next, we bound the variance of the gradient error $\hat g_t - \grad_w \ell(w_t, (x_t, y_t))$, as follows:
  \begin{align}
    & \E_{\ty_t} \left[ \left\| \hat g_t - \grad_w \ell(w_t, (x_t, y_t)) \right\|^2 \right] \nonumber\\
    \leq & 2\left(\frac{2K}{\ep} \right)^2 \cdot \E_{\ty_t} \left[ \left\| \left(g_t -  \sum_{k=1}^K \frac{\grad_w \ell(w_t, (x_t, k))}{e^\ep + K-1}\right)  \right\|^2 \right] + 2 \left\| \grad_w \ell(w_t, (x_t, y_t)) \right\|^2\nonumber\\
    \leq & \frac{32 K^2L^2}{\ep^2} + 2L^2 \leq \frac{36K^2 L^2}{\ep^2}\nonumber,
  \end{align}
  where we have used that $\ell$ is $L$-Lipschitz, $\ep \leq 1$, and that $K \geq 2$.

  Using \citet[Theorem 2]{shamir-stochastic-2013} with gradient moment $G^2 := \frac{36K^2 L^2}{\ep^2}$, we get that for step size choices $\eta_t := \frac{D}{G\sqrt{t}}$, the output $\hat w$ of \LabelRRSGD satisfies
\begin{align*}
& \E[\ML(\hat w, P)] - \ML(w^\st, P) 
\leq O \left( \frac{D G \log n}{\sqrt{n}} \right)  \leq O \left( \frac{DL K \log(n)}{\ep \sqrt{n}}\right). \qedhere
\end{align*}
\end{proof}

Now we prove Corollary \ref{cor:topk-sco}; a formal version of the corollary is stated below.
\begin{corollary}[Formal version of Corollary \ref{cor:topk-sco}] \label{cor:topk-sco-formal}
  Suppose that we are given a prior $\bp^x$ for every $x$ and let $Y^x_k$ denote the set of top-$k$ labels with respect to $\bp^x$. Then, for any $\ep \in (0,1)$, there is an $\ep$-\LDP algorithm which outputs $\hat w \in \MW$ satisfying
  \begin{align}
  \E[\ML(\hat w, P)] - \min_w \ML(w, P)\leq   {O}\left(DL \cdot \left(\frac{k \log n}{\ep \sqrt{n}} + \Pr_{(x, y) \sim P}[y \notin Y^x_k]\right)\right)\label{eq:topk-formal}
  \end{align}
\end{corollary}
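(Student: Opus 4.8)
The plan is to reuse the debiased-SGD template of \LabelRRSGD, with \RRShort replaced by \RRtopk run on the top-$k$ set $Y^x_k$ of the supplied prior. Concretely, the algorithm is identical to \LabelRRSGD except that at step $t$ it computes $Y^{x_t}_k$ from $\bp^{x_t}$, draws $\ty_t \gets \RRtopk(y_t)$ (with parameter $k$ and prior $\bp^{x_t}$), sets $g_t = \grad_w \ell(w_t, (x_t, \ty_t))$ and
\begin{align*}
\hat g_t = \frac{e^\ep + k - 1}{e^\ep - 1}\left(g_t - \sum_{j \in Y^{x_t}_k} \frac{\grad_w \ell(w_t, (x_t, j))}{e^\ep + k - 1}\right),
\end{align*}
takes the projected step $w_{t+1} \gets \Pi_{\MW}(w_t - \eta_t \hat g_t)$, and outputs the weighted average $\hat w := (\sum_t \eta_t w_t)/(\sum_t \eta_t)$. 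Privacy is essentially immediate: $Y^{x_t}_k$ does not depend on $y_t$, \RRtopk is $\ep$-DP by \Cref{lem:rr-top-k-dp}, $\hat g_t$ is a post-processing of $\ty_t$, and each sensitive label is used exactly once, so the argument of \Cref{obs:multist-dp} (post-processing plus parallel composition) shows the algorithm is $\ep$-\LDP.

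For utility, the first step is to compute $\E[\hat g_t \mid w_t, x_t, y_t]$ over the internal randomness of \RRtopk, distinguishing two cases. If $y_t \in Y^{x_t}_k$, then \RRtopk is ordinary randomized response on the alphabet $Y^{x_t}_k$ of size $k$, and the same algebra as in \Cref{thm:labelsgd-formal} (with $K$ replaced by $k$ and sums restricted to $Y^{x_t}_k$) gives $\E[\hat g_t \mid w_t,x_t,y_t] = \grad_w \ell(w_t, (x_t, y_t))$, i.e.\ the estimate is unbiased. If $y_t \notin Y^{x_t}_k$, then $\ty_t$ is uniform on $Y^{x_t}_k$, so $\E[g_t \mid \cdots] = \frac1k \sum_{j\in Y^{x_t}_k} \grad_w\ell(w_t,(x_t,j))$, and the coefficient $\frac{e^\ep+k-1}{e^\ep-1}\bigl(\frac1k - \frac{1}{e^\ep+k-1}\bigr)$ collapses to $\frac1k$, yielding $\E[\hat g_t\mid w_t,x_t,y_t] = \frac1k\sum_{j\in Y^{x_t}_k}\grad_w\ell(w_t,(x_t,j))$. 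In both cases the conditional mean has norm at most $L$, so the bias $b_t := \E[\hat g_t\mid w_t,x_t,y_t] - \grad_w\ell(w_t,(x_t,y_t))$ obeys $\|b_t\| \le 2L\cdot\1[y_t\notin Y^{x_t}_k]$, whence $\E\|b_t\| \le 2L\,\Pr_{(x,y)\sim P}[y\notin Y^x_k]$. For the variance, $\|g_t - \E[g_t\mid\cdots]\| \le 2L$ and $\frac{e^\ep+k-1}{e^\ep-1} = O(k/\ep)$ for $\ep\le 1$, so $\|\hat g_t - \E[\hat g_t\mid\cdots]\| = O(kL/\ep)$ and hence $\E\|\hat g_t\|^2 \le G^2 := O(k^2L^2/\ep^2)$.

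The last step is a projected-SGD analysis for the convex, $L$-Lipschitz population objective $w\mapsto\ML(w,P)$ (which is convex and Lipschitz by Assumptions \ref{asm:convexity} and \ref{asm:lipschitz}), run with these \emph{biased} stochastic gradients. Starting from $\|w_{t+1}-w^\st\|^2 \le \|w_t-w^\st\|^2 - 2\eta_t\langle\hat g_t,w_t-w^\st\rangle + \eta_t^2\|\hat g_t\|^2$, taking expectations, and using convexity together with $\|w_t-w^\st\|\le D$ gives $\E\langle\hat g_t,w_t-w^\st\rangle \ge \E[\ML(w_t,P)-\ML(w^\st,P)] - D\,\E\|b_t\|$; summing over $t$, dividing by $\sum_t\eta_t$, applying Jensen, and setting $\eta_t = \frac{D}{G\sqrt t}$ (so $\sum_t\eta_t = \Theta(D\sqrt n/G)$ and $\sum_t\eta_t^2 = O(D^2\log n/G^2)$) yields
\begin{align*}
\E[\ML(\hat w,P)] - \ML(w^\st,P) \le O\!\left(\frac{DG\log n}{\sqrt n}\right) + 2DL\,\Pr_{(x,y)\sim P}[y\notin Y^x_k],
\end{align*}
which is exactly the claimed bound after substituting $G = O(kL/\ep)$. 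The main (though minor) obstacle is precisely this last step: the convergence result quoted for \Cref{thm:labelsgd-formal} (\citet[Theorem 2]{shamir-stochastic-2013}) assumes unbiased gradients and controls the last iterate, so I would instead output the averaged iterate and carry the extra $D\,\E\|b_t\|$ term explicitly through the elementary analysis above — it is this term that produces the additive $\Pr[y\notin Y^x_k]$ penalty, matching the intuition that a top-$k$ prior which sometimes misses the true label can be exploited only up to its coverage.
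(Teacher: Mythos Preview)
Your proposal is correct. The algorithm you describe coincides with the paper's: applying $\xi$ (replace $y\notin Y^x_k$ by a uniform draw from $Y^x_k$) and then running $K$-ary randomized response on $Y^x_k$ is exactly \RRtopk, so only the \emph{analysis} differs.

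The paper argues at the distribution level rather than at the gradient level. It defines $P'$ as the law of $\xi((x,y))$ with $(x,y)\sim P$, and shows the one-line coupling estimate
\[
\bigl|(\ML(w_1,P)-\ML(w_2,P)) - (\ML(w_1,P')-\ML(w_2,P'))\bigr| \le 2DL\cdot\Pr_{(x,y)\sim P}[y\notin Y^x_k],
\]
for all $w_1,w_2$. Since under $P'$ the label always lies in $Y^x_k$, Theorem~\ref{thm:labelsgd-formal} applies verbatim with $K$ replaced by $k$ (including the last-iterate guarantee via \citet{shamir-stochastic-2013}), and the displayed inequality transfers the excess-risk bound from $P'$ back to $P$. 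This is more modular: no need to reopen the SGD analysis, no need to switch to the averaged iterate, and the $\Pr[y\notin Y^x_k]$ term appears as a distribution-shift penalty rather than a gradient bias. Your route, by contrast, is more self-contained and makes the mechanism behind the additive penalty completely explicit (bias $\le 2L\cdot\1[y\notin Y^x_k]$ propagated through the standard telescoping argument), at the cost of redoing the convergence proof and outputting the weighted average rather than the last iterate.
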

\begin{proof}
  Suppose we are given access to samples $(x, y)$ drawn from a distribution $P$ on $\MX \times [K]$. For a pair $(x, y) \in \MX \times [K]$, define a random pair $\xi((x, y)) \in \MX \times [K]$, by setting $\xi((x, y)) = (x, y)$ if $y \in Y^x_k$, and otherwise letting $\xi((x, y))$ to be drawn uniformly over the set $\{ (x, k') : k' \in Y^x_k\}$. Let $P'$ be the distribution of $\xi((x, y))$, where $(x, y) \sim P$. For any $w_1, w_2 \in \MW$, it follows that
  \begin{align}
    & | (\ML(w_1, P) - \ML(w_2, P)) - (\ML(w_1, P') - \ML(w_2, P')) |\nonumber\\
    = & \left| \int_\MZ [\ell(w_1, (x, y)) - \ell(w_2, (x,y)) ]dP((x, y)) - \int_\MZ [ \ell (w_1, (x, y)) - \ell(w_2, (x,y))] dP'((x, y)) \right| \nonumber\\
    \leq & \left| \int_{\{ (x, y) : y \not \in Y^x_k \}} \left([\ell(w_1, (x, y)) - \ell(w_2, (x,y))] - [\ell(w_1, \xi((x, y))) - \ell(w_2, \xi((x,y)))] \right) dP((x, y)) \right|\nonumber\\
    \leq & 2DL \cdot \Pr_{(x,y) \sim P} [y \not \in Y_k^x],\label{eq:l-p-prime}
  \end{align}
  where the last step uses that $|\ell(w_1, (x,y)) - \ell(w_2, (x,y)) | \leq L \| w_1 - w_2 \| \leq LD$ for all $w_1, w_2 \in \MW$. 
  Now we simply run the algorithm \LabelRRSGD, except that when we receive a point $(x, y) \sim P$, we pass the example $\xi((x, y))$ to \LabelRRSGD (instead of $(x, y)$), and we let the set of possible labels be $Y_k^x$ (instead of $[K]$). Since each such example $\xi((x,y))$ is only passed to \LabelRRSGD once, the resulting allgorithm is still $\ep$-\LDP. Since the label of $\xi((x, y))$ belongs to $Y_k^x$, which has size $k$ for all $x$, Theorem \ref{thm:labelsgd-formal} gives that the output $\hat w$ of \LabelRRSGD satisfies $\E[\ML(\hat w, P')] - \min_{w} \ML(w, P') \leq O \left( \frac{DLk \log(n)}{\ep \sqrt{n}} \right)$. Next (\ref{eq:l-p-prime}) gives that, for any fixed $\hat w$, letting $w_{P'}^\st = \argmin_w \ML(w, P'), w_{P}^\st = \argmin_w \ML(w, P)$,
  \begin{align}
    \ML(\hat w, P) - \ML(w_{P}^\st, P) \leq & \ML(\hat w, P') - \ML(w_P^\st, P') + 2DL \cdot \Pr_{(x,y) \sim P} [y \not \in Y_k^x] \nonumber\\
    \leq &  \ML(\hat w, P') - \ML(w_{P'}^\st, P') + 2DL \cdot \Pr_{(x,y) \sim P} [y \not \in Y_k^x]\label{eq:switch-pprime},
  \end{align}
  where (\ref{eq:switch-pprime}) follows since $\ML(w_{P'}^\st, P') \leq \ML(w_P^\st, P')$ by definition of $w_{P'}^\st$. 
 (\ref{eq:topk-formal}) is an immediate consequence.
\end{proof}

\subsection{A Better Bound for Approximate DP}
Next we introduce an algorithm, \LabelNormalSGD\ (Algorithm \ref{alg:label-sgd}), which shows how to improve upon the excess risk bound of Theorem \ref{thm:labelsgd-formal} by a factor of $\sqrt{K}$, if we relax the privacy requirement to approximate \LDP (i.e., $(\ep, \delta)$-\LDP with $\delta > 0$). \LabelSGD performs a single pass of SGD over the input dataset, with the following modification: it adds a Gaussian noise vector to each gradient vector with nonzero variance {only in the $K$-dimensional subspace $\ML_t$ corresponding to the $K$ possible labels for each point $x_{t}$}. This means that the norm of a typical noise vector scales only as $\sqrt{K}$ as opposed to the scaling $\sqrt{p}$, which similar algorithms for the standard setting of DP (e.g., \cite{bassily-private-2014}) obtain.
\begin{algorithm}[!htp]
\small
  \caption{\bf \LabelNormalSGD}\label{alg:label-sgd}
  \textbf{Input:} Distribution $P$ over $\MX \times [K]$, convex and $L$-Lipschitz loss function $\ell$, privacy parameters $\ep, \delta$, convex parameter space $\MW$, variance factor $\sigma > 0$, step size sequence $\eta_t > 0$.
  \begin{enumerate}[leftmargin=14pt,rightmargin=20pt,itemsep=1pt,topsep=1.5pt]
  \item Choose an initial weight vector $w_1 \in \MW$.
  \item For $t = 1$ to $n$:
    \begin{enumerate}
    \item Receive $(x_t, y_t) \sim P$.
    \item Let $\tilde b_t \sim \MN(0, \sigma^2 I_p)$.
    \item Let $\ML_t \gets \Span \{ \grad_w \ell(w_t, (x_{t}, k)) : k \in [K] \} \subset \BR^p$.
    \item Let $b_t \gets \Pi_{\ML_t} (\tilde b_t)$ denote the Euclidean projection of $\tilde b_t$ onto $\ML_t$.
    \item Let $w_{t+1} \gets \Pi_{\MW} ( w_t - \eta_t \cdot (\grad_w \ell(w_t, (x_{t}, y_{t})) + b_t))$. 
    \end{enumerate}
  \item Output $\hat w := w_{n+1}$. 
  \end{enumerate}
\end{algorithm}

\begin{proposition}
  \label{prop:labelsgd}
  There is a constant $C > 0$ so that the following holds.
  For any $\ep, \delta \in (0,1)$, $\sigma = \frac{CL \sqrt{\log 1/\delta}}{\ep},\ \eta_t = \frac{D}{\sqrt{(L^2 + K\sigma^2) \cdot t}}$, the algorithm \LabelSGD (Algorithm \ref{alg:label-sgd}) is $(\ep, \delta)$-LabelDP and satisfies the following excess risk bound:
  $$
\E[\ML(\hat w, S)] - \ML(w^\st, S) \leq  O \left( \frac{DL \sqrt{K\log 1/\delta}\cdot \log(n)}{\ep \sqrt{n}}\right).
  $$
\end{proposition}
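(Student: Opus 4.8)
The plan is to prove the privacy claim and the excess-risk claim separately, reusing the structure of the proofs of Observation~\ref{obs:multist-dp} and Theorem~\ref{thm:labelsgd-formal} respectively.

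For privacy, I would fix two datasets differing only in the label of one example $j$, say $y_j$ versus $y_j'$, with all features $x_1,\dots,x_n$ and all other labels held fixed, and argue as follows. The crucial structural point is that, for every $t$, the subspace $\ML_t=\Span\{\grad_w\ell(w_t,(x_t,k)):k\in[K]\}$ is a function of $w_t$ and $x_t$ only, and $w_t$ is itself a post-processing function of the noisy gradients from steps $1,\dots,t-1$; in particular $\ML_j$ does not depend on $y_j$. Conditioning on the value of $w_j$, replacing $y_j$ by $y_j'$ therefore changes only the vector $\grad_w\ell(w_j,(x_j,y_j))$, and both this vector and its alternative lie in $\ML_j$ and, by $L$-Lipschitzness, differ in Euclidean norm by at most $2L$. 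Since $b_j=\Pi_{\ML_j}(\tilde b_j)$ with $\tilde b_j\sim\MN(0,\sigma^2 I_p)$ is exactly an isotropic Gaussian with per-coordinate variance $\sigma^2$ on the subspace $\ML_j$ (whose dimension is at most $K$), the map $y_j\mapsto\grad_w\ell(w_j,(x_j,y_j))+b_j$ is the classical Gaussian mechanism with $\ell_2$-sensitivity $2L$ inside $\ML_j$; the standard Gaussian-mechanism bound then shows that, with $\sigma=CL\sqrt{\log(1/\delta)}/\ep$ for a suitable absolute constant $C$ and $\ep\le1$, step $j$ is $(\ep,\delta)$-DP with respect to $y_j$. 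As the projection step, the subsequent updates $w_{j+1},\dots,w_{n+1}$, and the final output $\hat w$ are all post-processing of step $j$'s output, the whole algorithm is $(\ep,\delta)$-DP with respect to $y_j$; since $j$ is arbitrary, this gives $(\ep,\delta)$-\LDP. As in Observation~\ref{obs:multist-dp}, adaptivity (later iterates depending on earlier noisy gradients) is harmless because each sensitive label is used exactly once.

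For the utility bound, I would set $g_t:=\grad_w\ell(w_t,(x_t,y_t))+b_t$ and first check that it is a valid unbiased stochastic gradient: since $\Pi_{\ML_t}$ is linear and $\tilde b_t$ is mean zero, $\E[b_t\mid w_t,x_t]=0$, so $\E[g_t\mid w_t]=\E_{(x_t,y_t)\sim P}[\grad_w\ell(w_t,(x_t,y_t))]=\grad_w\ML(w_t,P)$. Next I would bound its second moment: by independence of $b_t$ from $(x_t,y_t)$ and $\E[b_t]=0$, $\E[\|g_t\|^2\mid w_t]=\E[\|\grad_w\ell(w_t,(x_t,y_t))\|^2]+\E[\|b_t\|^2]\le L^2+K\sigma^2$, using $\|\grad_w\ell\|\le L$ and $\E[\|b_t\|^2]=\sigma^2\dim\ML_t\le K\sigma^2$ (as $\|b_t\|^2\sim\sigma^2\chi^2_{\dim\ML_t}$). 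Taking $G^2:=L^2+K\sigma^2$, the prescribed step sizes are exactly $\eta_t=D/(G\sqrt t)$, so \citet[Theorem~2]{shamir-stochastic-2013} gives $\E[\ML(\hat w,P)]-\ML(w^\st,P)\le O(DG\log n/\sqrt n)$ for the last iterate $\hat w=w_{n+1}$. Substituting $\sigma=CL\sqrt{\log(1/\delta)}/\ep$ gives $G=O(L\sqrt{K\log(1/\delta)}/\ep)$, since the $K\sigma^2$ term dominates $L^2$ for $\ep\le1$, which yields the claimed rate $O(DL\sqrt{K\log(1/\delta)}\log n/(\ep\sqrt n))$.

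The main obstacle is the privacy argument — specifically, making rigorous the claim that perturbing a query by the projection of an isotropic Gaussian onto the data-dependent subspace $\ML_t$ is privacy-equivalent to the ordinary Gaussian mechanism in $\dim\ML_t$ dimensions. This rests on two facts that must be stated carefully: (i) $\ML_t$ is independent of the sensitive coordinate $y_t$, so both neighboring inputs induce the same ambient subspace; and (ii) the sensitivity vector $\grad_w\ell(w_t,(x_t,y_t))-\grad_w\ell(w_t,(x_t,y_t'))$ lies in $\ML_t$, so no privacy is lost to the component of the noise that the projection zeroes out. Everything else — unbiasedness, the $\chi^2$ second-moment bound, and invoking the off-the-shelf SGD rate — is routine and parallels the proof of Theorem~\ref{thm:labelsgd-formal}.
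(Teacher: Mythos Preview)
Your proposal is correct and follows essentially the same approach as the paper's proof: both establish privacy via the Gaussian mechanism applied within the label-dependent subspace $\ML_t$ (using the $2L$ sensitivity bound and the fact that $\ML_t$ is independent of $y_t$), and both establish utility by bounding $\E[\|b_t\|^2]\le K\sigma^2$ and invoking \citet[Theorem~2]{shamir-stochastic-2013} with $G^2=L^2+K\sigma^2$. Your write-up is in fact more explicit than the paper's on the two facts (i) and (ii) that justify treating the projected Gaussian as an ordinary Gaussian mechanism on $\ML_t$; the paper simply asserts that the map $k\mapsto\grad_w\ell(w_t,(x_t,k))+b_t$ is $(\ep,\delta)$-DP for the stated $\sigma$.
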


\iftrue
\begin{proof}[Proof of Proposition \ref{prop:labelsgd}]
  We first argue that the privacy guarantee holds. Note that for any $k,k' \in [n]$, for any $x \in \MX, w \in \MW$, we have $\| \grad_w \ell(w, (x,k)) - \grad_w \ell(w, (x,k')) \| \leq 2L$. Therefore, for any   $w_t \in \MW$, the mechanism
  $$
k \mapsto \grad_w \ell(w_t, (x_{i_t},k)) + b_t
$$
is $(\ep, \delta)$-\DP as long as $\sigma \geq \frac{CL \sqrt{\log 1/\delta}}{\ep}$, for some constant $C > 0$ \citep{dwork2014algorithmic}. Since each $(x_t, y_t)$ is used in only a single iteration of \LabelNormalSGD, it follows from the post-processing of DP that \LabelNormalSGD is $(\ep, \delta)$-\LDP for this choice of $\sigma$.

Next we establish the utility guarantee.
Since, for each $t \in [n]$, $\ML_t$ is a subspace of $\BR^p$ of at most $K$ dimensions, it holds that for each $t$,
$
\E [ \| b_t \|^2] \leq K\sigma^2.
$
Thus $\E \left[\|  \nabla_w \ell(w_t, (x_{i_t}, y_{i_t})) + b_t \|^2\right] \leq L^2 + K\sigma^2 $. 
Using \citet[Theorem 2]{shamir-stochastic-2013} with gradient moment $G^2 := L^2 + K\sigma^2$, we get that for step size choices $\eta_t := \frac{D}{G\sqrt{t}}$, it holds that
\begin{align*}
& \E[\ML(\hat w, S)] - \ML(w^\st, S) 
\leq O \left( \frac{D G \log n}{\sqrt{n}} \right) \leq O \left( \frac{DL \sqrt{K \log 1/\delta} \cdot \log(n)}{\ep \sqrt{n}}\right). \qedhere
\end{align*}
\end{proof}
\fi

\subsection{Lower Bound on Population Risk}
\label{sec:pop-lb}

In this section, we prove the following lower bound on excess risk, which is tight with respect to~\eqref{prop:labelsgd} in Proposition \ref{prop:labelsgd} up to a factor of $\tilde{O}(\sqrt{K/\eps})$.

\begin{proposition} \label{prop:lb-sco}
For any $\eps \in (0, 1], D, L > 0$ and any sufficiently large $n \in \N$ and sufficiently small $\delta > 0$ (both depending on $\eps$), the following holds: for any $(\eps, \delta)$-LabelDP algorithm $\AShort$, there exists a loss function $\ell$ that is $L$-Lipschitz and convex, and a distribution $P$ for which
\begin{align}
  \label{eq:label-priv-sco-lb}
\E_{\tS \sim P^{\otimes n}, \hw \sim \AShort(\tS)}[\ML(\hw, P)] - \ML(w^\st, P) \geq \tilde{\Omega}\left(\frac{DL}{\sqrt{\eps n}}\right).
\end{align}
\end{proposition}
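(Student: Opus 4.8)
The plan is to prove a high-dimensional \emph{sign-recovery} lower bound of a kind that is special to \LDP: the hard instance will have each sample reveal only a single noisy bit, so that the \LDP-neighbor relation on that sample coincides with the ordinary DP-neighbor relation and no amplification of the privacy cost (the sort that fingerprinting exploits for full DP) is available. As a first reduction, it suffices to treat $\eps \le \eps_0$ for a small absolute constant $\eps_0$, since for $\eps > \eps_0$ one has $DL/\sqrt{\eps n} = \Theta(DL/\sqrt{n})$ and the claim already follows from the classical non-private SCO lower bound $\Omega(DL/\sqrt n)$ \cite{nemirovsky_problem_1983,agarwal_information_2009}, which of course also binds $(\eps,\delta)$-\LDP algorithms.

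I would use the following instance. Set $p := \lceil c\eps n\rceil$ for a suitable small constant $c$, let $\MW = \{w\in\BR^p : \|w\|_2 \le D/2\}$ (its diameter is $D$), let $\MX = [p]$, and let $\ell(w,(j,y)) := L\cdot w_j\cdot(1-2y)$, which is convex and $L$-Lipschitz (its gradient is $L(1-2y)e_j$, of norm exactly $L$). For a sign vector $s\in\{\pm1\}^p$, define the distribution $P_s$: draw $j\sim\Unif([p])$ and then $y\sim\Ber(\tfrac12+\gamma s_j)$ for a small absolute constant $\gamma$ (say $\gamma=1/6$). A short computation gives $\ML(w,P_s) = -\tfrac{2\gamma L}{p}\langle w,s\rangle$, hence $w^\st_s = \tfrac{D}{2}\cdot\tfrac{s}{\sqrt p}$ and
\[
\ML(\hat w,P_s) - \ML(w^\st_s,P_s) \;=\; \tfrac{2\gamma L}{p}\Big(\tfrac{D\sqrt p}{2} - \langle \hat w, s\rangle\Big)\qquad\text{for every }\hat w\in\MW .
\]
By Cauchy--Schwarz, if this excess risk is at most $\tfrac{\gamma LD}{16\sqrt p}$ then $\|\hat w - \tfrac{D}{2}\tfrac{s}{\sqrt p}\|_2^2 \le \tfrac{D^2}{32}$, and since every coordinate with $\operatorname{sign}(\hat w_j)\ne s_j$ contributes at least $D^2/(4p)$ to the left-hand side, such a $\hat w$ must recover at least $7p/8$ of the signs. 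So it suffices to show that no $(\eps,\delta)$-\LDP algorithm can recover $\Omega(p)$ of the signs in expectation against a worst-case $s$.

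The crux is this sign-recovery bound. Condition on the realized coordinates $j_1,\dots,j_n$ and write $N_j := |\{i: j_i=j\}|$; call $j$ \emph{good} if $N_j \le \tfrac1{32\gamma\eps}$. For a good $j$, the marginal $\operatorname{sign}(\hat w_j)$ of the output, with the other piles' bits and the internal coins integrated out, is a bona fide $(\eps,\delta)$-DP mechanism applied to the $N_j$ bits of pile $j$ — changing one such bit is exactly an \LDP-neighbor move, and a mixture of $(\eps,\delta)$-DP mechanisms is $(\eps,\delta)$-DP — and those bits are i.i.d.\ $\Ber(\tfrac12+\gamma s_j)$. Coupling $\Ber(\tfrac12+\gamma)^{\otimes N_j}$ and $\Ber(\tfrac12-\gamma)^{\otimes N_j}$ maximally, the two draws disagree on $H$ coordinates with $\E[H]=2\gamma N_j\le \tfrac1{16\eps}$, with $H$ sharply concentrated since $n/p=\Theta(1/\eps)$ is large (here is where $\eps\le\eps_0$ enters). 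Group privacy then puts the two output laws of $\operatorname{sign}(\hat w_j)$ within total variation $O(\eps\,\E[H] + \delta\,\E[H e^{\eps H}]) + e^{-\Omega(1/\eps)}$, which is at most $\tfrac13$ once $\delta \le c'\eps$; hence for good $j$ and $s_j\sim\Unif(\{\pm1\})$, $\Pr[\operatorname{sign}(\hat w_j)\ne s_j]\ge\tfrac13$. A Chernoff bound makes a constant fraction of the $j$'s good, so averaging over $s\sim\Unif(\{\pm1\}^p)$ and the data gives expected mis-recovery $\Omega(p)$, whence some fixed $s^\st$ forces it; a reverse Markov bound together with the link above yields $\E_{\tS\sim P_{s^\st}^{\otimes n},\,\hat w}[\ML(\hat w,P_{s^\st})] - \ML(w^\st_{s^\st},P_{s^\st}) = \Omega(\gamma LD/\sqrt p)=\tilde\Omega(DL/\sqrt{\eps n})$, using $p=\Theta(\eps n)$ and $\gamma=\Theta(1)$. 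The constraint $p\ge 1$ becomes the hypothesis that $n$ is large (of order $1/\eps$), and the privacy step forces $\delta$ small (of order $\eps$, or $\eps^2$ with the crude bookkeeping here), matching the stated assumptions; the construction is in the spirit of the fingerprinting-type SCO lower bounds \cite{bassily-private-2014,BassilyFTT19} but tailored so each record carries only one bit, which is exactly what keeps the rate at $1/\sqrt{\eps n}$ rather than $\sqrt p/(\eps n)$.

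The main obstacle is the per-coordinate reduction of the third paragraph: one must (a) isolate the algorithm's output on a single pile of $\approx n/p$ bits while honestly accounting for its dependence on all the other data and on internal randomness (this is the mixture-of-DP-mechanisms point); (b) control the random pile sizes $N_j$ — the reason the construction only works for $\eps$ small enough that $n/p=\Theta(1/\eps)$ concentrates, and why the regime $\eps>\eps_0$ is peeled off; and (c) push the group-privacy plus Bernoulli-coupling estimate through quantitatively, including the $\delta$-term, so the two hypotheses are genuinely statistically indistinguishable. A minor point is calibrating the constants so that $\Omega(p)$ sign errors really force $\Omega(\gamma LD/\sqrt p)$ expected excess risk via the Cauchy--Schwarz link.
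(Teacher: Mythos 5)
Your proposal is correct in outline and reaches the right bound, but it takes a genuinely different route from the paper. The paper's proof is a three-step pipeline: (i) an \emph{empirical-risk} lower bound of $\Omega(DL/\sqrt{n})$ on a fixed dataset $\{(e_i,y_i)\}_{i\in[n]}$ with the linear loss $\pm L\langle w,e_i\rangle$, where a per-coordinate $(\eps,\delta)$-indistinguishability argument bounds the expected number of correctly signed coordinates and Cauchy--Schwarz converts this into excess empirical risk; (ii) a group-privacy argument (\Cref{fact:group-dp}) in which each record is replicated $\lfloor 1/\eps\rfloor$ times, upgrading the ERM bound to $\Omega(DL/\sqrt{\eps n})$; and (iii) an invocation of the SCO-to-ERM reduction of \citet{BassilyFTT19} to transfer the statement to population risk. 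You instead build the population-level hard instance directly: $p=\Theta(\eps n)$ coordinates, a uniformly random coordinate per sample carrying a single $\Ber(\tfrac12\pm\gamma)$ label, and a per-pile argument combining the mixture-of-DP-mechanisms observation, a maximal coupling of the two product Bernoulli laws, and group privacy over the $O(1/\eps)$ disagreeing labels in a pile. Your route is self-contained --- it avoids the ERM-to-SCO reduction, which the paper imports as a black box and only asserts carries over to \LDP --- and it makes transparent exactly where the $1/\sqrt{\eps}$ gain over the non-private rate comes from (each protected label influences only one of $\Theta(\eps n)$ effectively independent estimation problems). The price is more probabilistic bookkeeping: concentration of the pile sizes $N_j$, the coupling/group-privacy total-variation estimate including the $\delta$-terms, and --- as you note yourself --- the reverse-Markov step at the end, where your current thresholds ($\ge 7p/8$ signs recovered versus a per-good-coordinate error probability of $1/3$) leave only a thin constant margin and would need to be recalibrated (e.g.\ tightening the excess-risk threshold so that failure forces fewer than $p/32$ sign errors) for the final $\Omega(1)$-probability conclusion to go through cleanly. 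These are quantitative rather than structural issues; the argument is sound.
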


We remark that the lower bound of $\Omega(DL/\sqrt{n})$ is well known for \emph{non-private} SCO. This lower bound applies to our setting as well and thus the lower bound in \Cref{prop:lb-sco} can be viewed as an improvement of a factor for $\tilde{\Omega}(1/\sqrt{\eps})$ over the non-private lower bound.

We prove \Cref{eq:label-priv-sco-lb} by first proving an analogous bound in the empirical loss minimization (ERM) setting and then deriving SCO via a known reduction.

\subsection{Lower Bound on Excess Risk for ERM}

Recall that in ERM setting, we are given a set $S = \{(x_1, y_1), \dots, (x_n, y_n)\} \subseteq \MZ$ of $n$ labelled examples. The empirical risk of $w$ is defined as $\ML(w, S) := \frac{1}{n} \sum_{i=1}^n \ell(w, (x, y))$. Here we would like to devise an algorithm that minimizes the excess empirical risk, i.e., $\E[\ML(\hw, S)] - \ML(w^\st, S)$ where $\hw$ is the output of the algorithm and $w^\st := \argmin_{w \in \MW} \ML(w, S)$.

We start by proving the following lower bound on excess risk for \LDP ERM algorithms. Note that the lower bound does not yet grow as $\eps$ decreases; that version of the lower bound will be proved later in this section.

\begin{proposition}
\label{prop:excessrisklb-erm}
For any $\eps, D, L, \delta > 0, K \geq 2$ and $n \in \N$ such that $\eps \leq O(1), \delta \leq 1 - \Omega(1)$, the following holds: for any $(\eps, \delta)$-LabelDP algorithm $\AShort$, there exists a loss function $\ell$ that is $L$-Lipschitz and convex, and a dataset $\tS$ of size $n$ for which
\begin{align}
  \label{eq:label-priv-lb}
\E_{\hw \sim \AShort(\tS)}[\ML(\hw, \tS)] - \ML(w^\st, \tS) \geq \Omega\left(\frac{DL}{\sqrt{n}}\right).
\end{align}
\end{proposition}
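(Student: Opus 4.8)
The plan is to exhibit a single hard ERM instance on which $\ep$-label privacy provably prevents recovering a constant fraction of the labels, with the loss chosen so that this failure costs excess empirical risk $\Omega(DL/\sqrt n)$. Since scaling $\MW$ by $D$ and $\ell$ by $L$ scales the excess risk of every algorithm by $DL$, I would reduce to the case $D=L=1$. I would take $p=n$, feature space $\MX=[n]$, and parameter space $\MW=[0,1/\sqrt n]^n\subseteq\BR^n$, whose Euclidean diameter is exactly $1$ (the statement imposes no restriction on $p$). Using only two label values, say $y\in\{0,1\}\subseteq[K]$ (so the construction works for every $K\ge2$), define $\ell(w,(i,y)):=|w_i-y/\sqrt n|$, which is convex in $w$ and $1$-Lipschitz in the Euclidean norm since its gradient is $\pm e_i$. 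For $b\in\{0,1\}^n$ set $\tS_b:=\{(1,b_1),\dots,(n,b_n)\}$, a dataset of size $n$; then $\ML(w,\tS_b)=\frac1n\sum_{i=1}^n|w_i-b_i/\sqrt n|$ is minimized to value $0$ at $w^\st=b/\sqrt n\in\MW$. Rounding each coordinate of an output $\hw\in\MW$ to the nearer of $\{0,1/\sqrt n\}$ and recording $\hat b_i\in\{0,1\}$, one has $|\hw_i-b_i/\sqrt n|\ge\frac1{2\sqrt n}\1[\hat b_i\ne b_i]$, hence $\ML(\hw,\tS_b)-\ML(w^\st,\tS_b)\ge\frac1{2n\sqrt n}\sum_{i=1}^n\1[\hat b_i\ne b_i]$. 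So it is enough to show that for every $(\ep,\delta)$-\LDP algorithm $\AShort$ there is a $b$ with $\E_{\hw\sim\AShort(\tS_b)}[\#\{i:\hat b_i\ne b_i\}]=\Omega(n)$.

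I would then establish this last claim — a standard ``private reconstruction'' fact — coordinate by coordinate. Draw $b$ uniformly from $\{0,1\}^n$, fix a coordinate $i$, and condition on $b_{-i}$: the datasets $\tS_{(b_{-i},0)}$ and $\tS_{(b_{-i},1)}$ differ only in the label of example $i$, so by Definition~\ref{def:ldp_general} the laws of $\hat b_i$ under the two are $(\ep,\delta)$-indistinguishable. Writing $a=\p[\hat b_i=0\mid b_i=0]$ and $a'=\p[\hat b_i=0\mid b_i=1]$ (both conditioned on $b_{-i}$) and applying the privacy inequality to the event $\{\hat b_i=0\}$ gives $a\le e^\ep a'+\delta$, so $\p[\hat b_i\ne b_i\mid b_{-i}]=\tfrac12\big((1-a)+a'\big)\ge\tfrac12\big(2-e^\ep-\delta\big)$, a positive constant once $\ep$ is a sufficiently small constant and $\delta\le1-\Omega(1)$. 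Averaging over $b_{-i}$ and summing over $i$ (only linearity of expectation is used — no independence across coordinates is needed) gives $\E_b\E_{\hw}[\#\{i:\hat b_i\ne b_i\}]=\Omega(n)$, so some fixed $b$ realizes this in expectation; calling $\tS$ the corresponding dataset and combining with the previous paragraph (and undoing the $D=L=1$ normalization) yields \eqref{eq:label-priv-lb}.

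The main obstacle will be picking the right hard instance rather than the privacy bookkeeping. The naive temptation is to make $w^\st$ depend on the dataset only through its ``majority label''; but that quantity has sensitivity $1$ under a single label flip, so it is privately recoverable at constant $\ep$ (e.g.\ by discrete Laplace noise on the label sum), and such a loss would give at best an $\Omega(DL/n)$ bound. The point is to use a loss whose minimizer encodes \emph{every} one of the $n$ labels, so that the (easy) per-coordinate $(\ep,\delta)$-indistinguishability compounds additively over $n$ coordinates and produces the $\sqrt n$ improvement. The remaining ingredients — convexity and Lipschitzness of $\ell$, the diameter computation, the short privacy calculation, and the rescaling to general $D,L$ — are routine.
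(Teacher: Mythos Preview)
Your proposal is correct and the overall architecture---one binary-labeled example per coordinate in an $n$-dimensional instance, a per-coordinate indistinguishability bound, then a probabilistic-method choice of $b$---is the same as the paper's. The ingredients differ, though. The paper takes $\MW$ to be the Euclidean ball of radius $D/2$ and the \emph{linear} loss $\ell(w,(e_i,y))=\pm L\langle w,e_i\rangle$; the optimizer is $w^\st=\frac{D}{2\sqrt n}\sum_i \ty_i e_i$ with $\ML(w^\st,S)=-\frac{DL}{2\sqrt n}$, and the link between ``number of correctly-signed coordinates'' and the loss is made via Cauchy--Schwarz. Your separable loss $|w_i-b_i/\sqrt n|$ on the hypercube $[0,1/\sqrt n]^n$ makes that link trivial (a per-coordinate rounding bound), so the argument is a bit more elementary. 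On the other hand, the paper's privacy step yields the tighter bound $\Pr[\text{correct sign on }i]\le\frac{e^\eps+\delta}{e^\eps+1}$ (equivalently $\Pr[\text{error}]\ge\frac{1-\delta}{e^\eps+1}$), which is positive for \emph{every} bounded $\eps$ once $\delta\le 1-\Omega(1)$; your inequality $(1-a)+a'\ge 2-e^\eps-\delta$ is a one-sided use of the DP constraint and needs $e^\eps+\delta<2$, so it only covers sufficiently small constants $\eps$. Using both directions (adding $a\le e^\eps a'+\delta$ and $1-a'\le e^\eps(1-a)+\delta$) recovers the paper's range. One cosmetic point: $0\notin[K]$, so your labels should be, e.g., $\{1,2\}$ with $\ell(w,(i,y))=|w_i-(y-1)/\sqrt n|$.
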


\iftrue
\begin{proof}
Let $\MW := \{w \in \R^d : \|\bw\| \leq D/2\}$ and $\MX := \{x \in \R^d: \|x\| \leq 1\}$. We define the loss to be
\begin{align*}
\ell(w, (x, y)) :=
\begin{cases}
L \cdot \left<w, x\right> & \text{ if } y = 1, \\
-L \cdot \left<w, x\right> & \text{ if } y = 2, \\
0 & \text{ otherwise.}
\end{cases}
\end{align*}
Note that the diameter of $\MW$ is $D$ and $\ell(\cdot, (x, y))$ is convex and $L$-Lipschitz. Consider any $(\eps, \delta)$-\LDP algorithm $\AShort$. Let $e_i \in \R^n$ be the $i$th standard basis vector. Consider a dataset $S = \{(e_1, y_1), \dots, (e_n, y_n)\}$ where $y_1, \dots, y_n \in \{1, 2\}$ are random labels which are $1$ w.p. $0.5$ and $2$ otherwise. For notational convenience, we write $\ty_i$ to denote $2y_i - 3 \in \{-1, 1\}$. By the $(\eps, \delta)$-\LDP guarantee of $\AShort$, we have
\begin{align*}
&\Pr_{S, \hw \sim \AShort(S)}[\ty_i \cdot \left<\hw, e_i\right> > 0] \\
&= \frac{1}{2} \Pr_{S, \hw \sim \AShort(S)}[\left<\hw, e_i\right> < 0 \mid \ty_i = -1] \\
&\qquad + \frac{1}{2} \Pr_{S, \hw \sim \AShort(S)}[\left<\hw, e_i\right> > 0 \mid \ty_i = 1] \\
&\leq \frac{1}{2} \cdot \left(e^{\eps} \cdot \Pr_{S, \hw \sim \AShort(S)}[\left<\hw, e_i\right> < 0 \mid \ty_i = 1] + \delta\right) \\
&\qquad + \frac{1}{2} \left(e^{\eps} \cdot \Pr_{S, \hw \sim \AShort(S)}[\left<\hw, e_i\right> > 0 \mid \ty_i = -1] + \delta\right) \\
&= e^{\eps} \cdot \Pr_{S, \hw \sim \AShort(S)}[\ty_i \cdot \left<\hw, e_i\right> < 0] + \delta.
\end{align*}
This implies that
\begin{align} \label{eq:dot-product-incorrect-sign}
\Pr_{S, \hw \sim \AShort(S)}[\ty_i \cdot \left<\hw, e_i\right> > 0] \leq \frac{e^{\eps} +  \delta}{e^{\eps} + 1}.
\end{align}
Letting $I_{\hw, S} := \{i \in [n] : \ty_i \cdot \left<\hw, e_i\right> > 0\}$ for any $S$, $\hw$,
\begin{align}
\E_{S, \hw \sim \AShort(S)}[|I_{\hw, S}|] &= \sum_{i \in [n]} \Pr_{S, \hw \sim \AShort(S)}[\ty_i \cdot \left<\hw, e_i\right> > 0] \nonumber \\ 
&\overset{\eqref{eq:dot-product-incorrect-sign}}{\leq} \left(\frac{e^{\eps} + \delta}{e^{\eps} + 1}\right)n. \label{eq:dot-product-incorrect-sign-size}
\end{align}
Consider any $S$ as generated above; it is obvious to see that $w^\st = \frac{D}{2} \cdot \left(\frac{1}{\sqrt{n}} \sum_{i \in [n]} \ty_i e_i\right)$, which results in $\ML(w^\st, S) = -\frac{DL}{2\sqrt{n}}$. On the other hand, for any $\hw$,
\begin{align}
& \ML(\hw, S)
= \frac{1}{n} \sum_{i \in [n]} \ell(\hw, (e_i, y_i)) = \frac{1}{n} \sum_{i \in [n]} -L \left<\hw, \ty_i \cdot e_i\right> \nonumber \\
&\geq \frac{1}{n} \sum_{i \in I_{\hw, S}} -L \left<\hw, \ty_i \cdot e_i\right> \nonumber = \frac{-L}{n} \left<\hw, \sum_{i \in I_{\hw, S}} \ty_i \cdot e_i\right> \nonumber \\
&\geq \frac{-L}{n} \cdot \|\hw\| \cdot \left\|\sum_{i \in I_{\hw, S}} \ty_i \cdot e_i\right\| \geq \frac{-L}{n} \cdot \frac{D}{2} \cdot \sqrt{|I_{\hw, S}|}, \label{eq:alg-loss}
\end{align}
where we used Cauchy--Schwarz inequality in the second inequality above. As a result, we have
\begin{align*}
&\E_S[\E_{\hw \sim \AShort(S)}[\ML(\hw, S)] - \ML(w^\st, S)] \\
&= \E_{S, \hw \sim \AShort(S)}[\ML(\hw, S)] + \frac{DL}{2\sqrt{n}} \\
&\overset{\eqref{eq:alg-loss}}{\geq} \frac{-DL}{2n} \cdot \E_{S, \hw \sim \AShort(S)}\left[\sqrt{|I_{\hw, S}|}\right] + \frac{DL}{2\sqrt{n}} \\
&\geq \frac{-DL}{2n} \cdot \sqrt{\E_{S, \hw \sim \AShort(S)}\left[|I_{\hw, S}|\right]} + \frac{DL}{2\sqrt{n}} \\
&\overset{\eqref{eq:dot-product-incorrect-sign-size}}{\geq} \frac{DL}{2\sqrt{n}} \left(-\sqrt{\frac{e^{\eps} + \delta}{e^{\eps} + 1}} + 1\right) \\
&\geq \Omega(DL/\sqrt{n}),
\end{align*}
where the second inequality follows from Cauchy--Schwarz inequality and the last inequality follows from our assumption that $\delta \leq 1 - \Omega(1)$ and $\eps \leq O(1)$.
\end{proof}
\fi

To make the lower bound above grows with $1/\sqrt{\eps}$ for $\eps \leq 1$, we will apply the technique used in \cite{SteinkeU16}. Recall that a pair of datasets are said to be $k$-neighbor if they differ in at most $k$ labels. The following is a well-known bound, so-called \emph{group privacy}; see e.g. \citet[Fact 2.3]{SteinkeU16}. (Typically this fact is stated for the standard DP but it applies to \LDP in the same manner.)

\begin{fact} \label{fact:group-dp}
Let $\AShort$ be any $(\eps, \delta)$-\LDP algorithm. Then, for any $k$-neighboring database $S, S'$ and every subset $T$ of the output, we have $\Pr[\AShort(S) \subseteq T] \leq e^{k\eps} \cdot \Pr[\AShort(S') \subseteq T] + \frac{e^{k\eps} - 1}{e^{\eps} - 1} \cdot \delta$.
\end{fact}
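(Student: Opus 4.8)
The statement is the classical \emph{group privacy} amplification, and the natural way to establish it is by induction on the group size $k$, i.e., on the number of labels in which $S$ and $S'$ disagree. The base case $k=1$ is exactly the definition of $(\eps,\delta)$-\LDP, since $\frac{e^{\eps}-1}{e^{\eps}-1}=1$. For the inductive step, assume the bound for all $(k-1)$-neighbors and let $S,S'$ be $k$-neighbors. Pick one example on which they disagree and let $S''$ be the hybrid dataset that agrees with $S'$ on that one example and with $S$ on every other example, so that $S,S''$ are $1$-neighbors and $S'',S'$ are $(k-1)$-neighbors. Applying the $(\eps,\delta)$-\LDP guarantee to $(S,S'')$ and the inductive hypothesis to $(S'',S')$, and composing, gives for every output event $T$
\[
\Pr[\AShort(S)\in T]\ \le\ e^{(k-1)\eps}\Bigl(e^{\eps}\Pr[\AShort(S')\in T]+\delta\Bigr)+\frac{e^{(k-1)\eps}-1}{e^{\eps}-1}\,\delta .
\]

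It remains only to simplify the coefficient of $\delta$ via the telescoping identity $e^{(k-1)\eps}+\frac{e^{(k-1)\eps}-1}{e^{\eps}-1}=\frac{e^{(k-1)\eps}(e^{\eps}-1)+e^{(k-1)\eps}-1}{e^{\eps}-1}=\frac{e^{k\eps}-1}{e^{\eps}-1}$, which reproduces the claimed inequality. (Equivalently, without induction: chain $S=S_0,S_1,\dots,S_k=S'$ through $1$-neighbors, apply the $(\eps,\delta)$ guarantee $k$ times, and sum the geometric series $\delta\sum_{j=0}^{k-1}e^{j\eps}=\frac{e^{k\eps}-1}{e^{\eps}-1}\delta$.) Frankly there is no real obstacle here; the only points needing a bit of care are that the hybrid $S''$ differs from $S'$ in \emph{at most} $k-1$ labels --- which is fine because ``$k$-neighbor'' is defined with ``at most'', so the induction does not stall if $S,S'$ happen to differ in fewer than $k$ labels --- and that the \LDP inequality is always applied in the direction that upper-bounds $\Pr[\AShort(S)\in T]$.

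For context, this is precisely the tool that will boost the ERM lower bound from $\Omega(DL/\sqrt{n})$ to $\tilde{\Omega}(DL/\sqrt{\eps n})$: given an $(\eps,\delta)$-\LDP algorithm with $\eps\le 1$, feed it a size-$n$ dataset made of $k=\Theta(1/\eps)$ identical copies of each point of a ``core'' dataset of size $m=\Theta(\eps n)$ (padding with zero-loss dummy examples as needed). As a function of the core dataset, group privacy (the inequality above) shows this composed algorithm is $(k\eps,\delta')$-\LDP with $k\eps=O(1)$ and $\delta'=\frac{e^{k\eps}-1}{e^{\eps}-1}\delta=O(\delta/\eps)\le 1-\Omega(1)$ once $\delta$ is small relative to $\eps$; \Cref{prop:excessrisklb-erm} applied at scale $m$ then gives excess empirical risk $\Omega(DL/\sqrt{m})=\Omega(DL/\sqrt{\eps n})$ (after undoing the harmless $km/n=\Theta(1)$ rescaling of the empirical loss), and a standard reduction from average-case ERM lower bounds to SCO lower bounds yields \Cref{prop:lb-sco}. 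The real work, and where a $\mathrm{polylog}(1/\eps)$ slack (hence $\tilde{\Omega}$) can creep in, is this last ERM-to-SCO transfer, since realizing the repeated-copies dataset as i.i.d.\ draws from a fixed distribution only controls the per-point multiplicities up to concentration.
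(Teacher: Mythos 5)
Your proof is correct. The paper does not prove this fact at all---it states it as the well-known group-privacy bound and cites \citet[Fact 2.3]{SteinkeU16}---and your induction (equivalently, the chain of $k$ hybrids with the geometric series $\delta\sum_{j=0}^{k-1}e^{j\eps}$) is exactly the standard argument behind that citation; the telescoping identity is verified correctly. One cosmetic remark: the displayed inequality in your inductive step corresponds to applying the inductive hypothesis to the pair $(S,S'')$ and the one-step guarantee to $(S'',S')$, which is the opposite of the order your prose describes (you set up $S,S''$ as $1$-neighbors and $S'',S'$ as $(k-1)$-neighbors); both orderings yield the same final bound, so nothing is affected, but the words and the display should be made consistent.
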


We can now prove the following lower bound that grows with $1 / \sqrt{\eps}$ by simplying replicating each element $1/\eps$ times.

\begin{lemma}
\label{lem:excessrisklb-erm-eps}
For any $\eps' \in (0, 1], D, L, \delta' > 0, K \geq 2$ and $n \in \N$ such that $n \geq 1/\gamma, \delta' \leq \Omega(\eps')$, the following holds: for any $(\eps', \delta')$-LabelDP algorithm $\AShort'$, there exists a loss function $\ell$ that is $L$-Lipschitz and convex, and a dataset $\tS'$ of size $n$ for which
\begin{align}
  \label{eq:label-priv-lb-eps}
\E_{\hw \sim \AShort'(\tS')}[\ML(\hw, \tS')] - \ML(w^\st, \tS') \geq \Omega\left(\frac{DL}{\sqrt{\eps' n}}\right).
\end{align}
\end{lemma}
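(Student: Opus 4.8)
The plan is to apply the replication (a.k.a.\ ``padding'') technique of \cite{SteinkeU16}: blow up each example of a smaller dataset into $k \approx 1/\eps'$ copies and reduce to \Cref{prop:excessrisklb-erm}, using group privacy (\Cref{fact:group-dp}) to control the resulting degradation of the privacy parameters. Concretely, set $k := \lceil 1/\eps' \rceil$ and $m := \lfloor n/k \rfloor$; for $n$ large enough compared to $1/\eps'$ we have $m \ge 1$, $m \le \eps' n$, $km \le n$, and $km \ge n/2$. Given any $(\eps',\delta')$-\LDP algorithm $\AShort'$ on datasets of size $n$, define an algorithm $\AShort$ acting on size-$m$ datasets $S = \{(x_1,y_1),\dots,(x_m,y_m)\}$ by forming the size-$n$ dataset $S'$ consisting of $k$ copies of each $(x_i,y_i)$ together with $n - km$ ``dummy'' examples whose feature vector is $0$ (which lies in the unit ball), and returning $\AShort'(S')$.

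The first step is the privacy reduction. If two size-$m$ datasets $S,\bar S$ differ in a single label, their replications $S',\bar S'$ differ in exactly $k$ labels, i.e.\ are $k$-neighbors; hence \Cref{fact:group-dp} shows that $\AShort$ is $(k\eps',\,\delta)$-\LDP with $\delta := \frac{e^{k\eps'}-1}{e^{\eps'}-1}\,\delta'$. Now $k\eps' \le 1 + \eps' \le 2$, and since $e^{\eps'}-1 \ge \eps'$ we get $\frac{e^{k\eps'}-1}{e^{\eps'}-1} \le \frac{2e^{2}}{\eps'}$; together with the hypothesis $\delta' \le \Omega(\eps')$ this yields $\delta \le 1 - \Omega(1)$. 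Thus $\AShort$ falls in the parameter regime ($\eps \le O(1)$, $\delta \le 1-\Omega(1)$) required by \Cref{prop:excessrisklb-erm}.

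The second step is to apply \Cref{prop:excessrisklb-erm} to $\AShort$ with dataset size $m$: this produces the fixed $L$-Lipschitz convex linear loss $\ell$ on a ball $\MW \subset \R^m$ of diameter $D$, and a dataset $\tilde S$ of size $m$ with $\E_{\hw\sim\AShort(\tilde S)}[\ML(\hw,\tilde S)] - \ML(w^\st,\tilde S) \ge \Omega(DL/\sqrt{m})$. Take $\tS'$ to be the size-$n$ replication of $\tilde S$ defined above. Since replication scales every summand of the empirical risk by $k$ and the zero-feature dummies contribute $0$, we have $\ML(w,\tS') = \frac{km}{n}\,\ML(w,\tilde S)$ for every $w$; in particular the two risks share the same minimizer, and because $\hw \sim \AShort'(\tS')$ is distributed exactly as $\AShort(\tilde S)$,
\begin{align*}
& \E_{\hw\sim\AShort'(\tS')}[\ML(\hw,\tS')] - \ML(w^\st,\tS') \\
& = \frac{km}{n}\left(\E_{\hw\sim\AShort(\tilde S)}[\ML(\hw,\tilde S)] - \ML(w^\st,\tilde S)\right) \ge \frac{1}{2}\cdot\Omega\!\left(\frac{DL}{\sqrt{m}}\right),
\end{align*}
using $km/n \ge 1/2$. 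Finally $m \le \eps' n$ gives $1/\sqrt{m} \ge 1/\sqrt{\eps' n}$, so the right-hand side is $\Omega(DL/\sqrt{\eps' n})$, completing the proof.

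I expect the substantive content to be entirely the replication-plus-group-privacy idea (already signposted in the text), while the only points needing care are bookkeeping: choosing the dummy examples with zero feature vector so that the reduction works uniformly for all $K \ge 2$ without needing a ``null'' label, and tracking the benign rescaling factor $km/n \in [\tfrac12, 1]$ alongside the estimate $m \le \eps' n$ when passing from the size-$m$ bound to the size-$n$ bound.
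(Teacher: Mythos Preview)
Your proof is correct and follows the same replication-plus-group-privacy reduction as the paper. The only differences are presentational: you argue directly rather than by contradiction, and you are more careful with the bookkeeping (handling the case $k \nmid n$ via zero-feature dummies and tracking the $km/n$ factor), whereas the paper implicitly assumes exact divisibility and relies on the empirical risk being unchanged under exact replication.
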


\begin{proof}
Suppose for the sake of contradiction there exists $(\eps, \delta)$-\LDP algorithm $\AShort'$ such that $\E_{\hw \sim \AShort'(\tS')}[\ML(\hw, \tS')] - \ML(w^\st, \tS') \leq o\left(\frac{DL}{\sqrt{\eps' n}}\right)$. Let $k = \lfloor 1/\eps\rfloor$. We construct an algorithm $\AShort$ as follows: on input $\tS$, it replicates each element of $\tS$ $k$ times to construct a dataset $\tS'$. It then returns $\AShort'(\tS')$. From the utility guarantee of $\AShort'$, we have $\E_{\hw \sim \AShort(\tS)}[\ML(\hw, \tS)] - \ML(w^\st, \tS) \leq o\left(\frac{DL}{\sqrt{n}}\right)$. Furthermore, \Cref{fact:group-dp} ensures that $\AShort$ is $(\eps, \delta)$-DP for $\eps = k\eps' \leq 1$ and $\delta = \frac{e^{k \eps'} - 1}{e^{\eps'} - 1} \delta' \leq O(\delta' / \eps')$. When $\delta' = C/\eps'$ for any sufficiently small $C > 0$, $\AShort$ violates \Cref{prop:excessrisklb-erm}, concluding our proof.
\end{proof}

\subsection{From ERM to SCO}

Bassily et al.~\cite{BassilyFTT19}\footnote{See the proof in Appendix D of the arXiv version of their paper~\cite{bassily-sco-arxiv}.} gave a reduction from private SCO to private ERM. Although this bound is proved in the context of standard (both label and sample) DP, it is not hard to see that a similar bound holds for \LDP with exactly the same proof. To summarize, their proof yields the following bound:

\begin{lemma}
For any $\gamma, \eps > 0$ and $\delta \in (0, 1/2)$, suppose that there is an $(\frac{\eps}{4 \log(2/\delta)}, \frac{e^{-\eps} \delta}{8 \log(2/\delta)})$-\LDP algorithm that yields expected excess population risk of for SCO is at most $\gamma$. Then, there exists an $(\eps, \delta)$-\LDP algorithm for convex ERM (with the same parameters $D, L, n$) with excess empirical risk at most $\gamma$.
\end{lemma}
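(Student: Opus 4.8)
The plan is to establish this through the standard sampling-with-replacement reduction of \citet{BassilyFTT19}, observing that the only tools it uses---the post-processing property and group privacy---transfer verbatim from standard DP to \LDP. Let $\AShort$ be the hypothesized SCO algorithm, which is $(\eps_0,\delta_0)$-\LDP with $\eps_0 = \tfrac{\eps}{4\log(2/\delta)}$ and $\delta_0 = \tfrac{e^{-\eps}\delta}{8\log(2/\delta)}$. Given an ERM instance with dataset $S=(z_1,\dots,z_n)$, define an ERM algorithm $\AShort'$ that draws indices $i_1,\dots,i_n$ i.i.d.\ uniformly from $[n]$, forms the resampled dataset $S_R := (z_{i_1},\dots,z_{i_n})$, runs $\AShort$ on $S_R$, and returns its output $\hw$. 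The parameter space $\MW$, the loss $\ell$ (hence $L$), and the sample size $n$ are unchanged, so $\AShort'$ indeed has the same parameters $D,L,n$.

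\emph{Utility.} The dataset $S_R$ consists of $n$ i.i.d.\ draws from the distribution $\Unif(S)$, whose population risk equals the empirical risk $\ML(\cdot,S)$. Hence the SCO guarantee of $\AShort$ directly yields $\E[\ML(\hw,S)] - \min_{w\in\MW}\ML(w,S) \le \gamma$, i.e., $\AShort'$ has expected excess empirical risk at most $\gamma$.

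\emph{Privacy.} Fix neighbouring ERM datasets $S,S'$ differing only in the label of one index $i^\st$, and couple the two runs of $\AShort'$ by using the same index sequence $i_1,\dots,i_n$ for both. Then $S_R$ and $S'_R$ differ in exactly $M := |\{t\in[n]: i_t = i^\st\}| \sim \Bin(n,1/n)$ label positions, and $M$ is independent of $\AShort$'s internal randomness. Put $K^\st := 4\log(2/\delta)$; a routine binomial tail bound ($\Pr[\Bin(n,1/n) \ge k] \le 2/k!$, and $\E[M]=1$) gives $\Pr[M > K^\st] \le \delta/2$ for every $\delta \in (0,1/2)$. Fix an output set $T$ and write $p(\tilde S) := \Pr[\AShort(\tilde S) \in T]$ (probability over $\AShort$'s randomness only). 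Splitting on the value of $M$: on $\{M > K^\st\}$ use the trivial bound $p(S_R) \le 1$; on $\{M \le K^\st\}$, group privacy (Fact~\ref{fact:group-dp}) gives $p(S_R) \le e^{M\eps_0} p(S'_R) + \tfrac{e^{M\eps_0}-1}{e^{\eps_0}-1}\delta_0$. Taking expectation over the coupled resampling and using $M\eps_0 \le K^\st \eps_0 = \eps$ together with $\tfrac{e^{M\eps_0}-1}{e^{\eps_0}-1} \le M e^{M\eps_0} \le K^\st e^{\eps}$,
$$\Pr[\AShort'(S) \in T] = \E_R[p(S_R)] \,\le\, e^\eps\,\E_R[p(S'_R)] + K^\st e^\eps \delta_0 + \Pr[M > K^\st] \,\le\, e^\eps\Pr[\AShort'(S')\in T] + \delta,$$
where the last step uses $K^\st e^\eps \delta_0 = \delta/2$ by the choice of $\delta_0$ and $\Pr[M > K^\st] \le \delta/2$. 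Post-processing then shows that $\AShort'$ (whose output is $\hw$) is $(\eps,\delta)$-\LDP.

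\emph{Main obstacle.} Conceptually there is none: the reduction and its utility analysis are entirely routine. The one point that needs care in the privacy accounting is the order of operations---the rare high-multiplicity event $\{M > K^\st\}$ must be discarded by the trivial bound $p(S_R)\le 1$ \emph{before} group privacy is invoked, since on that event the amplification factor $e^{M\eps_0}$ is otherwise uncontrolled. The remainder is bookkeeping: confirming that the stated $\eps_0,\delta_0$ make the two error terms exactly $\delta/2$, and that the binomial tail estimate holds throughout $\delta \in (0,1/2)$.
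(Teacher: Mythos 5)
Your proposal is correct and follows exactly the route the paper takes: the paper proves this lemma only by citing the sampling-with-replacement reduction of \citet{BassilyFTT19} and remarking that it transfers verbatim to the label-DP setting, and your argument is a faithful reconstruction of that reduction (resample, observe $\ML(\cdot,\Unif(S))=\ML(\cdot,S)$ for utility, and combine the binomial tail cutoff at $4\log(2/\delta)$ with group privacy via Fact~\ref{fact:group-dp} for privacy). The only difference is that you spell out the bookkeeping the paper leaves implicit, and you do so correctly, including the crucial ordering of discarding the high-multiplicity event before invoking group privacy.
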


Plugging this into \Cref{lem:excessrisklb-erm-eps}, we arrive at \Cref{prop:lb-sco}.
\nc{\tP}{\tilde{P}}
\nc{\tell}{\tilde{\ell}}
\nc{\tML}{\tilde{\ML}}
\nc{\bP}{\bar{P}}
\nc{\bell}{\bar{\ell}}
\nc{\bML}{\bar{\ML}}

\section{Generalization Bounds for RR with Prior}

Let $\MX, \MZ$ be similar to the previous section and $\MY = [K]$ be the class of labels. We consider a setting where there is a concept class $\MF$ of functions $f: \MX \to \mathbb{R}$. Given $n$ samples drawn i.i.d. from some distribution $P$ on $\MZ$, we would like to output a function $f$ with a small \emph{population risk}, which is defined as $\ML(f; P) = \E_{(x,y) \sim P}[\ell(f(x), (x, y))].$, where $\ell: \mathbb{R} \times \MZ \to [0, 1]$ is a loss function. Throughout this section, we assume that $\ell$ is $L$-Lipschitz (\Cref{asm:lipschitz}).

\textbf{Priors and Randomized Response.} Let $k \le K$ be a positive integer. We work in the same setting as \Cref{cor:topk-sco-formal}, i.e. we assume a prior $\bp^x$ for every $x$ and let $Y^x_k$ denote the set of top-$k$ labels with respect to $\bp^x$. We let $\tP$ be the distribution where we first draw $(x, y) \sim P$ and then output $(x, \ty)$ where $\ty \sim \RRtopk_{\bp^x}(y)$ with DP parameter $\epsilon$.

\textbf{Debiased Loss Function.}
Let $p_{k, \eps}$ denote $\frac{1}{e^{\eps} + k - 1}$.
We consider a debiased version of the loss $\ell$; this was done before in \cite{natarajan2013learning} for the case of binary classification with noisy labels. In our setting, it generalizes to the following definition:
\begin{equation}\label{eq:ell_tilde_from_ell}
    \tilde{\ell}(t, (x, y)) := \frac{1}{1 - k \cdot p_{k, \epsilon}} \cdot \bigg(\ell(t, (x, y)) - \sum_{y' \in Y^{x}_k} p_{k, \epsilon} \cdot \ell(t, (x, y')) \bigg).
\end{equation}

For a set $S$ of $n$ labeled examples $(x_1, y_1), \dots, (x_n, y_n) \in \MZ$, its \emph{empirical risk} (w.r.t loss $\tell$) as $\tML(f; S) = \frac{1}{n} \sum_{i=1}^n \tell(f(x_i), (x_i, y_i)).$

We consider simple $\eps$-\LDP algorithm that randomly draws $n$ i.i.d. samples $S$ from $P$, apply ($\eps$-\LDP) $\RRtopk$ on each of the label to get a randomized dataset $\tS$, and finally apply empirical risk minimization w.r.t. the debiased loss function $\tell$ on $\tS$. We remark that this algorithm is exactly the same as drawing $n$ samples i.i.d. from $\tP$ and apply empirical risk minimization (again w.r.t. $\tell$). Our main result of this section is a generalization bound roughly saying that the empirical risk (w.r.t. $\tell$) is small iff the popultion risk (w.r.t. $\ell$) is small. This is stated more formally below, where $\MR_{n, D}(\MF)$ denote the Rademacher Complexity of $\MF$ (defined below in \Cref{def:rad}).

\begin{theorem}\label{th:RR_with_Prior_ERM}
Let $P_\MX$ be the marginal of $P$ over $\MX$. Let $\tS$ be a set of $n$ i.i.d. labeled samples drawn from $\tP$. Then, with probability at least $1-\beta$, the following holds for all $f \in \MF$:
\begin{equation}\label{ineq:fixed_k}
    |\tML(f; \tS) - \ML(f; P)| \le 2 L\cdot\frac{1 + k \cdot p_{k, \epsilon}}{1 - k \cdot p_{k, \epsilon}}\cdot \mathcal{R}_{n, P_\MX}(\mathcal{F}) + \sqrt{\frac{\log(2/\beta)}{2 n}} + \Pr_{(x,y) \sim P}[y \notin Y^x_k].
\end{equation}
\end{theorem}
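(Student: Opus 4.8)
The plan is to write $\tML(f;\tS)-\ML(f;P)$ as the sum of a deterministic \emph{bias} term, arising from the randomized-response corruption, and a \emph{uniform deviation} term controlled by the Rademacher complexity of $\MF$. The heart of the argument is that the scaling $\tfrac{1}{1-k\,p_{k,\eps}}$ and the correction $\sum_{y'\in Y^x_k}p_{k,\eps}\,\ell(t,(x,y'))$ in \eqref{eq:ell_tilde_from_ell} are calibrated precisely so that, in expectation over $\ty\sim\RRtopk_{\bp^x}(y)$, the debiased loss $\tell$ agrees with $\ell$ whenever the true label belongs to $Y^x_k$.

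\emph{Step 1 (bias).} First I would fix $f\in\MF$ and a point $(x,y)$ and compute $\E_{\ty\sim\RRtopk_{\bp^x}(y)}[\tell(f(x),(x,\ty))]$. Using that $\RRtopk_{\bp^x}(y)$ returns $y$ with probability $e^{\eps}p_{k,\eps}=1-(k-1)p_{k,\eps}$ and each other element of $Y^x_k$ with probability $p_{k,\eps}$ when $y\in Y^x_k$, and is uniform over $Y^x_k$ when $y\notin Y^x_k$, a direct substitution into \eqref{eq:ell_tilde_from_ell} shows that
\[
\E_{\ty\sim\RRtopk_{\bp^x}(y)}\bigl[\tell(f(x),(x,\ty))\bigr]=
\begin{cases}
\ell(f(x),(x,y)), & y\in Y^x_k,\\[2pt]
\tfrac{1}{k}\sum_{y'\in Y^x_k}\ell(f(x),(x,y')), & y\notin Y^x_k.
\end{cases}
\]
Both right-hand sides lie in $[0,1]$ and coincide outside the event $\{y\notin Y^x_k\}$, on which they differ by at most $1$; averaging over $(x,y)\sim P$ then gives $\bigl|\E_{\tS\sim\tP^{\otimes n}}[\tML(f;\tS)]-\ML(f;P)\bigr|\le\Pr_{(x,y)\sim P}[y\notin Y^x_k]$ for every $f\in\MF$, which is the last term of \eqref{ineq:fixed_k}.

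\emph{Step 2 (uniform deviation).} Next I would bound $\sup_{f\in\MF}\bigl|\tML(f;\tS)-\E_{\tS'}[\tML(f;\tS')]\bigr|$ by the standard symmetrization plus bounded-differences route, using two elementary properties of $\tell$. Put $R:=\tfrac{1+k\,p_{k,\eps}}{1-k\,p_{k,\eps}}$. Since $\ell\in[0,1]$ and the coefficients in \eqref{eq:ell_tilde_from_ell} have absolute values summing to at most $R$, the loss $\tell(t,(x,y))$ ranges over an interval of width at most $R$ (so altering one coordinate of $\tS$ moves $\tML(f;\tS)$ by at most $R/n$); and since each $\ell(\cdot,z)$ is $L$-Lipschitz, the map $t\mapsto\tell(t,(x,y))$ is $(LR)$-Lipschitz. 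By the Ledoux--Talagrand contraction inequality (in the version that permits the per-example Lipschitz maps $t\mapsto\tell(t,(x_i,y_i))$), the Rademacher complexity of $\{(x,y)\mapsto\tell(f(x),(x,y)):f\in\MF\}$ is at most $LR\cdot\MR_{n,P_\MX}(\MF)$, so the usual symmetrization bound gives $\E_{\tS}\bigl[\sup_{f}\bigl(\E_{\tS'}[\tML(f;\tS')]-\tML(f;\tS)\bigr)\bigr]\le 2LR\cdot\MR_{n,P_\MX}(\MF)$. Combining this with McDiarmid's inequality and a union bound over the two signs yields, with probability at least $1-\beta$, $\sup_{f}\bigl|\tML(f;\tS)-\E_{\tS'}[\tML(f;\tS')]\bigr|\le 2LR\cdot\MR_{n,P_\MX}(\MF)+\sqrt{\log(2/\beta)/(2n)}$, and \eqref{ineq:fixed_k} then follows from the triangle inequality $|\tML(f;\tS)-\ML(f;P)|\le|\tML(f;\tS)-\E_{\tS'}[\tML(f;\tS')]|+|\E_{\tS'}[\tML(f;\tS')]-\ML(f;P)|$ together with Step 1.

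\emph{Main obstacle.} The only step that is not routine is Step 1: verifying that the normalization in \eqref{eq:ell_tilde_from_ell} exactly cancels the bias introduced by $\RRtopk$ on $\{y\in Y^x_k\}$ and pinning down the $[0,1]$-bounded residual on its complement (which is exactly what produces the $\Pr_{(x,y)\sim P}[y\notin Y^x_k]$ term). Once this identity is available, Step 2 is standard Rademacher-complexity bookkeeping---boundedness, Lipschitz composition, symmetrization, and bounded differences---so I anticipate no further difficulty there; the only point to be slightly careful about is that the contraction lemma must be applied in the form that allows the Lipschitz map to depend on the example index $i$.
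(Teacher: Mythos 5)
Your proposal is correct and follows essentially the same route as the paper: the same triangle-inequality decomposition into a bias term (controlled by the debiasing identity for $\RRtopk$, which yields the $\Pr_{(x,y)\sim P}[y\notin Y^x_k]$ term exactly as in the paper's Lemmas on debiasing and bias bounding) and a uniform-deviation term handled by symmetrization and Ledoux--Talagrand contraction with Lipschitz constant $L\cdot\frac{1+k\,p_{k,\eps}}{1-k\,p_{k,\eps}}$. The only differences are cosmetic---you unfold the standard Rademacher generalization bound (symmetrization plus McDiarmid) where the paper cites it as a black box, and both arguments share the same minor imprecision of applying the concentration step as if $\tell$ were $[0,1]$-valued even though its range has width $\frac{1+k\,p_{k,\eps}}{1-k\,p_{k,\eps}}$.
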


Via standard techniques (see e.g. \cite{natarajan2013learning}), the above bound imply that the empirical risk minimizer incurs excess loss similar to the bound in~\Cref{ineq:fixed_k} (within a factor of 2).

Recall that $\RRtopk$ can of course be thought of $\RRP$ in the case when e.g. the prior $\bp^x$ is uniform over the $k$ labels in $Y^x_k$. Thus, \Cref{th:RR_with_Prior_ERM} can be viewed as a generalization bound for $\RRP$ with these ``uniform top-$k$'' priors. 
\subsection{Additional Preliminaries}

To prove~\Cref{th:RR_with_Prior_ERM}, we need several additional observations and definitions. In addition to the previously defined $\ML(f; P), \tML(f; S)$, we analogously use $\tML(f; P), \ML(f; S)$ to denote the population risk w.r.t. $\tell$ on distribution $P$ and the empirical risk w.r.t. $\ell$ on the labeled sample set $S$ respectively.

\paragraph{Properties of the Debiased Loss Function.}
We will start by proving a few basic properties of the debiased loss functions. The first two lemmas are simple to check:

\begin{lemma}\label{lem:debiasing}
If $y \in Y^x_k$, it holds that $\E_{\ty \sim \RRtopk_{\bp^x}(y)}[\tilde{\ell}(t, (x, \ty))] = \ell(t, (x, y))$.
\end{lemma}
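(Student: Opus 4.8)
The plan is a direct computation: expand the expectation over $\ty \sim \RRtopk_{\bp^x}(y)$ using the explicit output distribution of $\RRtopk$ from Algorithm~\ref{alg:rr-top-k}, plug in the definition~\eqref{eq:ell_tilde_from_ell} of $\tell$, and simplify. Since $y \in Y^x_k$, the relevant case of $\RRtopk$ is the middle branch: $\Pr[\ty = y] = \frac{e^\eps}{e^\eps+k-1} = 1 - (k-1)p_{k,\eps}$ and $\Pr[\ty = y'] = p_{k,\eps}$ for each $y' \in Y^x_k \setminus \{y\}$, and $\ty$ is supported on $Y^x_k$. I would record this identity $\frac{e^\eps}{e^\eps+k-1} = 1 - (k-1)p_{k,\eps}$ as the first step, since it is what makes the algebra cancel.

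Next I would write $\E_{\ty}[\tell(t,(x,\ty))] = \frac{1}{1-k p_{k,\eps}}\Big(\sum_{\hat y \in Y^x_k}\Pr[\ty=\hat y]\,\ell(t,(x,\hat y)) - \sum_{\hat y \in Y^x_k}\Pr[\ty=\hat y]\sum_{y'\in Y^x_k} p_{k,\eps}\,\ell(t,(x,y'))\Big)$, using linearity of expectation together with the fact that the second (correction) term of $\tell$ does not depend on the label argument. The inner double sum collapses because $\sum_{\hat y\in Y^x_k}\Pr[\ty=\hat y] = 1$, leaving $\sum_{y'\in Y^x_k} p_{k,\eps}\,\ell(t,(x,y'))$. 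For the first sum I would split off the $\hat y = y$ term, writing it as $(1-(k-1)p_{k,\eps})\,\ell(t,(x,y)) + \sum_{y'\in Y^x_k\setminus\{y\}} p_{k,\eps}\,\ell(t,(x,y'))$.

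Then subtracting: the sums over $Y^x_k \setminus \{y\}$ cancel, and what remains is $\ell(t,(x,y)) - (k-1)p_{k,\eps}\ell(t,(x,y)) - p_{k,\eps}\ell(t,(x,y)) = (1 - k p_{k,\eps})\,\ell(t,(x,y))$. Dividing by $1 - k p_{k,\eps}$ yields exactly $\ell(t,(x,y))$, as claimed. I do not expect any genuine obstacle here — the only thing to be careful about is bookkeeping of which index the $p_{k,\eps}$ weight attaches to (the diagonal term $y$ versus the off-diagonal terms), and making sure the "correction" term of $\tell$ is treated as a constant under the $\ty$-expectation. One could also present the argument more slickly by noting that $\E_\ty[\tell(t,(x,\ty))]$ is affine in the vector $(\ell(t,(x,y')))_{y'\in Y^x_k}$ and checking the coefficient of each $\ell(t,(x,y'))$ separately, but the term-by-term cancellation above is already short enough to write out in full.
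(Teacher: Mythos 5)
Your proof is correct and follows essentially the same route as the paper's: compute the distribution of $\RRtopk(y)$ on $Y^x_k$ using $\tfrac{e^\eps}{e^\eps+k-1}=1-(k-1)p_{k,\eps}$, note that the correction term in $\tilde{\ell}$ is constant under the expectation over $\ty$, and cancel. The algebra checks out, so nothing further is needed.
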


\begin{lemma}\label{le:lip_tilde}
$\tilde{\ell}$ is $L \cdot \frac{1 + k  \cdot p_{k, \epsilon}}{1- k  \cdot p_{k, \epsilon}}$-Lipschitz (in $t$ for every fixed $x, y$).
\end{lemma}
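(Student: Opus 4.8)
The plan is to unfold the definition~\eqref{eq:ell_tilde_from_ell} of $\tilde{\ell}$ and bound the increment in $t$ directly by the triangle inequality. Fix $x$ and $y$, and take any $t_1, t_2 \in \mathbb{R}$. Since the set $Y^x_k$ and the constant $p_{k,\epsilon}$ do not depend on $t$, we have
\[
\tilde{\ell}(t_1,(x,y)) - \tilde{\ell}(t_2,(x,y)) = \frac{1}{1 - k\, p_{k,\epsilon}}\left( \big(\ell(t_1,(x,y)) - \ell(t_2,(x,y))\big) - \sum_{y' \in Y^x_k} p_{k,\epsilon}\,\big(\ell(t_1,(x,y')) - \ell(t_2,(x,y'))\big)\right).
\]

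First I would record that the normalizing factor is well-defined and positive: since $\epsilon > 0$ we have $p_{k,\epsilon} = \frac{1}{e^{\epsilon} + k - 1} < \frac{1}{k}$, hence $1 - k\, p_{k,\epsilon} > 0$. Then, taking absolute values and applying the triangle inequality together with $|Y^x_k| = k$, the right-hand side is at most
\[
\frac{1}{1 - k\, p_{k,\epsilon}}\left( |\ell(t_1,(x,y)) - \ell(t_2,(x,y))| + \sum_{y' \in Y^x_k} p_{k,\epsilon}\, |\ell(t_1,(x,y')) - \ell(t_2,(x,y'))|\right).
\]
Next I would invoke Assumption~\ref{asm:lipschitz} ($\ell$ is $L$-Lipschitz in $t$ for every fixed second argument) to bound each term $|\ell(t_1,\cdot) - \ell(t_2,\cdot)|$ by $L\,|t_1 - t_2|$; since the sum ranges over the $k$ elements of $Y^x_k$ and each summand carries weight $p_{k,\epsilon}$, this yields
\[
|\tilde{\ell}(t_1,(x,y)) - \tilde{\ell}(t_2,(x,y))| \le \frac{1 + k\, p_{k,\epsilon}}{1 - k\, p_{k,\epsilon}}\, L\,|t_1 - t_2|,
\]
which is exactly the claimed Lipschitz bound.

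There is essentially no obstacle here; the proof is the triangle inequality plus the observation that the top-$k$ set has exactly $k$ elements. The only point needing a word of care is the positivity of $1 - k\, p_{k,\epsilon}$ — equivalently, that the debiasing rescaling factor is a genuine positive number — which is precisely where the standing assumption $\epsilon > 0$ is used.
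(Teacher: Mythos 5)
Your proof is correct and matches the paper's approach: the paper simply notes that the bound follows directly from the definition~\eqref{eq:ell_tilde_from_ell} together with $|Y^x_k| = k$, which is exactly the triangle-inequality computation you carry out. Your additional remark that $1 - k\,p_{k,\epsilon} > 0$ (since $p_{k,\epsilon} = \frac{1}{e^{\epsilon}+k-1} < \frac{1}{k}$ for $\epsilon > 0$) is a worthwhile sanity check that the paper leaves implicit.
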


Finally, we observe that the population risk w.r.t. $\tell$ on distribution $\tP$ is close to that w.r.t. $\ell$ on $P$:

\begin{lemma}\label{le:bias_bounding}
For any function $f$, we have
\begin{equation}\label{ineq:bias_bound_notin}
    |\ML(f; P) - \tML(f, \tP)| \le \Pr_{(x,y) \sim P}[y \notin Y^x_k].
\end{equation}
\end{lemma}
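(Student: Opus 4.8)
The plan is to use the tower property to write
$$\tML(f;\tP) = \E_{(x,y)\sim P}\!\left[\,\E_{\ty \sim \RRtopk_{\bp^x}(y)}[\tell(f(x),(x,\ty))]\,\right]$$
and then split the outer expectation according to whether the true label $y$ lies in the top-$k$ set $Y^x_k$. First I would dispatch the event $y \in Y^x_k$: by \Cref{lem:debiasing}, the inner conditional expectation equals exactly $\ell(f(x),(x,y))$ on this event, so these terms match the corresponding contribution to $\ML(f;P)$ term-by-term and cancel in the difference $\ML(f;P)-\tML(f;\tP)$.

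Next I would handle the event $y \notin Y^x_k$. By Step~3 of \RRtopk, here $\ty$ is drawn uniformly from $Y^x_k$, so the inner expectation is $\frac1k\sum_{\ty\in Y^x_k}\tell(f(x),(x,\ty))$. Plugging in the definition~\eqref{eq:ell_tilde_from_ell} of $\tell$, the ``subtracted'' term $\sum_{y'\in Y^x_k}p_{k,\eps}\,\ell(t,(x,y'))$ gets averaged over the $k$ choices of $\ty$ and combines with the normalizing factor $\tfrac{1}{1-k p_{k,\eps}}$ so that everything collapses to $\frac1k\sum_{\ty\in Y^x_k}\ell(f(x),(x,\ty))$. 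Since $\ell$ is $[0,1]$-valued, both this quantity and $\ell(f(x),(x,y))$ lie in $[0,1]$, so on the event $y\notin Y^x_k$ the integrand of $\ML(f;P)-\tML(f;\tP)$ is bounded in absolute value by $1$.

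Combining the two cases and applying the triangle inequality (move the absolute value inside the expectation), one gets $|\ML(f;P)-\tML(f;\tP)| \le \E_{(x,y)\sim P}[\1[y\notin Y^x_k]] = \Pr_{(x,y)\sim P}[y\notin Y^x_k]$, which is the claim. There is no genuine obstacle here; the only point requiring a little care is the averaging identity $\frac1k\sum_{\ty\in Y^x_k}\tell(t,(x,\ty)) = \frac1k\sum_{\ty\in Y^x_k}\ell(t,(x,\ty))$ and remembering to invoke the \emph{boundedness} of $\ell$ (not just its Lipschitzness, which is what \Cref{le:lip_tilde} records) to control the off-support contribution.
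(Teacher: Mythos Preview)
Your proposal is correct and follows essentially the same route as the paper: write $\tML(f;\tP)$ via the tower property, move the absolute value inside, use \Cref{lem:debiasing} to kill the $y\in Y^x_k$ part, and bound the $y\notin Y^x_k$ part by $1$ using $\ell\in[0,1]$. If anything, your treatment of the off-support case is more careful than the paper's---you make explicit the averaging identity $\frac1k\sum_{\ty\in Y^x_k}\tell(t,(x,\ty))=\frac1k\sum_{\ty\in Y^x_k}\ell(t,(x,\ty))$, which is needed to conclude that the inner conditional expectation lies in $[0,1]$ (since $\tell$ itself need not be $[0,1]$-valued), whereas the paper glosses over this.
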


\begin{proof}
We can write
\begin{align*}
    |\ML(f; P) - \tML(f; \tP)| &= |\E_{(x, y) \sim P}[\ell(f, (x, y))] - \E_{(x, y) \sim P, \ty \sim \RRtopk_{\bp^x}(y)}[\ell(f, (x, \ty))]| \\
    &\le \E_{(x, y) \sim P}[|\ell(f, (x, y)) - \E_{\ty \sim \RRtopk_{\bp^x}(y)}[\ell(f, (x, \ty))]|].
\end{align*}
Due to \Cref{lem:debiasing}, the inner term is zero whenever $y \in Y^x_k$; furthermore, since the range of $\ell$ is in $[0, 1]$, the last term is at most $\Pr_{(x,y) \sim P}[y \notin Y^x_k]$ as desired.
\end{proof}

\paragraph{Rademacher Complexity.}
Given a space $\mathcal{V}$ and a distribution $D$ over $\mathcal{V}$, we let $S$ be a set of examples $v_1, \dots, v_n$ drawn i.i.d. from $D$. We also let $\mathcal{F}$ be a class of functions $f:\mathcal{V} \to \mathbb{R}$.
\begin{defn}[Empirical Rademacher Complexity]
The \emph{empirical Rademacher complexity} of $\mathcal{F}$ is defined as:
\begin{equation}
    \hat{\mathcal{R}}_{n, S}(\mathcal{F}) = \E_{\sigma_1, \dots, \sigma_n}\bigg[ \sup_{f \in \mathcal{F}} \bigg( \frac{1}{n} \sum_{i=1}^n \sigma_i f(v_i) \bigg)\bigg],
\end{equation}
where $\sigma_1, \dots, \sigma_n$ are i.i.d. random variables sampled uniformly at random from $\{\pm 1\}$.
\end{defn}

\begin{defn}[Rademacher Complexity] \label{def:rad}
The Rademacher complexity of $\mathcal{F}$ is defined as
\begin{equation}
    \mathcal{R}_{n, D}(\mathcal{F}) = \E[\hat{\mathcal{R}}_{n, S}(\mathcal{F})],
\end{equation}
where the expectation is over the randomness of the subset $S$ which consists of $n$ elements chosen i.i.d. from $D$.
\end{defn}

We also need the following two known lemmas.

\begin{lemma}[\cite{bousquet2003introduction}]\label{le:rademacher_generalization}
Let $D$ be a distribution and $\beta \in (0,1)$. If $\mathcal{F} \subseteq \{f: \mathcal{V} \to [0, 1]\}$ and $S = \{v_1, \dots, v_n\}$ consists of $n$ elements drawn i.i.d. from $D$, then with probability at least $1-\beta$ over the randomness of $S$, for every function $f \in \mathcal{F}$, it holds that
\begin{equation}
    \left|\E_{v \sim D}[f(v)] - \frac{1}{n} \sum_{i = 1}^n f(v_i)\right| \leq 2 \mathcal{R}_{n, D}(\mathcal{F}) + \sqrt{\frac{\ln(2/\beta)}{n}}.
\end{equation}
\end{lemma}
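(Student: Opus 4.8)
The plan is the textbook two-step argument: first a concentration step via McDiarmid's bounded-differences inequality, then a symmetrization step relating the expected uniform deviation to the Rademacher complexity. Throughout, set $S=\{v_1,\dots,v_n\}$ and, for brevity, write $\widehat{\E}_S[f] := \tfrac1n\sum_{i=1}^n f(v_i)$.

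\textbf{Step 1 (concentration).} Define $g(v_1,\dots,v_n) := \sup_{f \in \mathcal{F}}\left(\E_{v\sim D}[f(v)] - \widehat{\E}_S[f]\right)$. Since each $f \in \mathcal{F}$ takes values in $[0,1]$, replacing a single coordinate $v_j$ by any other point changes $\tfrac1n f(v_j)$ by at most $\tfrac1n$, hence changes $g$ by at most $\tfrac1n$; thus $g$ has the bounded-differences property with constants $c_i = 1/n$, so $\sum_i c_i^2 = 1/n$. McDiarmid's inequality then gives that with probability at least $1 - \beta/2$ over the draw of $S$,
\[
g(S) \;\le\; \E_S[g(S)] + \sqrt{\tfrac{\ln(2/\beta)}{2n}}.
\]
The same argument applied to $h(S) := \sup_{f\in\mathcal F}\left(\widehat{\E}_S[f] - \E_{v\sim D}[f(v)]\right)$ yields the analogous inequality, also with probability at least $1-\beta/2$.

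\textbf{Step 2 (symmetrization).} I bound $\E_S[g(S)]$ by $2\,\mathcal{R}_{n,D}(\mathcal F)$. Introduce a ghost sample $S' = \{v_1',\dots,v_n'\}$ drawn i.i.d.\ from $D$ and independent of $S$. Writing $\E_{v\sim D}[f(v)] = \E_{S'}\big[\widehat{\E}_{S'}[f]\big]$ and using that the supremum over $f$ dominates each member (so a supremum may be pulled outside an expectation at the cost of an inequality), we get
\[
\E_S[g(S)] \;\le\; \E_{S,S'}\left[\sup_{f\in\mathcal F}\tfrac1n\sum_{i=1}^n \big(f(v_i') - f(v_i)\big)\right].
\]
For each $i$, the pair $(v_i,v_i')$ is exchangeable, so inserting an independent Rademacher sign $\sigma_i$ in front of the $i$-th summand leaves the expectation unchanged; bounding the supremum of the sum by the sum of two suprema and relabelling $-\sigma_i \to \sigma_i$ (which has the same distribution) gives
\[
\E_S[g(S)] \;\le\; \E_{S',\sigma}\left[\sup_f \tfrac1n\sum_i \sigma_i f(v_i')\right] + \E_{S,\sigma}\left[\sup_f \tfrac1n\sum_i \sigma_i f(v_i)\right] = 2\,\mathcal R_{n,D}(\mathcal F),
\]
where the last equality unfolds the definition $\mathcal R_{n,D}(\mathcal F) = \E_S[\hat{\mathcal R}_{n,S}(\mathcal F)]$. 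The identical bound holds for $\E_S[h(S)]$.

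\textbf{Conclusion.} Union-bounding over the two events from Step 1, with probability at least $1-\beta$ both $g(S)$ and $h(S)$ are at most $2\mathcal R_{n,D}(\mathcal F) + \sqrt{\ln(2/\beta)/(2n)}$. Since $\big|\E_{v\sim D}[f(v)] - \widehat{\E}_S[f]\big| \le \max\{g(S),h(S)\}$ for every $f \in \mathcal F$, and $\sqrt{\ln(2/\beta)/(2n)} \le \sqrt{\ln(2/\beta)/n}$, the stated bound follows. The only subtle point is the symmetrization step — in particular the introduction of the ghost sample and the justification that inserting Rademacher signs does not change the expectation — but this is entirely standard (cf.\ \cite{bousquet2003introduction}).
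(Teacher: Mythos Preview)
Your proof is correct and is precisely the standard McDiarmid-plus-symmetrization argument. Note that the paper does not provide its own proof of this lemma: it is stated as a known result with a citation to \cite{bousquet2003introduction}, so there is nothing to compare against beyond observing that your argument is the textbook one underlying that reference.
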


The following lemma is a standard bound for the empirical Rademacher complexity (and follows from the Ledoux-Talagrand contraction inequality \cite{ledoux2013probability}).
\begin{lemma}\label{le:lip_comp}
Let $\mathcal{F} \subseteq \{f: \mathcal{X} \to \mathbb{R}\}$. Let $S$ be a multiset of $n$ (possibly repeated) elements $v_1, \dots, v_n \in \mathcal{X}$. Moreover, let $\Phi_1, \dots, \Phi_n$ be $L$-Lipschitz functions mapping $\mathbb{R}$ to $\mathbb{R}$. Then, it holds that
\begin{equation}
    \E_{\sigma_1, \dots, \sigma_n}\bigg[ \sup_{f \in \mathcal{F}} \bigg( \frac{1}{n} \sum_{i=1}^n \sigma_i \Phi_i(f(v_i)) \bigg)\bigg] \le L \cdot \hat{\mathcal{R}}_{n, S}(\mathcal{F}).
\end{equation}
\end{lemma}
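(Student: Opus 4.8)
The plan is to prove this as the standard ``peel one coordinate at a time'' argument that underlies the Ledoux--Talagrand contraction inequality. First I would make two routine normalizations. Since the same $\tfrac1n$ appears on both sides (it is inside the definition of $\hat{\MR}_{n,S}(\MF)$), I clear it. Since the left-hand side is $0$ when $L=0$ (then every $\Phi_i$ is constant, so the sum inside $\sup_f$ does not depend on $f$ and has zero $\sigma$-expectation), I may assume $L>0$; then $\psi_i := \Phi_i/L$ is $1$-Lipschitz and the left-hand side equals $L \cdot \E_{\sigma}\big[\sup_f \tfrac1n\sum_i \sigma_i \psi_i(f(v_i))\big]$, so it suffices to treat $L=1$, i.e. to show
\[
\E_{\sigma}\Big[\sup_{f\in\MF}\sum_{i=1}^n \sigma_i\,\Phi_i(f(v_i))\Big]\ \le\ \E_{\sigma}\Big[\sup_{f\in\MF}\sum_{i=1}^n \sigma_i\, f(v_i)\Big]
\]
with each $\Phi_i$ being $1$-Lipschitz.

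The core step is a one-coordinate lemma: for \emph{any} function $h\colon\MF\to\R$, any point $v$, and any $1$-Lipschitz $\phi\colon\R\to\R$,
\[
\E_{\sigma\in\{\pm1\}}\Big[\sup_{f\in\MF}\big(h(f)+\sigma\,\phi(f(v))\big)\Big]\ \le\ \E_{\sigma\in\{\pm1\}}\Big[\sup_{f\in\MF}\big(h(f)+\sigma\, f(v)\big)\Big].
\]
To prove it I expand each $\E_{\sigma}$ as the average of its $\sigma=+1$ and $\sigma=-1$ values, fix $\eta>0$, and pick $f_1,f_2\in\MF$ that are $\eta$-optimal for $\sup_f\big(h(f)+\phi(f(v))\big)$ and for $\sup_f\big(h(f)-\phi(f(v))\big)$ respectively, so that the left-hand side is at most $\tfrac12\big(h(f_1)+\phi(f_1(v))+h(f_2)-\phi(f_2(v))\big)+\eta$. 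Then I split into two cases by the sign of $f_1(v)-f_2(v)$ and use $\phi(f_1(v))-\phi(f_2(v))\le|f_1(v)-f_2(v)|$: if $f_1(v)\ge f_2(v)$ I bound the bracket by $(h(f_1)+f_1(v))+(h(f_2)-f_2(v))$, and otherwise by $(h(f_1)-f_1(v))+(h(f_2)+f_2(v))$; in both cases this is at most $\sup_f(h(f)+f(v))+\sup_f(h(f)-f(v))$, so the left-hand side is $\le \E_{\sigma}\sup_f(h(f)+\sigma f(v))+\eta$, and letting $\eta\to0$ closes the lemma. It is worth noting that this uses only the Lipschitz bound and \emph{not} $\phi(0)=0$ — precisely because there is no absolute value around the sum, which is the version needed here.

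To assemble the full claim I set up hybrids $A_j := \E_{\sigma}\big[\sup_{f\in\MF}\big(\sum_{i\le j}\sigma_i\Phi_i(f(v_i))+\sum_{i>j}\sigma_i f(v_i)\big)\big]$ for $j=0,\dots,n$, so $A_n$ is the left-hand side above and $A_0 = n\,\hat{\MR}_{n,S}(\MF)$. For each $j$ I condition on the signs $\sigma_i$ with $i\ne j$; with those frozen, the expression inside $\sup_f$ is $h(f)+\sigma_j\Phi_j(f(v_j))$ with $h(f)=\sum_{i<j}\sigma_i\Phi_i(f(v_i))+\sum_{i>j}\sigma_i f(v_i)$, and the one-coordinate lemma (applied to the $1$-Lipschitz $\Phi_j$) gives $\E_{\sigma_j}\sup_f(h(f)+\sigma_j\Phi_j(f(v_j)))\le\E_{\sigma_j}\sup_f(h(f)+\sigma_j f(v_j))$; averaging over the frozen signs yields $A_j\le A_{j-1}$. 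Telescoping gives $A_n\le A_0$, which after reinstating the $\tfrac1n$ and the factor $L$ is exactly the stated bound. I expect the one-coordinate lemma to be the only delicate point — specifically, remembering that the $\eta$-optimal witnesses for the two suprema are in general \emph{different} functions, and arranging the two-case argument so the Lipschitz slack is absorbed into the correct pairing; the multiset structure of $S$ (repeated $v_i$'s allowed) plays no role, and the usual measurability caveats for suprema over $\MF$ are handled by the standard separability conventions.
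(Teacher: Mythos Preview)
Your argument is the standard ``peel one coordinate at a time'' proof of the Ledoux--Talagrand contraction inequality, and it is correct as written; in particular, your observation that the centering condition $\phi(0)=0$ is unnecessary here (because no absolute value is taken outside the sum) is the right point for this version of the statement.

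For comparison with the paper: the paper does not actually prove this lemma. It states it as a ``standard bound for the empirical Rademacher complexity'' and simply cites the Ledoux--Talagrand monograph. So you have supplied a full, self-contained proof where the paper only gives a reference. The approach you take \emph{is} the standard one underlying the cited result, so there is no methodological divergence to discuss; you have just filled in what the paper left as a citation.
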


\subsection{Proof of~\Cref{th:RR_with_Prior_ERM}}

With the preliminaries ready, we can now prove~\Cref{th:RR_with_Prior_ERM}.

\begin{proof}[Proof of~\Cref{th:RR_with_Prior_ERM}]
With probability $1 - \beta$, the following holds:
\begin{align}
\sup_{f \in \MF} |\tML(f, S) - \ML(f, P)| 
&\leq \sup_{f \in \MF}\left(|\ML(f; P) - \tML(f, \tP)| + |\tML(f, S) - \tML(f, \tP)|\right) \nonumber \\
\text{(Lemma~\ref{le:bias_bounding})} &\le \Pr_{(x,y) \sim P}[y \notin Y^x_k] +  \sup_{f \in \MF}|\tML(f, S) - \tML(f, \tP)| \nonumber \\
&\leq \Pr_{(x,y) \sim P}[y \notin Y^x_k] + 2 \cdot \mathcal{R}_{n, D}(\tilde{\ell} \circ \mathcal{F}) + \sqrt{\frac{\ln(2/\beta)}{n}}\label{ineq:rademacher},
\end{align}
where inequality~(\ref{ineq:rademacher}) follows from Lemma~\ref{le:rademacher_generalization} with
\begin{equation*}
    \tilde{\ell} \circ \mathcal{F} := \bigg\{g: \mathcal{X} \times \mathcal{Y} \to [0,1], ~ g(x,y) = \tilde{\ell}(f(x), (x, y))| ~ f \in \mathcal{F} \bigg\}.
\end{equation*}

Finally, we have that:
\begin{align}
    \mathcal{R}_{n, D}(\tilde{\ell} \circ \mathcal{F}) &= \E_{S} \bigg[\E_{\sigma_1, \dots, \sigma_n}\bigg[ \sup_{f \in \mathcal{F}} \bigg( \frac{1}{n} \sum_{i=1}^n \sigma_i \tilde{\ell}(f(x_i), (x_i, y_i)) \bigg)\bigg]\bigg]\nonumber\\ 
    &\le \tilde{L} \cdot \E_{S} [\hat{\mathcal{R}}_{n, S_{\mathcal{X}}}(\mathcal{F})]\label{ineq:lipschitz_ub}\\ 
    &= \tilde{L} \cdot \mathcal{R}_{n, D_{\mathcal{X}}}( \mathcal{F})\label{ineq:tilde_L},
\end{align}
where~(\ref{ineq:lipschitz_ub}) follows from Lemma~\ref{le:lip_comp} (with $\Phi_i$ set to the function $\tilde{\ell}(\cdot, (x_i, y_i))$ for all $i \in \{1,\dots,n\}$, and with $S_{\mathcal{X}}$ denoting the projection of $S$ on $\mathcal{X}$), and from Lemma~\ref{le:lip_tilde} with
\begin{equation}\label{eq:L_tilde_def}
    \tilde{L} = L \cdot \frac{1 + k  \cdot p_{k, \epsilon}}{1- k  \cdot p_{k, \epsilon}}.
\end{equation}
Inequality~(\ref{ineq:fixed_k}) now follows by combining~(\ref{ineq:rademacher}),~(\ref{ineq:tilde_L}), and~(\ref{eq:L_tilde_def}).
\end{proof}

\end{document}